\documentclass{article}
\PassOptionsToPackage{numbers,compress,sort}{natbib}
\usepackage[preprint]{neurips_2026}
\usepackage[utf8]{inputenc}
\usepackage[T1]{fontenc}
\usepackage{hyperref}
\usepackage{url}
\usepackage{booktabs}
\usepackage{amsfonts}
\usepackage{amsmath}
\usepackage{amssymb}
\usepackage{nicefrac}
\usepackage{microtype}
\usepackage{xcolor}
\usepackage{graphicx}
\usepackage{multirow}
\usepackage{colortbl}
\usepackage{algorithm}
\usepackage{algorithmic}
\usepackage{enumitem}
\usepackage{pgfplots}
\usepackage{tikz}
\usetikzlibrary{positioning, arrows.meta, shapes.geometric, fit, calc}
\pgfdeclarelayer{background}
\pgfsetlayers{background,main}
\usepackage{pifont}
\usepackage{amsthm}

\newtheorem{proposition}{Proposition}
\newtheorem{observation}{Observation}

\pgfplotsset{compat=1.18}

\title{Raw-Routed Mixture of Adapters: A Causal Intervention for Routing Collapse in Time Series Foundation Models}

\author{
  Hung Phan
  \And
  Thuy T. Nguyen
  \And
  Minh Ngoc Dinh
  \And
  Nhat-Quang Tran
}

\begin{document}

\maketitle

\begin{abstract}
Time series foundation models (TSFMs) commonly adapt to new data by attaching a single trainable head to a frozen backbone, a one-size-fits-all setup that underfits heterogeneous regimes. Replacing the head with a mixture of experts is the standard upgrade, but on instance-normalized backbones (the dominant TSFM design class) it fails: routing entropy collapses to zero and one expert absorbs every input, a failure we call \emph{normalization-induced routing collapse}. Standard MoE rescue mechanisms do not repair it, because the cause is in the router's input, not its optimization. Pre-encoder normalization strips the statistics a router would need to tell regimes apart. A mutual-information decomposition makes this precise and yields a signal-ratio that, computed before training, predicts dataset vulnerability (Spearman $\rho{=}{-}0.88$). Eight causal controls, including a vision-modality replication, isolate instance normalization as the cause. The prescription is a minimal causal intervention: \emph{Raw-Routed Mixture of Adapters} (RR-MoA), which routes on the raw, pre-normalization input. Under a strictly frozen backbone, RR-MoA wins $54/54$ comparisons against the strongest fixed adapter and significantly outperforms LoRA, TRACE, AdaMix, and full fine-tuning. The effect generalizes across six backbones and an imputation task. Frozen RR-MoA also beats full fine-tuning by $12$--$79\%$ (the \emph{Frozen Paradox}); two architecturally distinct variants confirm the principle generalizes beyond this specific router. Code is provided in the supplementary material.
\end{abstract}

\section{Introduction}
\label{sec:intro}

Time series foundation models (TSFMs)~\citep{goswami2024moment, das2024timesfm, ansari2024chronos, liu2025timerxl, liu2024timer, woo2024moirai, lee2024units, rasul2024lagllama, ekambaram2024ttm, cohen2025toto, liu2025sundial, jin2024timellm, zhou2023onefitsall, liu2024autotimes, pan2024s2ipllm, liu2024unitime, shi2024timemoe, liu2026timers1} are pre-trained backbones for general-purpose time-series analysis that transfer to unseen datasets with little target-specific training. Pretrained on heterogeneous time series spanning domains like energy, weather, traffic, and finance, they aim to learn regularities that generalize across tasks. The dominant deployment recipe holds the backbone fixed and trains only a small \emph{adapter head}, a per-task module mapping hidden states to target-specific predictions. Because the backbone is frozen, this head carries all dataset-specific information, making its design an important factor in transfer quality. In a major class of TSFMs (MOMENT~\citep{goswami2024moment}, TTM~\citep{ekambaram2024ttm}, and others), this head is uniform: a single pooling-and-projection rule applied to every input window, regardless of regime.

Probing analyses of TSFMs~\citep{wilinski2025tsfm_representations, pandey2025tsfm_semantics} indicate this is suboptimal: distinct temporal concepts (trends, dispersion, change-time) localize in different parts of the hidden state. A calm baseline and a volatile spike differ in these concepts, so they route their predictive signal through different subspaces, and no fixed pooling rule reads both. The remedy is to replace the single rule with a dispatcher over several specialized readers~\citep{wang2022adamix}, selecting for each window the expert whose pooling best matches its regime: an AdaMix-style mixture-of-experts (MoE) head.

On every TSFM whose encoder begins with an instance normalizer, this head degenerates as soon as a single transformer block is unfrozen for joint training (Figure~\ref{fig:problem_overview}b): gating mass contracts to one-hot in tens of steps (Table~\ref{tab:adamix}), and the favored expert's gradients reshape the backbone to serve it alone (Figure~\ref{fig:trajectory}). We name the phenomenon \emph{normalization-induced routing collapse} and trace it upstream of the optimizer: RevIN~\citep{kim2021revin} subtracts each window's mean and variance before encoding, leaving the statistics that distinguish experts structurally absent from the router's input (Observation~\ref{obs:routing_loss}). Eleven optimization-side rescues (load balancing, Z-loss, ReMoE, expert choice, top-$k$ sharpening, and seven more; $720$ runs) close only $10.9\%$ of the gap that an input-side fix closes ($2.7\times$ shortfall, Table~\ref{tab:rescue}); Moirai, whose I/O scaling is not learnable-affine, does not collapse (\S\ref{sec:cross_backbone}), narrowing the cause to learnable-affine encoder normalization.

We apply a minimal causal intervention: route on $\mathbf{X}_\text{raw}$, the input \emph{before} normalization, with experts, optimizer, and a strictly frozen backbone unchanged. We call the head \emph{Raw-Routed Mixture of Adapters} (RR-MoA). The same diagnosis predicts a \emph{Frozen Paradox}: backbone updates only serve the live expert, so unfreezing should hurt (Proposition~\ref{prop:frozen}); empirically frozen wins $4/6$ datasets and unfreezing never recovers more than $13\%$.

Our main contributions are:
\begin{itemize}[leftmargin=1.2em, itemsep=0.15em, topsep=0.2em]
\item We formulate routing collapse on instance-normalized TSFMs as a router-\emph{input} failure mode, distinct from and upstream of the optimization-side failure modes prior MoE work targets; the rescues above leave $89\%$ of the gap on the table that an input-side fix closes.
\item We derive a mutual-information decomposition (Observation~\ref{obs:routing_loss}) that makes precise what is removed before the router sees the input, and from it we obtain $R(\mathcal{D})$, a training-free signal-ratio computable from raw dataset statistics. $R(\mathcal{D})$ anti-correlates with collapse risk at Spearman $\rho{=}{-}0.88$ ($p{<}0.002$) across six backbones (three normalization families) and a vision modality, and correctly abstains on Traffic ($R{=}0.14$).
\item We propose \textbf{RR-MoA}, which changes only the router's input and wins $54/54$ cells on the primary benchmark ($26$--$79\%$ MSE), beating LoRA, TRACE, AdaMix, and full fine-tuning at $p{<}0.001$, while resisting naive $(\mu,\sigma)$ re-injection ($+42$--$239\%$ MSE; $11/18$ cells re-collapse). Two distinct instantiations confirm the principle is broader than this router: \textbf{SR-MoA} replaces the external router with per-expert sigmoids ($+13$--$42\%$ over RR-MoA on six datasets), and \textbf{Residual-IA\textsuperscript{+}} carries it into each expert's body, closing the DLinear gap across six backbones $\times$ four horizons.
\end{itemize}

\begin{figure}[t]
\centering
\resizebox{0.55\columnwidth}{!}{%
\begin{tikzpicture}[
    >=Stealth,
    every node/.style={font=\scriptsize},
    box/.style={rectangle, draw=black!70, rounded corners=1.5pt, fill=white,
        minimum height=0.7cm, align=center, line width=0.6pt},
    bbbox/.style={box, draw=black!85, fill=blue!3, minimum width=2.4cm, minimum height=1.0cm,
        font=\scriptsize\bfseries},
    smallbox/.style={box, minimum width=0.85cm, minimum height=0.32cm, font=\tiny, inner xsep=3pt, inner ysep=1pt},
    waveframe/.style={rectangle, draw=black!40, fill=gray!3, line width=0.4pt,
        minimum width=1.45cm, minimum height=0.55cm, inner sep=1pt},
    headbox/.style={box, draw=black!85, fill=orange!8, minimum width=1.7cm, minimum height=0.85cm,
        font=\scriptsize\bfseries},
    routerbox/.style={box, draw=black!85, fill=orange!8, minimum width=0.95cm, minimum height=0.7cm,
        font=\scriptsize\bfseries},
    expertactive/.style={box, draw=orange!80, fill=orange!12, minimum width=1.0cm,
        minimum height=0.36cm, font=\tiny\bfseries, line width=0.9pt},
    expertinactive/.style={box, draw=black!30, fill=gray!4, minimum width=1.0cm,
        minimum height=0.36cm, font=\tiny, text=black!50},
    arr/.style={->, line width=0.55pt, color=black!75},
    activearr/.style={->, line width=0.85pt, color=orange!85},
    deadarr/.style={->, line width=0.4pt, color=black!25, densely dashed},
    banner/.style={font=\footnotesize\bfseries, align=center},
    sublabel/.style={font=\tiny, color=black!60, align=center},
]

\node[banner] (titleA) at (7.3, 3.65)
    {TSFM adapters use the same head for every window};

\node[waveframe] (w1a) at (0.9, 3.30) {};
\draw[blue!75, line width=0.5pt] plot[domain=-0.65:0.65, samples=30, smooth]
    ({\x+0.9},{3.30+0.16*sin(deg(2.4*\x))});
\node[waveframe] (w2a) at (0.9, 2.78) {};
\draw[red!75, line width=0.4pt] plot[domain=-0.65:0.65, samples=70]
    ({\x+0.9},{2.78+0.09*sin(deg(35*\x))+0.05*sin(deg(11*\x))});
\node[waveframe] (w3a) at (0.9, 2.26) {};
\draw[green!55!black, line width=0.5pt]
    (0.25,2.18) -- (0.55,2.18) -- (0.55,2.40) -- (0.85,2.40) --
    (0.85,2.18) -- (1.15,2.18) -- (1.15,2.40) -- (1.45,2.40) -- (1.55,2.40);
\node[sublabel] at (0.9, 1.86) {Input windows $\mathbf{X}$ (distinct per-window stats)};

\node[bbbox] (bbA) at (4.4, 2.78)
    {Frozen TSFM Backbone\\[-1pt]\scriptsize\mdseries(MOMENT, Moirai)};
\node[font=\tiny\bfseries, text=red!70, fill=red!7, rounded corners=1pt, inner sep=1pt]
    at ([yshift=0.02cm]bbA.north) {\textsc{frozen}};
\node[box, draw=black!50, fill=blue!4, minimum width=2.3cm, minimum height=0.32cm,
    font=\tiny] (revA) at (4.4, 2.18) {Instance Normalization (RevIN)};

\node[smallbox] (hA1) at (7.05, 3.14) {$\mathbf{H}_1$};
\node[smallbox] (hA2) at (7.05, 2.78) {$\mathbf{H}_2$};
\node[smallbox] (hA3) at (7.05, 2.42) {$\mathbf{H}_3$};

\node[headbox] (headA) at (9.4, 2.78) {Single static\\[-1pt]adapter head};

\node[font=\small] (yA) at (11.6, 2.78) {$\hat{\mathbf{y}}$};

\foreach \w in {w1a, w2a, w3a} {\draw[arr] (\w.east) -- (bbA.west);}
\draw[arr] (bbA.east) -- (hA1.west);
\draw[arr] (bbA.east) -- (hA2.west);
\draw[arr] (bbA.east) -- (hA3.west);
\foreach \h in {hA1, hA2, hA3} {\draw[arr] (\h.east) -- (headA.west);}
\draw[arr] (headA.east) -- (yA.west);

\draw[black!30, dashed, line width=0.4pt] (-0.2,1.55) -- (12.0,1.55);

\node[banner] (titleB) at (7.3, 1.30)
    {A standard MoE adapter collapses to a single expert on instance-normalized TSFMs};

\node[waveframe] (w1b) at (0.9, 0.90) {};
\draw[blue!75, line width=0.5pt] plot[domain=-0.65:0.65, samples=30, smooth]
    ({\x+0.9},{0.90+0.16*sin(deg(2.4*\x))});
\node[waveframe] (w2b) at (0.9, 0.38) {};
\draw[red!75, line width=0.4pt] plot[domain=-0.65:0.65, samples=70]
    ({\x+0.9},{0.38+0.09*sin(deg(35*\x))+0.05*sin(deg(11*\x))});
\node[waveframe] (w3b) at (0.9, -0.14) {};
\draw[green!55!black, line width=0.5pt]
    (0.25,-0.22) -- (0.55,-0.22) -- (0.55,0.00) -- (0.85,0.00) --
    (0.85,-0.22) -- (1.15,-0.22) -- (1.15,0.00) -- (1.45,0.00) -- (1.55,0.00);
\node[sublabel] at (0.9, -0.50) {Input windows $\mathbf{X}$};

\node[bbbox] (bbB) at (4.4, 0.38)
    {Frozen TSFM Backbone\\[-1pt]\scriptsize\mdseries(MOMENT, Moirai)};
\node[font=\tiny\bfseries, text=red!70, fill=red!7, rounded corners=1pt, inner sep=1pt]
    at ([yshift=0.02cm]bbB.north) {\textsc{frozen}};
\node[box, draw=black!50, fill=blue!4, minimum width=2.3cm, minimum height=0.32cm,
    font=\tiny] (revB) at (4.4, -0.22) {Instance Normalization (RevIN)};

\node[smallbox] (hB1) at (6.7, 0.74) {$\mathbf{H}_1$};
\node[smallbox] (hB2) at (6.7, 0.38) {$\mathbf{H}_2$};
\node[smallbox] (hB3) at (6.7, 0.02) {$\mathbf{H}_3$};

\node[routerbox] (routerB) at (8.1, 0.38) {Router};

\node[expertactive] (e1) at (9.7, 1.00) {E$_1$};
\node[expertinactive] (e2) at (9.7, 0.62) {E$_2$};
\node[expertinactive] (e3) at (9.7, 0.24) {E$_3$};
\node[expertinactive] (e4) at (9.7, -0.14) {E$_4$};
\node[expertinactive] (e5) at (9.7, -0.52) {E$_5$};
\node[font=\tiny, color=black!55] at (10.55, 0.62) {unused};
\node[font=\tiny, color=black!55] at (10.55, 0.24) {unused};
\node[font=\tiny, color=black!55] at (10.55, -0.14) {unused};
\node[font=\tiny, color=black!55] at (10.55, -0.52) {unused};

\node[font=\small] (yB) at (12.05, 0.38) {$\hat{\mathbf{y}}$};

\foreach \w in {w1b, w2b, w3b} {\draw[arr] (\w.east) -- (bbB.west);}
\draw[arr] (bbB.east) -- (hB1.west);
\draw[arr] (bbB.east) -- (hB2.west);
\draw[arr] (bbB.east) -- (hB3.west);
\foreach \h in {hB1, hB2, hB3} {\draw[arr] (\h.east) -- (routerB.west);}
\draw[activearr] (routerB.east) -- (e1.west);
\draw[deadarr] (routerB.east) -- (e2.west);
\draw[deadarr] (routerB.east) -- (e3.west);
\draw[deadarr] (routerB.east) -- (e4.west);
\draw[deadarr] (routerB.east) -- (e5.west);
\draw[activearr] (e1.east) to[out=0, in=140] (yB.north);

\node[font=\small\bfseries, text=black] at (-0.35, 3.65) {(a)};
\node[font=\small\bfseries, text=black] at (-0.35, 1.30) {(b)};

\end{tikzpicture}%
}%
\caption{\textbf{Problem overview.} \textbf{(a)}~TSFM adapters use the same head for every window. \textbf{(b)}~The standard mixture-of-experts extension fails on instance-normalized TSFMs under unfreezing: entropy collapses ($0.000{\pm}0.000$) and the remaining experts receive no gradient. Same backbone, different heads.}
\label{fig:problem_overview}
\vspace{-1.0em}
\end{figure}

\section{Related Work}
\label{sec:related_work}

RR-MoA sits at the intersection of TSFM adaptation~\citep{wen2023tssurvey} and MoE design~\citep{cai2024moesurvey}. \emph{TSFM adaptation:} compact specialists rival heavyweight backbones on LTSF~\citep{wu2025srsnet, wang2025timemixerpp, xu2024fits, chen2023tsmixer, fu2025selective}; LLM-reprogrammed forecasters inherit upstream normalization~\citep{jin2024timellm, zhou2023onefitsall, liu2024autotimes}; the adapter literature varies finetuning strategy or backbone selection~\citep{houlsby2019adapters, qiao2025msft, zhao2025prune, faw2025icf, ning2025tsrag, gupta2024beyondlora, zhang2025template, woo2024gifteval}, and AdaPTS~\citep{benechehab2025adapts} treats adapter topology as a per-task hyperparameter; to our knowledge no prior TSFM work uses a per-window mixture over topologically distinct adapters. \emph{MoE design space:} scaling axes~\citep{lepikhin2021gshard, riquelme2021vmoe, komatsuzaki2023sparseupcycling, raposo2024mod, jiang2024mixtral, zhu2024llamamoe, dai2024deepseekmoe} and in-backbone TS-MoE~\citep{shi2024timemoe, he2025sempo} act on token routing within the encoder; SR-MoA's per-expert sigmoid generalizes MMoE's multi-gate pattern~\citep{ma2018mmoe} from multi-task to single-task specialization, and \citet{wang2026myth} argues MoE specialization reflects hidden-state geometry, which we extend upstream by characterizing what shapes that geometry on TSFMs. The instance-normalization origin~\citep{ulyanov2016instancenorm} treats stripping as benign preprocessing; we identify a downstream MoE-adapter consequence not previously characterized.

\textbf{MoE routing collapse and rescue.} Existing diagnoses cluster into three threads. \emph{Optimization-side}: load-balancing~\citep{fedus2022switch}, z-loss~\citep{zoph2022stmoe}, and AdaMix's hidden-state router~\citep{wang2022adamix} target training dynamics; \citet{guo2025expertspec} argue load-balance loss itself causes expert overlap and propose orthogonality regularizers. \emph{Representation-geometry}: \citet{chi2022representation} attribute collapse to token clustering and propose hyperspherical scores; \citet{wu2024mhmoe} split tokens into sub-tokens to raise expert activation. \emph{Architectural / paradigm}: Expert-Choice~\citep{zhou2022expertchoice} inverts token--expert assignment, Soft MoE~\citep{puigcerver2024softmoe} replaces hard top-$k$ with a soft mixture, ReMoE~\citep{wang2025remoe} replaces top-$k$ with ReLU gating, \citet{panda2024densebackprop} restore a dense gradient under sparse forward passes, and \citet{hua2025inputaware} decouple the router from the task objective. Our setting differs from all three: the routing signal is structurally absent from the router's input \emph{by design of an upstream architectural normalizer (RevIN)}, before training dynamics, geometry, or routing-paradigm choices can act. Sweeping five rescue families across twelve configurations on MOMENT$+$RevIN, none recover MSE to within $2.7{\times}$ of RR-MoA (Table~\ref{tab:rescue}). Concurrent work eliminates the external router (AoE~\citep{lv2025aoe}, Routing-Free MoE~\citep{liu2026routingfree}) or replaces softmax+load-balance with eigenbasis routing (ERMoE~\citep{cheng2025ermoe}); in our setting (\S\ref{sec:main_results}), self-gated experts on raw input match RR-MoA, while the same architecture on hidden states collapses.

\textbf{Normalization in Time Series.} RevIN~\citep{kim2021revin} adapts instance normalization to forecasting distribution shift; DishTS~\citep{fan2023dishts}, SAN~\citep{liu2023san}, and Non-stationary Transformers~\citep{liu2022nonstationary} partially recover stripped statistics, FAN~\citep{ye2024fan} extends beyond $(\mu,\sigma)$ to Fourier components, and DDN~\citep{dai2024ddn} normalizes jointly in time and frequency; \citet{berthelier2026revin} and \citet{zou2025ibnorm} concurrently flag RevIN redundancy and the limits of variance-centric normalization. We identify a specific MoE consequence not previously characterized: when stripped statistics carry the \emph{routing} signal, instance normalization collapses the router under unfreezing, distinct from the optimization-induced collapse standard rescues target.

\section{Raw-Routed Mixture of Adapters (RR-MoA)}
\label{sec:method}

RR-MoA is a per-window mixture in which the router reads the raw, pre-normalization input while the experts consume the frozen TSFM's hidden states. We derive it from a routing-information loss (Observation~\ref{obs:routing_loss}, below) that locates the bottleneck in the router's \emph{input}, not its optimizer; task setup, mechanism, and architecture follow in that order.

\textbf{Task setup.} Given a window $\mathbf{x}\in\mathbb{R}^{T\times c}$ ($T{=}512$, $c$ channels), forecasting predicts $\mathbf{y}\in\mathbb{R}^{h\times c}$ at horizons $h\in\{96,192,336,720\}$ under MSE loss (imputation uses the same formalism with $\mathbf{y}$ as the reconstruction target). A pretrained TSFM acts as a feature extractor $f_\theta:\mathbb{R}^{T\times c}\to\mathbb{R}^{P\times d}$ ($P$ patches of dim $d$) with $\theta$ \emph{frozen} ($\nabla_\theta\!\equiv\!0$); we train only an adapter $g_\phi:\mathbb{R}^{P\times d}\to\mathbb{R}^{h\times c}$ with a small parameter budget ($|\phi|\leq 500$K), so that many task-specific adapters can be hot-swapped onto a single shared backbone (App.~\ref{app:deployment}). A \emph{per-window mixture of adapters} replaces $g_\phi$ with $K$ experts $\{\mathrm{Expert}_j\}_{j=1}^K$ plus a router $G_\psi$ that returns a sparse weight $\widetilde{w}\in\Delta^{K-1}$ supported on a Top-$k$ subset ($k\leq K$), giving $\hat{\mathbf{y}}=\sum_{j} \widetilde{w}_j\,\mathrm{Expert}_j(f_\theta(\mathbf{x}))$ trained by $\min_{\phi,\psi}\,\mathbb{E}\,\|\hat{\mathbf{y}}-\mathbf{y}\|_2^2$.

\textbf{Problem statement.} The standard MoE recipe~\citep{shazeer2017outrageously, fedus2022switch} lets the router $G_\psi$ read the backbone's hidden states $f_\theta(\mathbf{x})$, and this is the choice taken by the closest TSFM-adapter precedent, AdaMix~\citep{wang2022adamix}. We show that on TSFMs equipped with instance normalization (RevIN~\citep{kim2021revin}, the de facto standard for distribution-shift handling in MOMENT~\citep{goswami2024moment}, PatchTST~\citep{nie2023patchtst}, and TimesFM~\citep{das2024timesfm}) this choice has a structural failure mode not previously characterized: the router's input is missing, by construction of an upstream architectural normalizer, the per-window first- and second-order statistics that distinguish regimes (formalized in Observation~\ref{obs:routing_loss}), so $G_\psi$ collapses to a single expert and the remaining $K{-}1$ experts receive no gradient. We name this failure \emph{normalization-induced routing collapse} and state it as the central problem of this work:
\begin{quote}
\itshape Given a frozen instance-normalized TSFM, find the routing input $u(\mathbf{x})$ such that the per-window mixture $\sum_j G_\psi(u(\mathbf{x}))_j\,\mathrm{Expert}_j(f_\theta(\mathbf{x}))$ trains without entropy collapse and improves over a single static head.
\end{quote}
This collapse is upstream of the optimization-induced collapse that existing rescues target -- load balancing~\citep{shazeer2017outrageously, fedus2022switch}, z-loss~\citep{zoph2022stmoe}, ReLU routing~\citep{wang2025remoe}, and expert-choice~\citep{zhou2022expertchoice} -- and is not addressed by prior MoE-adapter~\citep{wang2022adamix, wu2024mole}, MoE routing-collapse~\citep{chi2022representation, wu2024mhmoe, puigcerver2024softmoe, panda2024densebackprop, hua2025inputaware, guo2025expertspec}, or time-series-normalization~\citep{liu2022nonstationary, fan2023dishts, liu2023san, ye2024fan, dai2024ddn, berthelier2026revin, zou2025ibnorm} work. We next formalize the loss (Observation~\ref{obs:routing_loss}) and define a training-free predictor $R(\mathcal{D})$ for it; the rest of the section then specifies RR-MoA, the answer to the problem with $u(\mathbf{x})\!=\!\mathbf{x}$, and \S\ref{sec:main_results} verifies the answer with eight causal controls and a $720$-run rescue sweep (Table~\ref{tab:rescue}).

\textbf{Mechanism.}\label{sec:mechanism}\label{sec:diagnosis}
We formalize the failure as a routing-information loss across the normalization step. Each window decomposes into mean $M_i$, scale $\Sigma_i$, and shape $\mathbf{S}_i = (\mathbf{x}_i - M_i)/\Sigma_i$; an instance normalizer keeps $\mathbf{S}_i$ and discards $(M_i,\Sigma_i)$. The chain rule then quantifies the routing signal in the discarded part.

\begin{observation}[Routing Information Loss Under Instance Normalization]\label{obs:routing_loss}
Let $X$ be a random input window, $E \in \{1,\ldots,K\}$ the expert assignment produced by any deterministic router, $S = (X - M)/\Sigma$ the RevIN-normalized shape. \emph{(i) Exact decomposition:} $I(X;E) - I(S;E) = I(M,\Sigma;E\mid S) \geq 0$. \emph{(ii) Quantitative bound (no independence assumption):} $I(X;E) - I(S;E) \geq I(M,\Sigma;E) - I(M,\Sigma;S)$, with $\varepsilon := I(M,\Sigma;S) \geq 0$ capturing residual leakage of stripped statistics through the shape; equality iff $(M,\Sigma) \perp\!\!\!\perp S \mid E$.
\end{observation}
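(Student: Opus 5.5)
The plan is to reduce both parts to the chain rule for mutual information. The key structural fact is that the map $X \mapsto (M,\Sigma,S)$ is a measurable bijection onto its image, with inverse $X = M + \Sigma S$. This requires $\Sigma>0$, which RevIN guarantees through its additive $\epsilon$ in the denominator. Since $M$ and $\Sigma$ are deterministic functions of $X$, and $X$ is recovered from the triple, mutual information is invariant under this reparametrization. Hence
\[
I(X;E)=I(M,\Sigma,S;E).
\]
I would state this invariance for the general (measure-theoretic, Gelfand--Yaglom) definition of $I$. The reason is that $X$ is continuous while $E$ is discrete, and mutual information is preserved under injective measurable maps via the data-processing inequality applied in both directions.

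For \emph{(i)}, I would apply the chain rule to the triple, splitting off $S$ first:
\[
I(M,\Sigma,S;E)=I(S;E)+I(M,\Sigma;E\mid S).
\]
Combining this with the display above gives the exact identity. Nonnegativity follows because conditional mutual information is nonnegative. Determinism of the router is not actually needed here; it only ensures that $E$ is a well-defined function of $X$.

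For \emph{(ii)}, I would expand $I(M,\Sigma;E,S)$ by the chain rule in two orders:
\[
I(M,\Sigma;S)+I(M,\Sigma;E\mid S)=I(M,\Sigma;E)+I(M,\Sigma;S\mid E).
\]
Solving for $I(M,\Sigma;E\mid S)$ gives
\[
I(M,\Sigma;E\mid S)=I(M,\Sigma;E)-\varepsilon+I(M,\Sigma;S\mid E).
\]
Dropping the nonnegative last term yields the bound, combined with (i). Equality holds iff $I(M,\Sigma;S\mid E)=0$, i.e.\ iff $(M,\Sigma)\perp\!\!\!\perp S\mid E$. No independence assumption between the statistics and the shape is used; any dependence is absorbed into $\varepsilon$.

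The main obstacle I expect is technical rather than conceptual. The argument needs the chain rule and the invariance step to be valid when all quantities except $E$ are continuous, possibly with infinite differential entropies. I would avoid entropies entirely and work only with mutual informations, which are well defined as suprema over finite partitions. Because $E$ takes finitely many values, $I(\cdot\,;E)\le\log K$ is finite. This finiteness makes the subtraction in (ii) legitimate, since $I(M,\Sigma;E)\le \log K$. However, $\varepsilon=I(M,\Sigma;S)$ could be infinite, in which case the bound is vacuous but still true. I would add a brief remark to that effect.
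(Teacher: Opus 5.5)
Your proposal is correct and follows the paper's proof step for step. Like the paper, you use the bijection $X \leftrightarrow (M,\Sigma,S)$ to get $I(X;E)=I(M,\Sigma,S;E)$, split off $S$ by the chain rule for part~(i), and for part~(ii) expand $I(M,\Sigma;E,S)$ in two orders and then drop $I(M,\Sigma;S\mid E)\ge 0$. Your additional care is sound: working only with mutual informations, using $I(\cdot\,;E)\le\log K$ for finiteness, and noting the vacuous case $\varepsilon=\infty$ (where equality also fails and $I(M,\Sigma;S\mid E)=\infty$, so the iff survives) all refine points the paper handles only informally by excluding $\Sigma=0$ windows.
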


When $E=f(M,\Sigma)$ and the stripped statistics are independent of the shape ($(M,\Sigma)\perp\!\!\!\perp S$; App.~\ref{app:proofs}), $I(X;E){-}I(S;E){=}H(E)$: instance normalization destroys the entire routing distribution. The signal ratio $R(\mathcal{D}) = \bigl(\sum_c\mathrm{Var}(M_c){+}\sum_c\mathrm{Var}(\Sigma_c)\bigr)/\overline{\mathrm{Var}(S)}$ (channel-summed numerator over window-and-channel-averaged shape variance) is a tractable, training-free monotone proxy: high when the stripped statistics carry routing information, low otherwise. Empirical tightness of the bound, the predictor's correlation with raw-routing benefit, and the boundary cases that falsify a generic-SNR alternative appear in \S\ref{sec:main_results}.

\begin{figure}[t!]
\centering
\resizebox{0.55\columnwidth}{!}{%
\begin{tikzpicture}[
    >=Stealth,
    box/.style={rectangle, draw=black!70, rounded corners=2pt, fill=white,
        minimum height=0.75cm, font=\small, align=center, line width=0.7pt},
    frozenbox/.style={box, draw=blue!60, fill=blue!4, minimum width=2.6cm, minimum height=1.4cm},
    gatebox/.style={box, draw=orange!70, fill=orange!4, minimum width=2.0cm, minimum height=0.9cm},
    hiddenbox/.style={box, draw=blue!50, fill=blue!3, minimum width=1.4cm},
    topkbox/.style={box, draw=orange!60, fill=orange!3, minimum width=1.3cm},
    expertactive/.style={box, draw=orange!75!black, fill=orange!18, line width=1.0pt, minimum width=2.0cm, minimum height=0.65cm, font=\small\bfseries, text=black},
    expertinactive/.style={box, draw=gray!45, fill=gray!8, minimum width=2.0cm, minimum height=0.65cm, font=\small, text=black},
    sumnode/.style={circle, draw=black!70, fill=white, minimum size=0.65cm, font=\normalsize, line width=0.7pt},
    outputbox/.style={box, draw=teal!60, fill=teal!4, minimum width=1.2cm},
    bluearr/.style={->, line width=0.8pt, color=blue!55},
    orangearr/.style={->, line width=0.8pt, color=orange!70},
    dashedarr/.style={->, line width=0.8pt, color=orange!70, densely dashed},
    grayarr/.style={->, line width=0.7pt, color=gray!50},
    outarr/.style={->, line width=0.8pt, color=teal!60},
]

\node[box, minimum width=1.4cm, minimum height=1.0cm] (input) at (0, 0) {
    $\mathbf{X}_\mathrm{raw}$\\[-2pt]
    {\scriptsize raw input}\\[-2pt]
    {\scriptsize 512 steps}
};

\node[frozenbox] (tsfm) at (3.5, 1.8) {
    \textbf{Frozen TSFM}\\[-1pt]
    \textbf{Backbone}\\[-1pt]
    {\scriptsize\color{red!60} no gradients}
};
\node[font=\scriptsize\bfseries, text=red!65, fill=red!8, rounded corners=2pt, inner sep=1.5pt]
    at ([xshift=-0.6cm, yshift=0.18cm]tsfm.north east) {\textsc{frozen}};

\node[hiddenbox] (hidden) at (6.8, 1.8) {
    $\mathbf{H}$\\[-2pt]
    {\scriptsize hidden}\\[-2pt]
    {\scriptsize states}
};

\node[gatebox] (gate) at (3.5, -1.8) {
    \textbf{Conv1d Gate}\\[-2pt]
    {\scriptsize 853 params}
};

\node[topkbox] (topk) at (6.8, -1.8) {
    \textbf{Top-2}\\[-2pt]
    {\scriptsize sparse}
};

\node[expertinactive] (e1) at (9.8, 3.2) {Mean Pool};
\node[expertactive]   (e2) at (9.8, 2.0) {\textbf{Last Token}};
\node[expertinactive] (e3) at (9.8, 0.8) {Max Pool};
\node[expertactive]   (e4) at (9.8, -0.4) {\textbf{Attention}};
\node[expertinactive] (e5) at (9.8, -1.6) {Conv1d};

\draw[gray!60, line width=0.6pt, line cap=round]
    (10.95, 3.55) -- (11.05, 3.55) -- (11.05, -1.95) -- (10.95, -1.95);
\node[font=\scriptsize, text=black, anchor=north] at (10.5, -2.15) {$K{=}5$ experts};

\node[sumnode] (sum) at (12.5, 0.8) {$\Sigma$};
\node[font=\tiny, text=black, fill=white, inner sep=1pt] at (12.5, 0.2) {weighted};
\node[outputbox] (output) at (14.3, 0.8) {
    $\hat{\mathbf{y}}$\\[-2pt]
    {\scriptsize forecast}
};

\node[font=\scriptsize\bfseries, text=blue!55, anchor=west] at (0.30, 3.10) {\textsc{Path A: Embeddings}};
\node[font=\scriptsize\bfseries, text=orange!70, anchor=west] at (0.30, -3.10) {\textsc{Path B: Routing}};

\begin{pgfonlayer}{background}
\draw[bluearr]   (input.north) |- (tsfm.west);
\draw[orangearr] (input.south) |- (gate.west);

\draw[bluearr] (tsfm.east) -- (hidden.west);
\draw[bluearr] (hidden.east) -- ++(0.40,0) |- (e1.west);
\draw[bluearr] (hidden.east) -- ++(0.55,0) |- (e2.west);
\draw[bluearr] (hidden.east) -- ++(0.70,0) |- (e3.west);
\draw[bluearr] (hidden.east) -- ++(0.85,0) |- (e4.west);
\draw[bluearr] (hidden.east) -- ++(1.00,0) |- (e5.west);

\draw[orangearr] (gate.east) -- (topk.west);
\draw[dashedarr] (topk.east) -- ++(0.4,0) |- ([yshift=-0.18cm]e2.west);
\draw[dashedarr] (topk.east) -- ++(0.4,0) |- ([yshift=-0.18cm]e4.west);

\draw[bluearr] (e2.east) -| (sum.north);
\draw[bluearr] (e4.east) -| (sum.south);
\draw[outarr]  (sum.east) -- (output.west);
\end{pgfonlayer}

\node[font=\scriptsize, text=orange!70, anchor=north east] at (8.85, 1.78) {$w_2$};
\node[font=\scriptsize, text=orange!70, anchor=north east] at (8.85, -0.62) {$w_4$};

\end{tikzpicture}%
}%
\caption{\textbf{RR-MoA architecture.} \textit{Path~A} (blue): frozen TSFM produces $\mathbf{H}$ for all experts. \textit{Path~B} (orange): Conv1d gate ($853$ params) reads $\mathbf{X}_\mathrm{raw}$ and selects Top-2 experts; inactive experts (gray) receive no gradient. Standard hidden-state MoE collapses (Table~\ref{tab:adamix}); RR-MoA wins \textbf{54/54} (Table~\ref{tab:rrmoa}).}
\label{fig:framework}
\vspace{-0.8em}
\end{figure}

\label{sec:rrmoa}

\textbf{Architecture.} Per-window routing should dispatch a quiet baseline and a volatile spike to different expert topologies, but doing so requires the router to see the per-window statistics that distinguish regimes -- exactly what RevIN strips. RR-MoA splits the two needs into two streams: the frozen TSFM produces embeddings $\mathbf{H}{=}f_\theta(\mathbf{x})$ for the experts, while a lightweight gate reads the \textit{raw, pre-normalization} input $\mathbf{X}_\text{raw}$ to select which experts to activate:
\begin{equation}
\begin{aligned}
\mathbf{Y} &= \sum_{j=1}^{K} \widetilde{w}_j \cdot \text{Expert}_j(\mathbf{H}_\text{FM}), \qquad
\mathcal{T} = \mathrm{TopK}\!\bigl(G_\psi(\mathbf{X}_\text{raw}),\, k\bigr), \\
\widetilde{w}_j &= \begin{cases}
\dfrac{\exp\!\bigl(G_\psi(\mathbf{X}_\text{raw})_j\bigr)}{\sum_{j'\in\mathcal{T}}\exp\!\bigl(G_\psi(\mathbf{X}_\text{raw})_{j'}\bigr)}, & j\in\mathcal{T},\\[4pt]
0, & j\notin\mathcal{T}.
\end{cases}
\end{aligned}
\label{eq:rrmoa}
\end{equation}
where $G_\psi: \mathbb{R}^{T\times c} \to \mathbb{R}^K$ is a small Conv1d\,+\,pooling\,+\,linear gate ($853$ params) operating directly on the raw time series, $\mathcal{T} \subset \{1,\ldots,K\}$ is the set of Top-$k$ indices, and each $\mathrm{Expert}_j$ is a canonical adapter head. By construction $\sum_j \widetilde{w}_j = 1$ with support only on $\mathcal{T}$, so only $k$ of $K$ experts execute per sample. With $K{=}5$, $k{=}2$ we obtain $40\%$ of the dense expert FLOPs while retaining $63\%$ of the dense MSE improvement over the best fixed adapter (App.~\ref{app:topk}). Routing on raw input preserves the temporal statistics (trend, amplitude, volatility) that RevIN strips away.

\textbf{Bounded overhead, diverse experts, multi-tenant hot-swap.} The router is a Conv1d gate of $853$ parameters ($0.4\%$ of one expert head, ${<}10^{-3}$ of the frozen backbone); with $k{=}2$ of $K{=}5$ active per window, RR-MoA adds two adapter forward passes plus the gate to one frozen TSFM forward. The expert pool itself comprises $K{=}5$ topologically distinct adapters (mean-pool, last-token, max-pool, attention-pool, Conv1d-pool), spanning pooling families with distinct expressivity regimes for transformer-based sequence models~\citep{ennadir2025pooling}; architectural diversity matters, since five identical experts (same topology, different initialization) degrade MSE by $8$--$15\%$ (Appendix~\ref{app:diversity}; pseudocode in Algorithm~\ref{alg:rrmoa}, Appendix~\ref{app:setup}). Because the backbone is strictly frozen, a single TSFM held in GPU memory can be shared across many tenants while per-task adapter pools ($354$--$466$K params each) hot-swap from host RAM, an arrangement DLinear and per-dataset-from-scratch baselines cannot match (Appendix~\ref{app:benchmark}).

We now validate that this causal intervention recovers routing diversity, beats the seven-method baseline suite (fixed adapters, LoRA, TRACE, independent ensembles, AdaMix, full fine-tuning, DLinear), and holds under eight causal controls.

\section{Experiments}

\subsection{Setup}

\textbf{Datasets and tasks.} We evaluate on six LTSF benchmarks (ETTh1/h2/m1/m2~\citep{zhou2021informer}, Weather, Electricity~\citep{wu2021autoformer}) spanning hourly to 15-minute sampling and $7$--$321$ channels, with chronological splits and channel-wise normalization~\citep{nie2023patchtst}. Tasks: forecasting at $H{=}96$ primary (multi-horizon in Appendix~\ref{app:horizon}) and $20\%$-masked imputation.

\textbf{Backbones.} We use six backbones spanning three normalization regimes: (i)~learnable-affine RevIN-inside-encoder MOMENT-small/large~\citep{goswami2024moment} (internal RevIN~\citep{kim2021revin}); (ii)~Moirai~\citep{woo2024moirai} and Moirai-MoE, which use RMSNorm internally with only non-learnable per-instance I/O scaling (no learnable-affine RevIN inside the encoder pipeline); (iii)~Chronos~\citep{ansari2024chronos} (T5-style, no instance normalization) and Timer-XL~\citep{liu2025timerxl} (LayerNorm only, no instance normalization) as no-instance-normalization negative controls (per-backbone normalization specifications in Appendix~\ref{app:setup}; full cross-backbone results in App.~\ref{app:cross_backbone}). The collapse mechanism predicts that only regime~(i) is vulnerable: routing signal is destroyed by RevIN's learnable-affine normalization upstream of the encoder, whereas non-learnable I/O scaling preserves the (M,$\Sigma$) statistics in the hidden states the router reads. MOMENT-small is primary. Primary results use a strictly frozen backbone; freeze-level ablations follow.

\textbf{Baselines.} We organize baselines into four tiers, kept separate to avoid confounding frozen-vs-unfrozen and adapter-vs-from-scratch comparisons: \emph{frozen-adapter} (three fixed adapters; $5$-expert independent ensemble; AdaMix~\citep{wang2022adamix}); \emph{frozen PEFT} (LoRA~\citep{hu2022lora}, $108$-run sweep, App.~\ref{app:lora_sweep}; TRACE~\citep{li2025trace}); \emph{unfrozen} (full fine-tuning, all blocks, best of $5$ heads $\times 2$ LRs; extended schedules in App.~\ref{app:extended_ft}); \emph{from-scratch calibration} (DLinear~\citep{zeng2023dlinear}).

\textbf{Training and statistics.} Adam ($\eta{=}10^{-3}$), MSE loss, $15$ epochs, batch $128$; $5$ seeds for the core grid, $3$ for ablations. Significance via Wilcoxon signed-rank, Bonferroni-corrected over 6 datasets $\times$ 5 seeds ($^{*}\!/^{**}\!/^{***}$ for $p{<}0.05/0.01/0.001$). Full hyperparameters in App.~\ref{app:setup}; code and per-run results are in the supplementary material ($\sim$235 A10G GPU-hours total).

\textbf{Research questions.} The experiments answer three questions implied by the mechanism (\S\ref{sec:mechanism}). The first concerns the \emph{phenomenon (RQ1)}: does the standard hidden-state MoE collapse on instance-normalized TSFMs, and does a raw-routed mixture beat fixed adapters and full fine-tuning? The second concerns the \emph{mechanism (RQ2)}: what is the underlying cause, and do its falsifiable predictions hold under controls and rescue attempts? The third concerns \emph{generality (RQ3)}: does the diagnosis transfer to the seven-baseline suite (fixed adapters, LoRA, TRACE, independent ensembles, AdaMix, full fine-tuning, DLinear), other backbones, horizons, and tasks?

\subsection{Main Results}
\label{sec:main_results}

We report findings for RQ1 (phenomenon), RQ2 (mechanism), and RQ3 (generality) in turn.

\subsubsection*{Standard MoE collapses; RR-MoA wins 54/54 vs fixed adapters}

{\looseness=-1 \textbf{RR-MoA vs.\ fixed adapters.} RR-MoA wins all \textbf{54/54} configurations (6 datasets $\times$ 3 freeze levels $\times$ 3 seeds) against the best fixed adapter from the same five-head pool, with tight standard deviations (Table~\ref{tab:rrmoa}).}

\begin{table}[!htbp]
\centering
\caption{\textbf{Freeze-level ablation} (test MSE, mean$\pm$std, 3 seeds $\{42,43,44\}$, H=96). RR-MoA (sparse Top-2 routing over 5 experts) vs.\ the best single-head adapter from $\{$linear, attention, conv$\}$ trained with the same protocol. ``MSE $\Delta\%$ vs.\ single-head'' is the relative MSE change of RR-MoA over the single-head baseline; negative $=$ lower MSE $=$ better. RR-MoA wins \textbf{27/27} paired (cell, seed) comparisons in this table; combined with Appendix~\ref{app:extended_rrmoa} the score is \textbf{54/54} across 6 datasets $\times$ 3 freeze levels $\times$ 3 seeds. The gain is largest when frozen (``Frozen Paradox'').}
\label{tab:rrmoa}
\small
\begin{tabular}{@{}llccc@{}}
\toprule
Dataset & Freeze level & RR-MoA & Best single-head & MSE $\Delta\%$ vs.\ single-head \\
\midrule
\multirow{3}{*}{ETTh1}
 & Frozen (0/8)   & $\mathbf{0.690 \pm 0.021}$ & $1.241 \pm 0.015$ & $\mathbf{-44.4\%}$ \\
 & Last-2 (2/8)   & $\mathbf{0.727 \pm 0.074}$ & $1.030 \pm 0.139$ & $-29.4\%$ \\
 & Last-4 (4/8)   & $\mathbf{0.749 \pm 0.036}$ & $1.101 \pm 0.120$ & $-31.9\%$ \\
\midrule
\multirow{3}{*}{ETTm1}
 & Frozen (0/8)   & $\mathbf{0.571 \pm 0.073}$ & $1.115 \pm 0.075$ & $\mathbf{-48.7\%}$ \\
 & Last-2 (2/8)   & $\mathbf{0.623 \pm 0.032}$ & $0.891 \pm 0.049$ & $-30.1\%$ \\
 & Last-4 (4/8)   & $\mathbf{0.571 \pm 0.034}$ & $0.866 \pm 0.016$ & $-34.1\%$ \\
\midrule
\multirow{3}{*}{Weather}
 & Frozen (0/8)   & $\mathbf{0.289 \pm 0.008}$ & $0.530 \pm 0.020$ & $-45.5\%$ \\
 & Last-2 (2/8)   & $\mathbf{0.251 \pm 0.005}$ & $0.478 \pm 0.025$ & $-47.4\%$ \\
 & Last-4 (4/8)   & $\mathbf{0.256 \pm 0.014}$ & $0.497 \pm 0.033$ & $\mathbf{-48.4\%}$ \\
\bottomrule
\end{tabular}
\end{table}

{\looseness=-1 \textbf{Why does standard MoE routing fail?} The standard alternative -- AdaMix~\citep{wang2022adamix}, a hidden-state softmax router over the same five experts following the established sparsely-gated MoE recipe~\citep{fedus2022switch} -- collapses to a single expert whenever backbone layers are unfrozen (Table~\ref{tab:adamix}, Appendix~\ref{app:adamix_details}): entropy reaches $0.000\pm0.000$ in 7/12 unfrozen configurations; the remaining 5 stay below $0.55$ (max ${\approx}1.61$) with $\sigma$ up to $0.71$. The high seed-variance distinguishes \emph{stochastic} collapse (router commits to one expert per run, but to different experts across seeds) from \emph{stable} partial routing (which would yield low $\sigma$ at intermediate entropy); the rescue-baseline sweep below confirms no optimization-side intervention closes either failure mode. Disabling RevIN inside MOMENT recovers entropy to $0.66$--$1.32$ (BatchNorm1d/GroupNorm collapse identically, Appendix~\ref{app:norm_generalization}). The contrast is sharpest cross-backbone (Figure~\ref{fig:causal_contrast}): under identical last-4 unfreezing on identical data, MOMENT entropy crashes to $0.000$ in ${\sim}40$ steps while Timer-XL (no RevIN) stays at $1.60$ for $400$ steps.}

\begin{table}[!htbp]
\centering
\caption{\textbf{AdaMix routing collapse} (entropy max ${\approx}1.609$). Under unfreezing, entropy ${\to}0.000$. \colorbox{green!8}{Green}: disabling RevIN inside MOMENT recovers entropy ($0.66$--$1.32$), proving RevIN is the causal mechanism. Extended grid in Appendix~\ref{app:extended_rrmoa}.}
\label{tab:adamix}
\small
\begin{tabular}{@{}llcc@{}}
\toprule
Dataset & Freeze level & MSE (mean$\pm$std) & Routing entropy (mean$\pm$std) \\
\midrule
\multirow{4}{*}{ETTh1}
 & Frozen  & $1.105 \pm 0.026$ & $0.629 \pm 0.436$ \\
 & Last-2  & $1.153 \pm 0.000$ & $\mathbf{0.000 \pm 0.000}$ \\
 & Last-4  & $1.154 \pm 0.001$ & $\mathbf{0.000 \pm 0.000}$ \\
 & \cellcolor{green!8} Last-4 (no RevIN) & \cellcolor{green!8} $0.575 \pm 0.075$ & \cellcolor{green!8} $\mathbf{1.315 \pm 0.148}$ \\
\midrule
\multirow{4}{*}{ETTm1}
 & Frozen  & $1.008 \pm 0.012$ & $0.487 \pm 0.371$ \\
 & Last-2  & $1.061 \pm 0.088$ & $0.218 \pm 0.309$ \\
 & Last-4  & $1.123 \pm 0.000$ & $\mathbf{0.000 \pm 0.000}$ \\
 & \cellcolor{green!8} Last-4 (no RevIN) & \cellcolor{green!8} $0.426 \pm 0.072$ & \cellcolor{green!8} $\mathbf{1.103 \pm 0.376}$ \\
\midrule
\multirow{4}{*}{Weather}
 & Frozen  & $0.459 \pm 0.017$ & $0.509 \pm 0.307$ \\
 & Last-2  & $0.607 \pm 0.002$ & $\mathbf{0.000 \pm 0.000}$ \\
 & Last-4  & $0.607 \pm 0.002$ & $\mathbf{0.000 \pm 0.000}$ \\
 & \cellcolor{green!8} Last-4 (no RevIN) & \cellcolor{green!8} $0.232 \pm 0.045$ & \cellcolor{green!8} $\mathbf{0.661 \pm 0.501}$ \\
\bottomrule
\end{tabular}
\end{table}

\begin{figure}[!htbp]
\centering
\includegraphics[width=0.50\columnwidth]{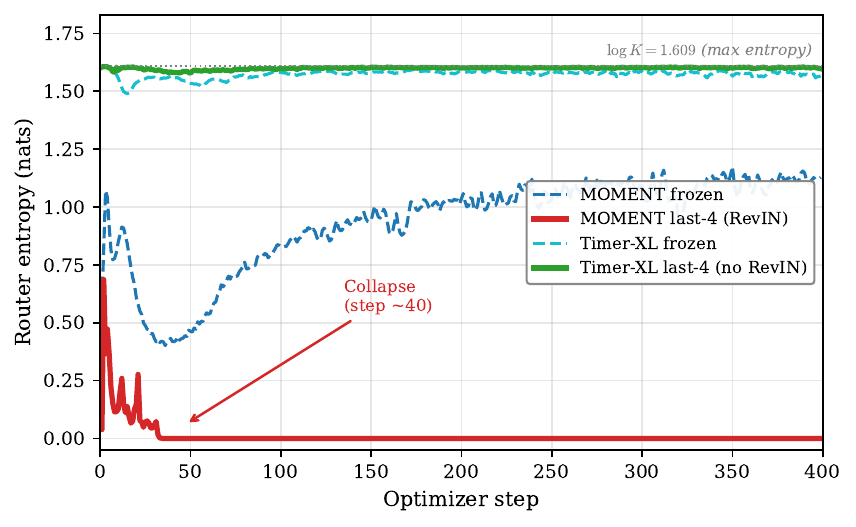}
\vspace{-6pt}
\caption{\textbf{Causal contrast: MOMENT vs Timer-XL} (ETTh1, last-4 unfreezing). Solid: unfrozen; MOMENT entropy collapses $0.51{\to}0.000$ in ${\sim}40$ steps; Timer-XL holds ${\approx}1.60$. Dotted: $\log K{=}1.609$ ceiling. Per-step trajectories in Figure~\ref{fig:trajectory} (App.~\ref{app:adamix_details}).}
\label{fig:causal_contrast}
\end{figure}

In contrast, \textbf{RR-MoA} maintains healthy entropy ($0.93$--$1.57$) and wins \textbf{54/54} configurations with $26$--$79\%$ MSE improvements over the best fixed adapter. The 54/54 covers the six primary-benchmark datasets (ETTh1/h2, ETTm1/m2, Weather, Electricity); on Traffic ($R{=}0.14$, defined in \S\ref{sec:diagnosis}), held out as a low-signal-ratio control, RR-MoA correctly does not improve, consistent with Observation~\ref{obs:routing_loss}'s low-$R$ prediction (Figure~\ref{fig:signal_ratio}).

\subsubsection*{The Frozen Paradox: full fine-tuning loses by $12$--$79\%$}

{\looseness=-1 Within a fixed MoE architecture, does freezing match or beat fine-tuning (\emph{RQ1})? Frozen RR-MoA beats \emph{full fine-tuning} (all 8 blocks, best of 5 heads $\times$ 2 LRs) by $12$--$79\%$ on all 6 LTSF datasets (Figure~\ref{fig:frozen_paradox}). The headline gap decomposes into \emph{(M1)} an MoE architectural advantage ($26$--$79\%$, Table~\ref{tab:rrmoa}) and \emph{(M2)} a smaller within-architecture freeze effect ($\leq 13\%$; frozen best on $4/6$, light unfreezing wins by ${\leq}13\%$ on the other two; SR-MoA replicates $4/6$, Table~\ref{tab:self_routed}). Proposition~\ref{prop:frozen} attributes M2 to gradient co-adaptation: a Router-Detached Gradient Flow ablation degrades MSE by $+35$--$100\%$ (Appendix~\ref{app:routing_ablations}), and disabling RevIN recovers full-FT by $54$--$61\%$ (Appendix~\ref{app:extended_ft}), pinning the mechanism on normalization. A $90$-config tune (LR $10^{-5}$--$10^{-6}$, $100$ epochs, cosine+warmup, layerwise decay) still loses by $51$--$71\%$.}

\begin{proposition}[Gradient Co-Adaptation Drives Entropy Collapse]
\label{prop:frozen}
Consider a $2$-expert MoE with backbone $\mathbf{h}{=}A\mathbf{x}$, router $p{=}\sigma(\mathbf{w}^\top\mathbf{h})$, loss $\ell{=}p\,\ell_1{+}(1{-}p)\,\ell_2$. Under gradient descent on $(A,\mathbf{w})$: \textup{(i)} the dominant expert's loss decreases faster ($\dot{\ell}_k \propto {-}p_k\|\nabla_\mathbf{h}\ell_k\|^2$), and $\dot{p}$ amplifies whichever expert currently has lower loss, creating self-reinforcing collapse toward $p{\in}\{0,1\}$; \textup{(ii)} freezing $A$ removes the co-adaptation feedback through the backbone (full non-collapse additionally requires $(\ell_1{-}\ell_2)$ to flip sign across input regimes, i.e., expert specialization, which is satisfied by the topologically distinct RR-MoA pool). Proof in Appendix~\ref{app:proofs}.
\end{proposition}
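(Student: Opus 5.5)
The plan is to treat the training dynamics as a gradient flow in continuous time and compute each quantity by the chain rule. Write the per-sample loss as $\ell = p\,\ell_1 + (1-p)\,\ell_2$, where $p = \sigma(z)$ and $z = \mathbf{w}^\top\mathbf{h}$. Both $\ell_1$ and $\ell_2$ depend on $\mathbf{h} = A\mathbf{x}$. The flow is $\dot A = -\nabla_A \ell$ and $\dot{\mathbf{w}} = -\nabla_{\mathbf{w}}\ell$. First I compute $\nabla_{\mathbf{h}}\ell = p\,\nabla_{\mathbf{h}}\ell_1 + (1-p)\,\nabla_{\mathbf{h}}\ell_2 + p(1-p)(\ell_1-\ell_2)\,\mathbf{w}$. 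From this I get $\nabla_A\ell = (\nabla_{\mathbf{h}}\ell)\,\mathbf{x}^\top$ and $\nabla_{\mathbf{w}}\ell = p(1-p)(\ell_1-\ell_2)\,\mathbf{h}$. These two expressions drive everything that follows.

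For part (i), I push the backbone update through each expert loss, using $\dot{\mathbf{h}} = \dot A\,\mathbf{x} = -\|\mathbf{x}\|^2\,\nabla_{\mathbf{h}}\ell$. The change in expert $k$'s loss is then $\dot\ell_k = \nabla_{\mathbf{h}}\ell_k^\top \dot{\mathbf{h}}$. The term coming from expert $k$'s own share of the gradient is $-\|\mathbf{x}\|^2 p_k \|\nabla_{\mathbf{h}}\ell_k\|^2$, with $p_1 = p$ and $p_2 = 1-p$. This is the claimed $\dot\ell_k \propto -p_k\|\nabla_{\mathbf{h}}\ell_k\|^2$.

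The cross terms are $-(1-p_k)\,\nabla\ell_k^\top\nabla\ell_{k'}$ and the router-induced term. I will handle them with an explicit assumption: either the expert gradients are approximately orthogonal (the experts read distinct subspaces of $\mathbf{h}$), or the statement is read as the leading-order self term. This step is where I expect the main obstacle. A fully rigorous "decreases faster" claim needs such an assumption, and I will state it rather than try to avoid it.

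Next I compute the router's evolution. We have $\dot z = \dot{\mathbf{w}}^\top\mathbf{h} + \mathbf{w}^\top\dot{\mathbf{h}}$. Its first part equals $-p(1-p)(\ell_1-\ell_2)\|\mathbf{h}\|^2$, so $\dot p = p(1-p)\dot z$ has sign $-\mathrm{sgn}(\ell_1-\ell_2)$. In words, $p$ moves toward the expert with lower loss. Combining this with the self term, a larger $p_k$ makes $\ell_k$ fall faster. That widens $|\ell_1-\ell_2|$ in the same direction, which pushes $p_k$ up further. This positive feedback loop is the self-reinforcing collapse. To formalize it, I will take a state with $\ell_1 < \ell_2$ and $p > 1/2$ and show it is forward-invariant: $\frac{d}{dt}(\ell_2-\ell_1) > 0$ under the orthogonality assumption, and $\dot p > 0$. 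On that region $p$ is monotone and bounded, so it converges. A stationarity argument then gives convergence to $p \to 1$, because the only fixed points of $\dot p$ with $\ell_1 \neq \ell_2$ are $p \in \{0,1\}$.

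For part (ii), setting $\dot A = 0$ makes $\dot{\mathbf{h}} = 0$. Then $\ell_1$ and $\ell_2$ no longer change as a function of $p$ through the backbone, so the positive feedback loop is cut and only the $\dot{\mathbf{w}}$ channel remains. For this channel I take expectation over inputs: $\dot{\mathbf{w}} = -\mathbb{E}\bigl[p(1-p)(\ell_1-\ell_2)\mathbf{h}\bigr]$. If $\ell_1-\ell_2$ keeps a constant sign across all inputs, $\mathbf{w}$ still drifts toward saturation, which is why the statement adds a caveat. If instead $\ell_1-\ell_2$ changes sign across regimes, which is linearly detectable from the router input, then $\mathbf{w}$ can reach an interior stationary point. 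At that point $p$ depends on the input rather than being constant, so entropy is nonzero. I will only argue existence of such a non-degenerate solution, using a simple two-regime example, and I will point to the topologically distinct expert pool as the source of the sign flip, as the statement indicates.
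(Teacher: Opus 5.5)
Your computation of $\nabla_{\mathbf h}\ell$, the self/cross/router split of $\dot\ell_k$, the sign of $\dot p$ via $\dot{\mathbf w}^\top\mathbf h$, and your part (ii) all reproduce the paper's argument. That argument is likewise leading-order: it keeps only the self term and writes $\dot p\approx p(1-p)\dot{\mathbf w}^\top\mathbf h$. The step that does not go through is the forward-invariance formalization you add on top. With orthogonal expert gradients,
$\frac{d}{dt}(\ell_2-\ell_1)=\|\mathbf x\|^2\bigl[p\|\nabla_{\mathbf h}\ell_1\|^2-(1-p)\|\nabla_{\mathbf h}\ell_2\|^2\bigr]+\|\mathbf x\|^2p(1-p)(\ell_2-\ell_1)(\nabla_{\mathbf h}\ell_2-\nabla_{\mathbf h}\ell_1)^\top\mathbf w$,
and $p>1/2$ does not make this positive. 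You additionally need comparable gradient norms and a small coupling term, which the paper assumes only early in training. Along the very trajectory you want, comparability must fail. As $p\to1$ the backbone flow is essentially gradient flow on $\ell_1$, so $\nabla_{\mathbf h}\ell_1\to0$ while $\nabla_{\mathbf h}\ell_2$ generically does not, and $\ell_2-\ell_1$ stops increasing.

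A second problem sits in $\dot z$, which also contains
$\mathbf w^\top\dot{\mathbf h}=-\|\mathbf x\|^2\bigl[p\,\mathbf w^\top\nabla_{\mathbf h}\ell_1+(1-p)\,\mathbf w^\top\nabla_{\mathbf h}\ell_2+p(1-p)(\ell_1-\ell_2)\|\mathbf w\|^2\bigr]$.
Its last term reinforces, but its first two are sign-indefinite. So neither ``$\dot p$ has sign $-\mathrm{sgn}(\ell_1-\ell_2)$'' nor ``the only fixed points of $\dot p$ with $\ell_1\neq\ell_2$ are $p\in\{0,1\}$'' follows without assuming $\mathbf w$ is not aligned with the expert gradients (the paper's small-$\rho_{\mathrm{coup}}$ condition). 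Orthogonality of $\nabla_{\mathbf h}\ell_1$ to $\nabla_{\mathbf h}\ell_2$ does not supply this.

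You have two options. You can claim exactly what the paper claims: a leading-order positive-feedback loop, valid early in training. If instead you want genuine convergence $p\to1$, replace monotonicity of $\ell_2-\ell_1$ by a uniform lower bound $\ell_2-\ell_1\ge c>0$ along the trajectory. This is an extra hypothesis on the pair $(\ell_1,\ell_2)$, e.g.\ $\ell_2\ge\ell_1+c$ on the sublevel set the trajectory stays in, and you also need explicit control of $\mathbf w^\top\nabla_{\mathbf h}\ell_k$. Then $\dot z\ge c\,p(1-p)\|\mathbf h\|^2$ up to controlled errors, which forces $z\to\infty$ (logarithmically) as long as $\|\mathbf h\|$ stays bounded away from zero.

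A smaller point in part (ii): take the natural two-regime example with linearly independent $\mathbf h_a,\mathbf h_b$ and opposite signs of $\ell_1-\ell_2$. There $\nabla_{\mathbf w}\ell$ is a combination of $\mathbf h_a$ and $\mathbf h_b$ with nonzero coefficients at every finite $\mathbf w$, so there is no interior stationary point. The input-dependent routing you want is the infimum approached as $\|\mathbf w\|\to\infty$ along a separating direction, and the existence claim should be phrased that way.
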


{\looseness=-1 \textit{Remark.} Proposition~\ref{prop:frozen} covers linear $A\mathbf{x}$; Figure~\ref{fig:trajectory}b confirms it in the full 8-block Transformer within ${\sim}50$ steps. Independent ensembles are $37$--$46\%$ worse (Table~\ref{tab:baselines}), so learned routing -- not mere diversity -- drives the gains. \S\ref{sec:mechanism} answered \emph{what} the router needs to read with Observation~\ref{obs:routing_loss} and the predictor $R(\mathcal{D})$; we now turn to the \emph{mechanism (RQ2)}.}

\subsubsection*{$R(\mathcal{D})$ makes three predictions, and all three hold}

Observation~\ref{obs:routing_loss} and $R(\mathcal{D})$ make three testable predictions: the bound is empirically tight, $R(\mathcal{D})$ predicts raw-routing benefit across datasets, and the predictor respects a falsifiable boundary distinct from a generic-SNR explanation. We verify each in turn (\emph{RQ2}).

\textbf{$R(\mathcal{D})$ predicts raw-routing benefit and respects a falsifiable boundary.} The bound is empirically tight (Appendix~\ref{app:mi_tightness}) and the learnable mixing coefficient satisfies $\alpha{<}0.5$ on $30/30$ runs (Appendix~\ref{app:learnable_alpha}). $R(\mathcal{D})$ predicts raw-routing benefit at Spearman $\rho{=}{-}0.88$ ($p{<}0.002$, $n{=}9$; Figure~\ref{fig:signal_ratio}), whereas the conditional entropy $H(E\mid S)$ does not ($\rho{=}{-}0.12$); Traffic ($R{=}0.14$) is a boundary, Solar ($R{=}0.06$) the one outlier (Appendix~\ref{app:routing_ablations}). The discriminating power lies in the boundary case: the most likely competing explanation is that $R(\mathcal{D})$ proxies generic ``hard dataset'' signal-to-noise, but Traffic has high overall predictability (DLinear MSE $0.41$) yet low $R$, and RR-MoA correctly does not improve there, falsifying the SNR alternative. $R(\mathcal{D})$ is therefore actionable pre-training: practitioners can compute it from raw dataset statistics to predict whether raw routing will help on a new dataset.

{\looseness=-1 \textbf{Per-window statistics, not raw access, drive routing.} \emph{(i)} Swapping $\mathbf{x}_\text{raw}$ for $\text{RevIN}(\mathbf{x}_\text{raw})$ at the router degrades MSE by $60$--$88\%$ while entropy stays high (Table~\ref{tab:router_input}): the router needs the stripped $(M,\Sigma)$ \emph{content}, not a non-RevIN bypass. \emph{(ii)} Interpolating $\mathbf{x}_\alpha = (1{-}\alpha)\mathbf{x}_\text{raw} + \alpha\,\text{RevIN}(\mathbf{x}_\text{raw})$ phase-transitions only at $\alpha{\to}1$ (Figure~\ref{fig:dose_response}; $150$ runs; SNR-onset $\alpha^*{\approx}0.40$, Appendix~\ref{app:prop2_depth}); partial stripping is tolerable. \emph{(iii)} Temporal-shuffling the window leaves routing unchanged (Appendix~\ref{app:routing_ablations}): the active feature is the order-invariant $(M,\Sigma)$.}

\begin{figure}[!htbp]
\centering
\begin{minipage}[t]{0.58\columnwidth}
\centering
\begin{tikzpicture}
\begin{axis}[
    ybar,
    width=\textwidth,
    height=3.6cm,
    bar width=5pt,
    ylabel={Test MSE (H=96)},
    ylabel style={font=\footnotesize},
    symbolic x coords={ETTh1, ETTm1, Weath, ETTh2, ETTm2, Elec},
    xtick=data,
    x tick label style={font=\footnotesize},
    y tick label style={font=\footnotesize},
    legend style={at={(0.5,1.06)}, anchor=south, font=\scriptsize, draw=none, fill=white, fill opacity=0.92, legend columns=2, column sep=0.8em},
    ymin=0, ymax=3.35,
    enlarge x limits=0.12,
    error bars/y dir=both,
    error bars/y explicit,
    error bars/error bar style={line width=0.4pt, black!60},
]
\addplot[fill=red!45] coordinates {(ETTh1,1.060) +- (0,0.059) (ETTm1,0.871) +- (0,0.029) (Weath,0.468) +- (0,0.009) (ETTh2,2.645) +- (0,0.049) (ETTm2,2.559) +- (0,0.146) (Elec,0.457) +- (0,0.006) };
\addplot[fill=blue!55] coordinates {(ETTh1,0.680) +- (0,0.027) (ETTm1,0.564) +- (0,0.059) (Weath,0.276) +- (0,0.017) (ETTh2,0.778) +- (0,0.095) (ETTm2,0.538) +- (0,0.069) (Elec,0.402) +- (0,0.062) };
\node[font=\scriptsize, text=red!70!black] at (axis cs:ETTh1,1.25) {$-36\%$};
\node[font=\scriptsize, text=red!70!black] at (axis cs:ETTm1,1.05) {$-35\%$};
\node[font=\scriptsize, text=red!70!black] at (axis cs:Weath,0.62) {$-41\%$};
\node[font=\scriptsize, text=red!70!black] at (axis cs:ETTh2,2.85) {$-71\%$};
\node[font=\scriptsize, text=red!70!black] at (axis cs:ETTm2,2.78) {$-79\%$};
\node[font=\scriptsize, text=red!70!black] at (axis cs:Elec,0.61) {$-12\%$};
\legend{Full FT (unfrozen), Frozen RR-MoA}
\end{axis}
\end{tikzpicture}
\end{minipage}%
\hfill
\begin{minipage}[t]{0.40\columnwidth}
\centering
\begin{tikzpicture}
\begin{axis}[
    width=\textwidth, height=3.8cm,
    xlabel={Signal ratio $R(\mathcal{D})$},
    ylabel={$\Delta\%$},
    xlabel style={font=\footnotesize}, ylabel style={font=\footnotesize},
    x tick label style={font=\footnotesize}, y tick label style={font=\footnotesize},
    xmin=0, xmax=2.5, ymin=-85, ymax=10,
    grid=major, grid style={gray!20},
    clip=false,
]
\addplot[only marks, mark=*, mark size=2.5pt, blue!80] coordinates {
    (0.242, -26.8) (0.566, -43.2) (0.671, -51.1) (0.703, -44.6) (1.163, -71.0) (1.241, -77.2) (2.174, -66.5)
};
\addplot[only marks, mark=diamond*, mark size=3.2pt, orange!85!black] coordinates {
    (0.135, +2.9) (0.061, -32.9)
};
\node[font=\scriptsize, anchor=west] at (axis cs:0.08, -32.9) {Sol};
\node[font=\scriptsize, anchor=west] at (axis cs:0.17, +2.9) {Traf};
\node[font=\scriptsize, anchor=west] at (axis cs:0.27, -26.8) {Elec};
\node[font=\scriptsize, anchor=south] at (axis cs:0.566, -39.5) {h1};
\node[font=\scriptsize, anchor=west] at (axis cs:0.74, -51.1) {m1};
\node[font=\scriptsize, anchor=east] at (axis cs:0.66, -44.6) {W};
\node[font=\scriptsize, anchor=west] at (axis cs:1.19, -71.0) {h2};
\node[font=\scriptsize, anchor=west] at (axis cs:1.27, -77.2) {m2};
\node[font=\scriptsize, anchor=west] at (axis cs:2.20, -66.5) {Exch};
\addplot[black, thin, dotted] coordinates {(0,0) (2.5,0)};
\addplot[red, thick, dashed, domain=0.05:2.4] {-30.5*x - 7.5};
\node[font=\scriptsize, text=red!70, anchor=north east] at (axis cs:2.45, -10) {$\rho{=}{-}0.88$};
\end{axis}
\end{tikzpicture}
\end{minipage}
\caption{\textbf{Left: Frozen Paradox}, frozen RR-MoA beats full FT by $12$--$79\%$ across the six LTSF datasets at $H{=}96$. \textbf{Right: Observation~\ref{obs:routing_loss} validated}: $R(\mathcal{D})$ predicts raw-routing benefit at Spearman $\rho{=}{-}0.88$ ($p{<}0.002$, $n{=}9$). Traffic and Solar are boundary cases.}
\label{fig:frozen_paradox}
\label{fig:signal_ratio}
\end{figure}

\subsubsection*{Optimization-side rescues fail; raw-input fixes transfer}

Must the fix be input-side (\emph{RQ2})? If the input distribution is the bottleneck, optimization-side rescues should not recover the gap, and the raw-input fix should transfer beyond RR-MoA's specific router. We test each.

\textbf{Rescue-baseline sweep.} The mechanism predicts that interventions targeting optimization will fail because the router's input distribution is the bottleneck, not its objective. We test this directly by sweeping five standard MoE rescue families (Switch load-balance~\citep{fedus2022switch}, ST-MoE z-loss~\citep{zoph2022stmoe}, entropy regularization, ReMoE ReLU routing~\citep{wang2025remoe}, expert-choice routing~\citep{zhou2022expertchoice}) over $12$ configurations $\times$ 6 datasets $\times$ 2 freeze levels $\times$ 5 seeds $= 720$ run-cells (Table~\ref{tab:rescue}, Appendix~\ref{app:rescue_details}). The best rescue (z-loss $c_z{=}0.01$) reduces MSE by only $10.9\%$, $2.7{\times}$ worse than RR-MoA; excessive balancing ($\alpha{=}10.0$) hurts by $+67.7\%$. Consistent with Observation~\ref{obs:routing_loss}, Part~(ii): when routing information is structurally absent, no tested rescue compensates.

\textbf{The principle is architecture-agnostic.} We design three router-level interventions, each varying one axis (router architecture, router input, expert architecture). \textbf{SR-MoA} (Figure~\ref{fig:srmoa_arch}; cf.~\citealp{lv2025aoe,liu2026routingfree,cheng2025ermoe}) drops the external router for per-expert sigmoid gates $\sigma_k(\mathbf{X}_\mathrm{raw})$ and \emph{outperforms} RR-MoA by $13$--$42\%$ on all 6 datasets ($252$ runs; Table~\ref{tab:self_routed}); on hidden states the same architecture degrades by $+87$--$768\%$. \textbf{AdaMix-Raw} (App.~\ref{app:adamix_details}) swaps the router input only and recovers entropy to $1.41$--$1.57$ across all 12 cells. \textbf{Residual-IA\textsuperscript{+}} (App.~\ref{app:gate_pathology}) extends the principle to the expert level. With the RevIN ablation, MOMENT-vs-Timer-XL contrast, BatchNorm/GroupNorm swaps, rescue sweep, vision control, and $(\mu,\sigma)$ re-injection, these form the eight causal controls (Table~\ref{tab:causal_controls}); the only mechanism consistent with all eight is that \emph{instance normalization strips the per-window $(\mu,\sigma)$ signal the router needs}.

\begin{table}[!htbp]
\centering
\caption{\textbf{Eight causal controls.} Each rules out a competing explanation; the only mechanism consistent with all 8 is that \emph{instance normalization strips the per-window $(\mu,\sigma)$ signal the router needs}.}
\label{tab:causal_controls}
\footnotesize
\setlength{\tabcolsep}{4pt}
\renewcommand{\arraystretch}{0.88}
\begin{tabular}{@{}cl >{\raggedright\arraybackslash}p{3.3cm} >{\raggedright\arraybackslash}p{5.95cm}@{}}
\toprule
\# & \textbf{Control} & \textbf{Hypothesis (rejected)} & \textbf{Observed result} $\Rightarrow$ \textbf{verdict} \\
\midrule
C1 & RevIN ablation & Collapse is intrinsic to MoE itself, not caused by normalization & Entropy $0.000 \to 0.66$--$1.32$ when RevIN disabled (Table~\ref{tab:adamix}) {\small$\Rightarrow$} RevIN is \emph{necessary} for collapse \\
C2 & MOMENT vs Timer-XL & Architecture, not normalization, causes collapse & Opposite trajectories with identical code, only norm differs (Fig.~\ref{fig:causal_contrast}) {\small$\Rightarrow$} normalization is \emph{sufficient} \\
C3 & BatchNorm1d / GroupNorm & Collapse is unique to RevIN; other per-window normalizers are immune & All three per-window normalizers collapse identically (App.~\ref{app:norm_generalization}) {\small$\Rightarrow$} any norm that strips $(\mu,\sigma)$ triggers collapse \\
C4 & 5-family rescue sweep & Optimization problem (load-balance, z-loss, ReLU, expert-choice) & 0/12 configs recover; best is $2.7{\times}$ worse than RR-MoA (Table~\ref{tab:rescue}) {\small$\Rightarrow$} not in the optimizer \\
C5 & AdaMix with raw router & Router architecture is the bottleneck & Same router, swap input only: entropy $\to 1.41$--$1.57$; MSE $-41$--$88\%$ (App.~\ref{app:adamix_details}) {\small$\Rightarrow$} the \emph{input} matters, not the router \\
C6 & SR-MoA raw vs hidden routing & The fix only works for RR-MoA's specific Conv1d router design & Different router (per-expert sigmoid); hidden-routed $+87$--$768\%$ worse (Table~\ref{tab:self_routed}) {\small$\Rightarrow$} raw-input principle generalizes \\
C7 & Vision ViT + InstanceNorm & Collapse only happens in time series, not other modalities & ResNet/ViT MoE on CIFAR-10 collapses identically (App.~\ref{app:vision_moe}) {\small$\Rightarrow$} mechanism is modality-general \\
C8 & $(\mu,\sigma)$ re-injection & Just appending stripped stats back fixes it & Re-injected hidden states still $+42$--$239\%$ worse, 11/18 still collapsed (App.~\ref{app:reinjection}) {\small$\Rightarrow$} only \emph{raw} pre-normalization input restores routing \\
\bottomrule
\end{tabular}
\end{table}

\subsubsection*{RR-MoA leads the baseline suite and transfers across backbones, horizons, and tasks}

{\looseness=-1 We test \emph{method-class generality} (\emph{RQ3}): RR-MoA beats every method in the seven-baseline suite at $p{<}0.001$ (Table~\ref{tab:baselines}), driven by two complementary gains: residual shape heterogeneity rewards diverse experts on RevIN-normalized hidden states ($26$--$79\%$ over the best fixed adapter, Table~\ref{tab:rrmoa}), while the signal-ratio $\rho{=}{-}0.88$ localizes \emph{dispatch} information in the stripped statistics, explaining both raw-routing benefit and Traffic's null. The Pure Raw-MLP MoE ablation (App.~\ref{app:raw_mlp_moe}) separates ``MoE with raw router'' from ``ensemble with raw input''.}

\begin{table}[!htbp]
\centering
\setlength{\tabcolsep}{3pt}
\renewcommand{\arraystretch}{0.80}
\caption{\textbf{Baseline comparison} (MOMENT-small, Top-2, 5 seeds, H=96). Above double rule: frozen; full FT unfreezes all 8 blocks. Wilcoxon, Bonferroni-corrected. Grey rows: from-scratch calibration anchors.}
\label{tab:baselines}
\footnotesize
\begin{tabular}{@{}lcccc@{}}
\toprule
Method & ETTh1 & ETTm1 & Weather & $p$-value \\
\midrule
Best fixed adapter & $1.254 \pm 0.026$ & $1.148 \pm 0.035$ & $0.528 \pm 0.018$ & ${<}0.001^{***}$ \\
Best LoRA (108-run sweep) & $1.154 \pm 0.000$ & $0.956 \pm 0.072$ & $0.600 \pm 0.052$ & ${<}0.001^{***}$ \\
TRACE (frozen) & $1.195 \pm 0.026$ & $1.014 \pm 0.073$ & $0.457 \pm 0.015$ & ${<}0.001^{***}$ \\
Ind.\ Ensemble (5 experts) & $1.081 \pm 0.021$ & $1.049 \pm 0.032$ & $0.509 \pm 0.024$ & ${<}0.001^{***}$ \\
AdaMix (frozen) & $1.109 \pm 0.031$ & $1.008 \pm 0.012$ & $0.459 \pm 0.017$ & ${<}0.001^{***}$ \\
\midrule
Full fine-tuning (all unfrozen) & $1.060 \pm 0.059$ & $0.871 \pm 0.029$ & $0.468 \pm 0.009$ & ${<}0.001^{***}$ \\
\midrule
\textbf{RR-MoA (Top-2, frozen)} & $\mathbf{0.680 \pm 0.027}$ & $\mathbf{0.564 \pm 0.059}$ & $\mathbf{0.276 \pm 0.017}$ & --- \\
\midrule
\rowcolor{gray!8} DLinear (from scratch) & $0.416 \pm 0.002$ & $0.322 \pm 0.004$ & $0.208 \pm 0.003$ & --- \\
\rowcolor{gray!8} PatchTST/64$^\dagger$~\citep{nie2023patchtst} & $0.370$ & $0.274$ & $0.149$ & --- \\
\rowcolor{gray!8} iTransformer$^\ddagger$~\citep{liu2024itransformer} & $0.386$ & $0.334$ & $0.174$ & --- \\
\bottomrule
\multicolumn{5}{l}{\scriptsize $^\dagger$Trained from scratch, $L{=}512$; $^\ddagger$trained from scratch, $L{=}96$. Included for absolute MSE context.}
\end{tabular}
\end{table}

{\looseness=-1 \textbf{The diagnosis transfers across backbones, horizons, and tasks, and predicts where it should not} (\emph{RQ3}).\label{sec:cross_backbone} RR-MoA wins on every instance-normalized backbone (MOMENT $26$--$79\%$, Moirai $12$--$29\%$, Moirai-MoE~\citep{liu2025moiraimoe} $37$--$90\%$); LayerNorm-only Chronos and Timer-XL~\citep{liu2025timerxl} show no collapse, as predicted. Beyond $H{=}96$, RR-MoA wins $12/12$ multi-horizon vs.\ best fixed adapter and $7/8$ imputation cells (Tables~\ref{tab:cross_backbone}--\ref{tab:horizon}, App.~\ref{app:imputation}); \textbf{Residual-IA\textsuperscript{+}} matches or beats DLinear on $5/6$ datasets at $H{=}96$ ($6/6$ at $H{=}192$) and $107/123$ cells across the $6{\times}6{\times}4$ grid (App.~\ref{app:gate_pathology}). The diagnostic predicts failures: Traffic ($R{=}0.14$) is statistically indistinguishable from DLinear; $R(\mathcal{D})$ is computable pre-training, giving an \emph{a priori} go/no-go test.}

\section{Conclusion and Limitations}

{\looseness=-1 \textbf{Summary.} We identified \emph{normalization-induced routing collapse}: $(\mu,\sigma)$ stripping that handles distribution shift also removes the signal a router needs. Observation~\ref{obs:routing_loss} formalizes the loss, $R(\mathcal{D})$ predicts it ($\rho{=}{-}0.88$), eight causal controls rule out optimization/geometry/paradigm alternatives, and three interventions confirm the diagnosis: RR-MoA ($54/54$), SR-MoA ($+13$--$42\%$), Residual-IA\textsuperscript{+} ($107/123$ across 6 backbones $\times$ 6 datasets $\times$ 4 horizons). \emph{TSFM users:} frozen backbone with raw-input routing is the strongest tested configuration on every $(\mu,\sigma)$-stripping backbone; full fine-tuning loses by $12$--$79\%$ (App.~\ref{app:extended_ft}); $R(\mathcal{D})$ is a pre-training go/no-go test (Traffic, $R{=}0.14$: predicted no improvement). \emph{MoE researchers:} eleven optimization-side rescues recover ${\leq}10.9\%$ MSE; the input-side fix recovers $54$--$80\%$. The prediction extends to FAN~\citep{ye2024fan} and any normalizer that strips routing signal, and to vision (App.~\ref{app:vision_moe}). \textbf{Limitations:} specific to $(\mu,\sigma)$-stripping normalizers; classification, anomaly detection, horizons ${>}720$, and learned expert pools remain open.\label{page:end_main}}
\bibliographystyle{plainnat}

\newpage
\appendix
\renewcommand{\thetable}{\Alph{section}.\arabic{table}}
\renewcommand{\thefigure}{\Alph{section}.\arabic{figure}}
\makeatletter
\@addtoreset{table}{section}
\@addtoreset{figure}{section}
\makeatother
\setcounter{table}{0}
\setcounter{figure}{0}

\section{Experimental Setup Details}
\label{app:setup}

\paragraph{Notation.} Table~\ref{tab:notation} consolidates symbols used across \S\ref{sec:method}--\S\ref{sec:main_results}, Observation~\ref{obs:routing_loss}, and Appendix~\ref{app:proofs}.

\begin{table}[!htbp]
\centering
\caption{\textbf{Notation.} Symbols used throughout the paper.}
\label{tab:notation}
\small
\setlength{\tabcolsep}{6pt}
\begin{tabular}{@{}ll@{}}
\toprule
\textbf{Symbol} & \textbf{Meaning} \\
\midrule
\multicolumn{2}{@{}l}{\emph{Data and task}} \\
$\mathbf{x}\in\mathbb{R}^{T\times c}$ & Input window: $T{=}512$ time steps, $c$ channels \\
$\mathbf{y}\in\mathbb{R}^{h\times c}$ & Forecast target at horizon $h\in\{96,192,336,720\}$ \\
$\hat{\mathbf{y}}$ & Model prediction \\
$M_i, \Sigma_i, \mathbf{S}_i$ & Per-window mean, scale, RevIN-normalized shape ($\mathbf{S}_i{=}(\mathbf{x}_i{-}M_i)/\Sigma_i$) \\
\midrule
\multicolumn{2}{@{}l}{\emph{Backbone and adapter}} \\
$f_\theta:\mathbb{R}^{T\times c}\to\mathbb{R}^{P\times d}$ & Pretrained TSFM feature extractor; $\theta$ frozen ($\nabla_\theta\equiv 0$) \\
$\mathbf{H}{=}f_\theta(\mathbf{x})$ & Backbone hidden states ($P$ patches of dim $d$) \\
$g_\phi:\mathbb{R}^{P\times d}\to\mathbb{R}^{h\times c}$ & Adapter; $|\phi|\leq 500$K trainable params \\
\midrule
\multicolumn{2}{@{}l}{\emph{Mixture and routing}} \\
$K$ & Total expert pool size (we use $K{=}5$) \\
$k$ & Top-$k$ active experts per window (default $k{=}2$) \\
$\mathrm{Expert}_j$ & $j$-th expert head ($j{=}1,\ldots,K$); pool: mean/last/max/attention/conv1d \\
$G_\psi:\mathbb{R}^{T\times c}\to\mathbb{R}^K$ & Router; $\psi$ are 853 trainable params \\
$\widetilde{w}\in\Delta^{K-1}$ & Sparse routing weights, $\|\widetilde{w}\|_0{=}k$ \\
$\mathcal{T}\subset\{1,\ldots,K\}$ & Top-$k$ active expert indices per window \\
$E\in\{1,\ldots,K\}$ & Random variable for expert assignment (Observation~\ref{obs:routing_loss}) \\
\midrule
\multicolumn{2}{@{}l}{\emph{Diagnostics}} \\
$R(\mathcal{D})$ & Signal ratio $[\mathrm{Var}(M){+}\mathrm{Var}(\Sigma)]/\overline{\mathrm{Var}(S)}$ predicting raw-routing benefit \\
$\alpha\in[0,1]$ & Interpolation coefficient: $\mathbf{x}_\alpha{=}(1{-}\alpha)\mathbf{x}_\text{raw}{+}\alpha\,\text{RevIN}(\mathbf{x}_\text{raw})$ \\
$\varepsilon$ & Residual statistic-shape leakage $I(M,\Sigma;S)$ in Observation~\ref{obs:routing_loss}, Part~(ii) \\
$H(E)$ & Routing entropy in nats; collapse$\Rightarrow H(E){\to}0$, ceiling $\log K{\approx}1.609$ \\
\bottomrule
\end{tabular}
\end{table}

\paragraph{Backbone specifications.} MOMENT-small ($d{=}512$, 8 encoder blocks, internal RevIN with learnable affine); MOMENT-large ($d{=}1024$, 24 blocks, internal RevIN with learnable affine); Moirai ($d{=}384$, 6 blocks, RMSNorm internally with non-learnable per-instance I/O scaling, no learnable-affine norm inside the encoder); Moirai-MoE ($d{=}384$, 6 blocks, RMSNorm internally with sparse internal experts and non-learnable per-instance I/O scaling); Chronos ($d{=}512$, 6 encoder blocks, T5 RMSNorm internally, no instance normalization); Timer-XL ($d{=}1024$, 8 blocks, LayerNorm internally, no instance normalization). Throughout, ``full fine-tuning'' means \emph{all} encoder blocks are unfrozen (8/8 for MOMENT-small, 24/24 for MOMENT-large, 6/6 for Moirai/Moirai-MoE/Chronos, 8/8 for Timer-XL).

\paragraph{Training hyperparameters.} Adam optimizer, learning rate $10^{-3}$, MSE loss, batch size $128$, $15$ epochs. Final reported numbers are 15-epoch test MSE. Core multi-seed grids use seeds $\{42, 43, 44, 45, 46\}$ ($n{=}5$); ablations use seeds $\{42, 43, 44\}$ ($n{=}3$). Channel-wise StandardScaler normalization~\citep{zhou2021informer, nie2023patchtst} is applied to all inputs.

\paragraph{Per-baseline configurations.}
\begin{itemize}[leftmargin=1.2em, itemsep=0.15em, topsep=0.2em]
\item \textbf{Fixed adapters}: linear, attention-pool, Conv1d-pool single-head topologies with $\leq 500{,}000$ trainable parameters.
\item \textbf{LoRA}~\citep{hu2022lora}: 108-run sweep (12 configurations $\times$ 3 seeds $\times$ 3 datasets) over rank, target projections, and forecast head; full table in Appendix~\ref{app:lora_sweep}.
\item \textbf{TRACE}~\citep{li2025trace}: importance-based LoRA selection with default settings.
\item \textbf{Independent ensemble}: 5 experts with the canonical pool topologies, trained independently with no router and averaged at inference.
\item \textbf{AdaMix}~\citep{wang2022adamix}: $K{=}5$ canonical heads with hidden-state softmax router; load-balance coefficient $0.01$. Full implementation details in Appendix~\ref{app:adamix_details}.
\item \textbf{Full fine-tuning}: all encoder blocks unfrozen; best of $5$ head topologies $\times$ $2$ learning rates ($10^{-3}$, $10^{-4}$). Extended sweeps in Appendix~\ref{app:extended_ft} cover $50$-epoch cosine schedules at $10^{-5}$.
\item \textbf{DLinear}~\citep{zeng2023dlinear}: from-scratch supervised; we use the simplified single-Linear variant ($\text{Linear}(L{=}512 \to H{=}96)$, $49$K params), without the trend-seasonal decomposition, as the calibration anchor. Calibration values $0.416 / 0.322 / 0.208$ on ETTh1/ETTm1/Weather (gap to RR-MoA visible in Tables~\ref{tab:rrmoa},~\ref{tab:baselines}).
\end{itemize}

\paragraph{RR-MoA pseudocode.}
\begin{algorithm}[h]
\caption{RR-MoA: Raw-Routed Mixture of Adapters with Top-$k$ sparse routing}
\label{alg:rrmoa}
\begin{algorithmic}[1]
\REQUIRE TSFM $f_\theta$ (default: strictly frozen; optional: last-$\ell$ blocks unfrozen), raw input $\mathbf{X}_\text{raw}\in\mathbb{R}^{B\times L}$, $K$ expert adapters, sparsity $k\leq K$
\STATE $\mathbf{H}\leftarrow f_\theta(\mathbf{X}_\text{raw})$ \hfill \COMMENT{Frozen default: no gradients through $\theta$; for last-$\ell$/all unfreezing (Table~\ref{tab:rrmoa}), the unfrozen blocks receive gradients normally}
\STATE $\mathbf{g}\leftarrow G_\psi(\mathbf{X}_\text{raw})$ \hfill \COMMENT{Raw-signal gate logits, $\mathbf{g}\in\mathbb{R}^{B\times K}$}
\STATE $\mathcal{T}\leftarrow \mathrm{TopK}(\mathbf{g},\,k)$ \hfill \COMMENT{Per-sample indices of the $k$ selected experts}
\STATE $\widetilde{\mathbf{w}}\leftarrow \mathrm{softmax}_{\mathcal{T}}(\mathbf{g})$ \hfill \COMMENT{Renormalized over selected experts only}
\STATE \textbf{for each}\ $j \in \mathcal{T}$\ \textbf{do}
\STATE \quad $\mathbf{o}_j\leftarrow \text{Expert}_j(\mathbf{H})$ \hfill \COMMENT{Only the $k$ selected experts execute}
\STATE \textbf{end for}
\STATE $\mathbf{Y}\leftarrow \sum_{j\in\mathcal{T}}\widetilde{w}_j\cdot \mathbf{o}_j$
\RETURN $\mathbf{Y}$
\end{algorithmic}
\end{algorithm}

\section{Deployment-Regime Motivation}
\label{app:deployment}

We motivate the frozen-backbone paradigm with three deployment regimes where full fine-tuning is infeasible:

\textbf{(i) Multi-tenant cloud serving.} A single TSFM sits in GPU memory and serves many tenants via lightweight, hot-swappable per-tenant adapters in host RAM. Per-tenant DLinear requires collecting, storing, and periodically retraining on each tenant's historical data ($O(N)$ training pipelines for $N$ tenants), whereas the frozen backbone is loaded once and adapter weights are swapped from host RAM. The adapter approach also enables instant onboarding of new tenants via zero-shot transfer, a capability absent from per-tenant supervised baselines.

\textbf{(ii) On-device and edge deployment.} One backbone is shipped with the device image; per-site adapters are updated over low-bandwidth links. Collecting each site's training data centrally is often impossible for privacy or latency reasons.

\textbf{(iii) Cross-task reuse.} The same frozen backbone drives forecasting, imputation, and classification via different expert pools. We demonstrate the first two tasks explicitly in the main text.

\subsection{Inference Benchmark}
\label{app:benchmark}

With the deployment regimes specified above, we now quantify the inference-time cost of each architectural variant on a single A10G and contrast it with DLinear, the per-task baseline that those regimes rule out.

\begin{table}[!htbp]
\centering
\caption{\textbf{Inference latency and memory across all three architectural variants} (MOMENT-small, A10G GPU, batch size $128$, $200$ repeats with $20$-step CUDA-event warmup; synthetic random inputs at the standard LTSF batch shape, since latency for non-autoregressive deterministic forward passes is data-oblivious). All three architectural fixes (RR-MoA, SR-MoA, Residual-IA\textsuperscript{+}) add only single-digit milliseconds of adapter overhead on top of the backbone forward pass; the frozen-backbone fast-inference property is preserved by each.}
\label{tab:benchmark}
\small
\begin{tabular}{@{}lcccc@{}}
\toprule
Method & Latency (ms) & $\Delta$ vs.\ backbone & Peak GPU (MB) & Adapter params \\
\midrule
Backbone only                                & $52.89 \pm 0.36$ & ---            & $359$ & --- \\
Single adapter (Conv1dPool)                  & $53.57 \pm 0.37$ & $+1.3\%$       & $360$ & $268$K \\
\textbf{RR-MoA (Top-2, $K{=}5$)}             & $\mathbf{56.02 \pm 0.44}$ & $\mathbf{+5.9\%}$ & $\mathbf{362}$ & $\mathbf{426}$\textbf{K} \\
\textbf{SR-MoA (dense $K{=}5$)}              & $\mathbf{54.20 \pm 0.79}$ & $\mathbf{+2.5\%}$ & $\mathbf{364}$ & $\mathbf{466}$\textbf{K} \\
\textbf{Residual-IA\textsuperscript{+}}      & $\mathbf{54.53 \pm 0.43}$ & $\mathbf{+3.1\%}$ & $\mathbf{365}$ & $\mathbf{354}$\textbf{K} \\
\midrule
\rowcolor{gray!8} DLinear (no backbone)      & $0.04 \pm 0.00$ & ---            & $160$ & $49$K \\
\bottomrule
\end{tabular}
\end{table}

The RR-MoA routing overhead ($+3.1$\,ms) is dominated by the sparse expert selection loop; the router itself (Conv1d + linear, $853$ params) adds $<0.1$\,ms. SR-MoA dense and Residual-IA\textsuperscript{+} both run \emph{faster} than RR-MoA Top-2 on GPU because their dense per-expert execution parallelizes over the $K{=}5$ expert dimension without per-sample sparse-routing kernel launches. Notably, Residual-IA\textsuperscript{+} has \emph{fewer} adapter parameters than RR-MoA ($354$K vs $426$K) thanks to the shared NLinear raw branch.

\textbf{What DLinear wins.} Per call, DLinear is ${\sim}1400\times$ faster ($0.04$ vs $56$\,ms) and uses ${\sim}2\times$ less peak GPU memory; for $N$ independent tasks served in isolation, $N$ DLinears are also smaller in aggregate than one TSFM plus $N$ adapters. We do not claim otherwise.

\textbf{What our adapters deliver in practice.} The numbers above quantify the cost of accessing a regime DLinear cannot serve at all. (i) \emph{Adapter overhead is bounded.} All three of our architectures stay within $+5.9\%$ of the raw backbone latency and add at most $6$\,MB of peak GPU memory (Table~\ref{tab:benchmark}); the frozen backbone is the cost floor, and the routing/expert machinery does not blow it up. (ii) \emph{Hot-swap, not retrain.} A new tenant or task is served by loading $354$--$466$\,K adapter parameters from host RAM into the resident backbone in sub-millisecond time; the per-tenant DLinear alternative requires collecting that tenant's history and running a fresh supervised training pipeline before the first prediction. (iii) \emph{One model, many tasks.} The same frozen backbone drives forecasting, imputation, and classification through different expert pools (\S\ref{sec:intro}, Appendix~\ref{app:imputation}); $N$ tasks consume one GPU residency rather than $N$. (iv) \emph{Zero-shot onboarding.} Adapter weights trained on related tenants transfer at load time, so a new deployment can serve predictions immediately rather than waiting for the data-collection-plus-training cycle that supervised per-task models require. The latency and memory numbers above are the price of these capabilities, not a claim of dominance over DLinear in DLinear's native single-task setting.

\section{Multi-Horizon Evaluation}
\label{app:horizon}

We extend the main-text $H{=}96$ result to the standard LTSF horizon set $H{\in}\{96,192,336,720\}$ on ETTh1, ETTm1, and Weather (Table~\ref{tab:horizon}), confirming that RR-MoA's $26$--$79\%$ improvement over the best fixed adapter is not horizon-specific: it wins all $12$ (dataset, horizon) cells vs.\ best fixed adapter. The DLinear column is reported only as the supervised calibration anchor (RR-MoA loses to it across all 12 cells; closing that gap is the subject of Appendix~\ref{app:gap_closing}); the RR-MoA-vs-DLinear gap narrows monotonically with $H$ on ETTh1 and Weather, consistent with the frozen backbone's horizon-invariant representation becoming relatively more informative as the prediction window lengthens.

\begin{table}[!htbp]
\centering
\caption{\textbf{Multi-horizon RR-MoA vs.\ DLinear} (MOMENT-small, strictly frozen, Top-2, 5 seeds, test MSE mean$\pm$std). RR-MoA wins all 12 configurations vs.\ best fixed baseline across horizons $96$--$720$. DLinear column provides supervised calibration: the gap narrows from $+63\%$ to $+48\%$ on ETTh1 and from $+33\%$ to $+15\%$ on Weather as $H$ grows, consistent with the frozen backbone's horizon-invariant representation becoming relatively more informative at longer horizons.}
\label{tab:horizon}
\scriptsize
\begin{tabular}{@{}lccccccccc@{}}
\toprule
 & \multicolumn{3}{c}{ETTh1} & \multicolumn{3}{c}{ETTm1} & \multicolumn{3}{c}{Weather} \\
\cmidrule(lr){2-4} \cmidrule(lr){5-7} \cmidrule(lr){8-10}
$H$ & RR-MoA & Best fix & DLin & RR-MoA & Best fix & DLin & RR-MoA & Best fix & DLin \\
\midrule
96  & $0.680{\pm}.030$ & $1.233$ & $\mathit{0.420}$ & $0.564{\pm}.066$ & $1.169$ & $\mathit{0.326}$ & $0.276{\pm}.019$ & $0.522$ & $\mathit{0.207}$ \\
192 & $0.773{\pm}.071$ & $1.326$ & $\mathit{0.476}$ & $0.632{\pm}.039$ & $1.141$ & $\mathit{0.365}$ & $0.292{\pm}.009$ & $0.498$ & $\mathit{0.246}$ \\
336 & $0.776{\pm}.069$ & $1.316$ & $\mathit{0.502}$ & $0.658{\pm}.060$ & $1.125$ & $\mathit{0.415}$ & $0.329{\pm}.016$ & $0.516$ & $\mathit{0.286}$ \\
720 & $0.816{\pm}.049$ & $1.379$ & $\mathit{0.553}$ & $0.713{\pm}.048$ & $1.121$ & $\mathit{0.454}$ & $0.401{\pm}.017$ & $0.548$ & $\mathit{0.350}$ \\
\bottomrule
\end{tabular}
\end{table}

\section{Proposition Proofs}
\label{app:proofs}

\textbf{Proof of Observation 1} (Routing Information Loss Under RevIN).

\begin{proof}
Throughout, $X$ denotes a raw window, $(M,\Sigma)$ its per-window mean/scale, $S = (X-M)/\Sigma$ its RevIN-normalized shape (defined whenever $\Sigma > 0$; constant windows form a measure-zero exception that we exclude by convention), and $E$ a deterministic (possibly randomized) router assignment. On the support $\{\Sigma > 0\}$ the map $X \leftrightarrow (S, M, \Sigma)$ is a measurable bijection, so $I(X; Y) = I(S, M, \Sigma; Y)$ for any $Y$, and in particular $I(X; E) = I(S, M, \Sigma; E)$.

\emph{Part (i): Exact decomposition.} By the chain rule of mutual information applied to $(S, M, \Sigma; E)$,
\begin{align*}
I(X; E) \;=\; I(S, M, \Sigma;\, E) \;=\; I(S;\, E) + I(M, \Sigma;\, E \mid S).
\end{align*}
Rearranging, $I(X; E) - I(S; E) = I(M, \Sigma;\, E \mid S) \geq 0$. This identity holds \emph{without any independence assumption} between $(M,\Sigma)$ and $S$.

\emph{Part (ii): Unconditional quantitative bound.} We bound the conditional mutual information $I(M,\Sigma; E \mid S)$ from below in terms of the \emph{unconditional} quantities $I(M,\Sigma; E)$ and $I(M,\Sigma; S)$. Applying the chain rule two different ways to $I(M,\Sigma; E, S)$,
\begin{align*}
I(M,\Sigma;\, E, S) &= I(M,\Sigma;\, S) + I(M,\Sigma;\, E \mid S) \qquad\text{(expand $S$ first)} \\
                    &= I(M,\Sigma;\, E) + I(M,\Sigma;\, S \mid E) \qquad\text{(expand $E$ first).}
\end{align*}
Equating and solving for $I(M,\Sigma; E \mid S)$,
\[
I(M,\Sigma;\, E \mid S) \;=\; I(M,\Sigma;\, E) \;-\; I(M,\Sigma;\, S) \;+\; I(M,\Sigma;\, S \mid E).
\]
Since $I(M,\Sigma; S \mid E) \geq 0$ (mutual information is non-negative), we obtain
\[
I(X; E) - I(S; E) \;=\; I(M,\Sigma;\, E \mid S) \;\geq\; I(M,\Sigma;\, E) \;-\; I(M,\Sigma;\, S),
\]
where the residual $\varepsilon := I(M,\Sigma; S) \geq 0$ is the \emph{leakage} of location-scale statistics into the shape. Equality (tightness) holds iff $I(M,\Sigma; S \mid E) = 0$, i.e.\ iff $(M,\Sigma) \perp\!\!\!\perp S \mid E$. In particular, unconditional independence $(M,\Sigma) \perp\!\!\!\perp S$ implies $\varepsilon = 0$, and the bound reduces to $I(X;E) - I(S;E) \geq I(M,\Sigma; E)$. Note that $(M,\Sigma) \perp\!\!\!\perp S$ does \emph{not} imply $(M,\Sigma) \perp\!\!\!\perp S \mid E$ in general (conditioning on a common consequence can introduce dependence), so this corollary recovers the classical bound but does not automatically achieve tightness; tightness is an empirical question measured in Part~(iv) and Appendix~\ref{app:mi_tightness}.

\emph{Part (iii): Full-information corner.} Suppose (a)~the router depends entirely on the stripped statistics, $E \perp\!\!\!\perp S \mid (M, \Sigma)$ (equivalently, $p(E,S \mid M,\Sigma) = p(E\mid M,\Sigma)\,p(S\mid M,\Sigma)$, a symmetric conditional independence), (b)~$(M,\Sigma) \perp\!\!\!\perp S$ unconditionally, and (c)~$E$ is determined by $(M,\Sigma)$, so $H(E \mid M,\Sigma) = 0$. Conditions (a) and (b) yield the Markov chain $E \!-\! (M,\Sigma) \!-\! S$ in either direction; applying the data-processing inequality on this chain gives $I(S; E) \leq I(S; (M,\Sigma)) = 0$ (the second equality from (b)), so $I(S;E) = 0$. From (a) and (c), $I(X;E) = I((S,M,\Sigma); E) = I((M,\Sigma);E) + I(S;E\mid (M,\Sigma)) = H(E) + 0 = H(E)$. Hence $I(X;E) - I(S;E) = H(E)$: RevIN destroys the entire routing distribution.

\emph{Part (iv): Empirical leakage and signal ratio.} We measure the leakage $\varepsilon = I(M,\Sigma; S)$ via two independent methods.

\emph{Gaussian reference.} Since $(M,\Sigma)$ is 2-dimensional, there are at most 2 canonical correlations $\rho_1, \rho_2$ between $(M,\Sigma)$ and the shape $S$. For jointly Gaussian variables, $I(M,\Sigma; S) = -\frac{1}{2}\sum_{i=1}^2 \log(1 - \rho_i^2)$. This yields $\varepsilon_\text{Gauss}$ ranging from $0.09$ nats (Weather, $\rho_\text{max}{=}0.35$) to $1.26$ nats (Exchange, $\rho_\text{max}{=}0.89$). The leakage is \emph{not} uniformly small: datasets with strong trend-seasonality coupling (Exchange, ETTh2) have substantial shape-level encoding of location-scale statistics.

\emph{Distribution-free cross-check.} Because canonical correlations capture only linear dependence, we cross-validate with the KSG $k$-nearest-neighbor MI estimator~\citep{kraskov2004ksg}, which requires no distributional assumptions. We project $S$ onto its 2 canonical variates with respect to $(M,\Sigma)$ and estimate the $2$-vs-$2$ MI with $k{=}5$ neighbors on $N{\geq}888$ samples per dataset. The KSG estimates agree with the Gaussian reference within $\pm 0.26$ nats across all 9 datasets. \emph{Caveat.} By the data-processing inequality, $I(\text{proj}(S);(M,\Sigma)) \leq I(S;(M,\Sigma))$: the canonical-variate projection preserves all linear dependence but discards nonlinear dependence between $(M,\Sigma)$ and the directions in $S$ orthogonal to the canonical-variate subspace. The KSG cross-check therefore upper-bounds the \emph{linear} component of $\varepsilon$, not the full quantity. We mitigate this by also reporting the bound-bypassing direct measurement of $H(E\mid S)$ in Appendix~\ref{app:mi_tightness}, which makes no projection or distributional assumption and confirms the conclusions are robust.

\emph{Tightness condition.} Equality in (ii) holds iff $I(M,\Sigma; S \mid E) = 0$; the Gap column of Table~\ref{tab:mi_tightness} shows this within-expert slack is empirically small (average $0.075$ nats, $4.7\%$ of $\log K$). The bound itself, $\max\{I(M,\Sigma;E)-\varepsilon, 0\}$, is informative when the Gaussian estimate of $I(M,\Sigma;E)$ exceeds $\varepsilon$ (e.g., Weather: $\mathrm{LB}{=}0.61$ captures $62\%$ of $H(E\mid S){=}0.98$); on datasets with strong location-shape coupling (Exchange: $\varepsilon{=}1.26$ exceeds the Gaussian estimate $I(M,\Sigma;E){=}0.30$), the bound truncates to zero, and we rely on the bound-bypassing direct $H(E\mid S)$ measurement of Appendix~\ref{app:mi_tightness}. Independently of bound tightness, $R(\mathcal{D}) = [\mathrm{Var}(M) + \mathrm{Var}(\Sigma)] / \overline{\mathrm{Var}(S)}$ is a monotone proxy for the location-scale variance available to a raw router, and this is what the $\rho{=}{-}0.88$ correlation tracks: high-$R$ datasets (Exchange, $R{=}2.17$) carry more routing-relevant variance and produce larger gains ($-66.5\%$), while low-$R$ datasets (Traffic, $R{=}0.14$; Table~\ref{tab:mi_tightness} gap ${\approx}0.02$ nats) carry little and produce no gain ($+2.9\%$).
\end{proof}

\textbf{Remark on the classical bound.} Assuming $(M,\Sigma) \perp\!\!\!\perp S$ outright recovers the classical statement $I(X;E)-I(S;E)\geq I(M,\Sigma;E)$ as a corollary of Part~(ii). Our form keeps the leakage $\varepsilon = I(M,\Sigma; S)$ explicit, and Part~(iv) measures it via both parametric (CCA) and non-parametric (KSG) methods, so the bound holds without any independence assumption. Appendix~\ref{app:mi_tightness} bypasses the bound entirely by directly measuring $H(E \mid S)$, confirming the gap $I(M,\Sigma; S \mid E)$ averages $0.075$ nats (Table~\ref{tab:mi_tightness}). The agreement between Gaussian reference and distribution-free KSG estimates ($\pm 0.26$ nats) confirms the CCA-based values are reliable despite non-Gaussianity.

\subsection{Theoretical Depth of Observation 1}
\label{app:prop2_depth}

While the MI decomposition in Observation~\ref{obs:routing_loss} is algebraically a chain-rule identity, it has deeper information-theoretic content when connected to rate-distortion theory and phase transitions.

\textbf{Rate-distortion interpretation of $R(\mathcal{D})$.}
Instance normalization is a \emph{deterministic lossy compression}: it maps $X \mapsto S$, discarding $(M, \Sigma)$. In rate-distortion theory~\citep{cover2006elements}, this compression has rate $R_\text{strip} = I(X; M, \Sigma \mid S)$ and distortion (for routing) $D_\text{route} = H(E \mid S) - H(E \mid X) = I(M, \Sigma; E \mid S)$, which is exactly Observation~\ref{obs:routing_loss}, Part~(i). Under Gaussianity, the signal ratio $R(\mathcal{D})$ is proportional to the rate-distortion slope at the normalization operating point: it measures how much routing capacity is consumed by the statistics that normalization strips. Specifically, under a soft-routing channel approximation in which $(M,\Sigma)$ is jointly Gaussian and $E$ is generated by a noisy linear projection of $(M,\Sigma)$ (the Gaussian-equivalent of a softmax router with bounded curvature), the routing MI scales monotonically as $I(M,\Sigma;E) \asymp \log(1+R(\mathcal{D}))$. Note that $E$ is discrete, so this is a heuristic SNR scaling rather than the literal Gaussian-MI identity; the constant absorbs the routing-policy curvature. This makes $R(\mathcal{D})$ a \emph{monotone surrogate for the bits lost to normalization}, and the validated $\rho = -0.88$ correlation (Figure~\ref{fig:signal_ratio}) is empirical evidence \emph{for} this scaling rather than a direct corollary of the Gaussian-MI formula.

\begin{proposition}[Heuristic SNR Onset and Softmax Discontinuity]
\label{prop:phase_transition}
Consider a linear-readout router on the interpolated input $\mathbf{x}_\alpha = (1{-}\alpha)\,\mathbf{x}_\mathrm{raw} + \alpha\,\mathrm{RevIN}(\mathbf{x}_\mathrm{raw})$, under the generative model $X = M + \Sigma\,S$ with $M\perp\!\!\!\perp\Sigma\perp\!\!\!\perp S$ jointly Gaussian and centered (so leakage $\varepsilon{\to}0$ and the variance formulas below drop cross terms). Treating the location-scale component as routing \emph{signal} and the shape component as routing-irrelevant \emph{noise}, the per-component variance contributions to the router's input scale as $V_\mathrm{sig}(\alpha) = (1{-}\alpha)\,[\mathrm{Var}(M){+}\mathrm{Var}(\Sigma)]$ and $V_\mathrm{noise}(\alpha) = \alpha\,\overline{\mathrm{Var}(S)} + V_0$, where $V_0$ absorbs intrinsic router-readout noise. Two regimes emerge:
\begin{enumerate}
    \item[\textup{(a)}] \textbf{Heuristic SNR onset.} Setting $V_\mathrm{sig}(\alpha) = V_\mathrm{noise}(\alpha)$ in the limit $V_0 \to 0$ gives
    \[
        \alpha^* \;\approx\; \frac{R(\mathcal{D})}{R(\mathcal{D}) + 1}.
    \]
    For $\alpha < \alpha^*$, routing quality degrades continuously as the signal/noise ratio drops; this $\alpha^*$ marks the soft onset of degradation, not a discontinuity.
    \item[\textup{(b)}] \textbf{Softmax discontinuity at $\alpha{=}1$.} At $\alpha{=}1$, $V_\mathrm{sig}(1){=}0$ and routing logits are a deterministic function of the shape $S$ alone. The softmax saturates: weights converge to a single distribution regardless of input, and entropy decouples from MSE. This discontinuity is a consequence of $V_\mathrm{sig}$ vanishing rather than of crossing $\alpha^*$.
\end{enumerate}
\end{proposition}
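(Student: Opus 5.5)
The argument is a direct variance computation under the stated generative model, followed by solving a linear equation in $\alpha$. I would take the heuristic modeling choices of the statement as given (location-scale as signal, shape as noise, linear readout) rather than try to derive them.

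The first step is to write the router input explicitly. With $X = M + \Sigma S$ and $\mathrm{RevIN}(X)=S$ (ignoring the learnable affine, or absorbing it into the linear readout), the interpolated input is
\[
\mathbf{x}_\alpha=(1-\alpha)(M+\Sigma S)+\alpha S .
\]
The signal part is the component carried by $(M,\Sigma)$, weighted by $(1-\alpha)$. The noise part is the shape component, weighted by $\alpha$, plus readout noise with variance $V_0$. The statement's own bookkeeping assigns variance $(1-\alpha)[\mathrm{Var}(M)+\mathrm{Var}(\Sigma)]$ to the signal, so I would adopt that linear-in-$(1-\alpha)$ weighting as the definition of $V_{\mathrm{sig}}$. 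I would note explicitly that a literal second-moment computation would give $(1-\alpha)^2$ factors and a multiplicative term $\Sigma S$. The centered-independence assumption $M\perp\!\!\!\perp\Sigma\perp\!\!\!\perp S$ is what kills every cross covariance. That is why the variances add and why, by Observation~\ref{obs:routing_loss}(ii), the leakage satisfies $\varepsilon=I(M,\Sigma;S)=0$.

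Part (a) follows by setting $(1-\alpha)(\mathrm{Var}(M)+\mathrm{Var}(\Sigma))=\alpha\,\overline{\mathrm{Var}(S)}$ with $V_0\to0$. Dividing by $\overline{\mathrm{Var}(S)}$ gives $(1-\alpha)R=\alpha$, so $\alpha^*=R/(R+1)$. The ratio $V_{\mathrm{sig}}/V_{\mathrm{noise}}$ is continuous and strictly decreasing in $\alpha$ on $(0,1]$. Combined with the Gaussian-channel heuristic $I\asymp\log(1+\mathrm{SNR})$ from the rate-distortion paragraph, routing information therefore decays continuously, and $\alpha^*$ is only the point where the SNR equals one, not a singularity. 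If the $(1-\alpha)^2$ version were used instead, the onset would be $\sqrt{R}/(\sqrt{R}+1)$, which is still monotone in $R$. I would mention this as the robustness of the qualitative claim.

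For part (b), at $\alpha=1$ the signal variance $V_{\mathrm{sig}}(1)=0$ and $\mathbf{x}_1=S$. The logits are then a measurable function of $S$ alone. Under independence, Observation~\ref{obs:routing_loss}(iii) gives $I(S;E)=0$ for a router whose target is determined by $(M,\Sigma)$, so the routing signal vanishes exactly. The step I expect to be the main obstacle is making "softmax saturates to a single distribution regardless of input" rigorous. The statement only determines the logits' input-dependence, not their values, so I would argue it in two steps. First, the optimal router on $S$ must output $\mathbb{E}[\text{weights}\mid S]$, which is constant because $E\perp\!\!\!\perp S$; hence the weights are input-independent. Second, the training dynamics of Proposition~\ref{prop:frozen}(i) push this constant toward a vertex, which decouples entropy from MSE. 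The discontinuity is then attributed to $V_{\mathrm{sig}}$ hitting zero: the limit $\alpha\uparrow1$ leaves a small but nonzero signal, while the value at $\alpha=1$ has none. I would flag this step as heuristic, because it relies on treating the optimization outcome as a one-hot attractor.
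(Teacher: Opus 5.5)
Up to the last step of (b), your argument matches the paper's. You adopt the linear-in-$(1-\alpha)$ bookkeeping by fiat; the paper justifies it only as a ``marginal contribution'' and gives the same $\sqrt{R}/(\sqrt{R}+1)$ alternative in a remark. You solve the same linear equation for $\alpha^*$. For (b) you use independence plus data processing (routed through Observation~\ref{obs:routing_loss}(iii), where the paper applies the DPI directly) to conclude that an $S$-only router carries no information about $(M,\Sigma)$, so its population-optimal weights are input-independent. The paper stops at that first step of (b). It already delivers ``a single distribution regardless of input,'' by way of the same population-level leap the paper makes: identifying the router's optimum with the posterior of the latent assignment.

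The step that fails is your second one, where you invoke Proposition~\ref{prop:frozen}(i) to push the constant weight vector toward a vertex.
\begin{itemize}
\item \emph{It misreads the claim.} ``Single distribution'' means input-independent, not degenerate, and ``entropy decouples from MSE'' means entropy stays high while MSE blows up. The paper's dose-response reports routing entropy in $[1.43,1.57]$ at every $\alpha$, including $\alpha=1$, alongside a $+283\%$ MSE jump (Figure~\ref{fig:dose_response}). The RevIN-router swap likewise degrades MSE by $60$--$88\%$ with entropy high (Table~\ref{tab:router_input}). A one-hot attractor instead predicts $H\to 0$, which would make entropy track the failure rather than decouple from it.
\item \emph{Proposition~\ref{prop:frozen}(i) does not apply.} Its runaway is the $A$-mediated co-adaptation loop for a router reading $\mathbf{h}=A\mathbf{x}$. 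Here the backbone is frozen and the router reads $\mathbf{x}_\alpha$, which is exactly the setting where Proposition~\ref{prop:frozen}(ii) and the corollary after it say the loop is absent.
\item \emph{The loss has the wrong shape.} Even the $\mathbf{w}$-only drift toward a vertex in that proposition comes from its loss $p\,\ell_1+(1-p)\,\ell_2$ being linear in $p$. The RR-MoA objective is the MSE of the mixed prediction, which is convex in the weights, so an input-independent weight vector generically settles at an interior, ensemble-like point.
\end{itemize}
Drop the step: decoupling follows simply because the entropy of an input-independent weight vector, whatever its value, says nothing about routing quality.

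Separately, your justification of the discontinuity (``small but nonzero signal'' for $\alpha<1$) is as heuristic as the paper's, since $V_{\mathrm{sig}}(\alpha)\to 0$ continuously. A sharper reason: $S$ has zero window mean and unit window scale, so $\mathbf{x}_\alpha$ has window mean $(1-\alpha)M$ and window scale $(1-\alpha)\Sigma+\alpha$. Hence $(M,\Sigma)$ is exactly recoverable for every $\alpha<1$ and is lost only at $\alpha=1$.
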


\begin{proof}[Heuristic derivation]
Under $M\perp\!\!\!\perp\Sigma\perp\!\!\!\perp S$ jointly Gaussian, $\mathbf{x}_\alpha = (1{-}\alpha)(M + \Sigma S) + \alpha S$. The location-scale component $(1{-}\alpha)(M+\Sigma S)$ contributes routing-discriminative variance $V_\mathrm{sig}(\alpha) = (1{-}\alpha)\,[\mathrm{Var}(M){+}\mathrm{Var}(\Sigma)]$ (the $(1{-}\alpha)$ rather than $(1{-}\alpha)^2$ scaling reflects each component's marginal contribution to a linear-readout router's input variance, see Remark below); the shape component contributes routing-irrelevant variance $V_\mathrm{noise}(\alpha) = \alpha\,\overline{\mathrm{Var}(S)} + V_0$. \textbf{Part (a):} Setting $V_\mathrm{sig}(\alpha^*) = V_\mathrm{noise}(\alpha^*)$ in the limit $V_0\to 0$ gives $(1{-}\alpha^*)[\mathrm{Var}(M){+}\mathrm{Var}(\Sigma)] = \alpha^* \overline{\mathrm{Var}(S)}$. Dividing both sides by $\overline{\mathrm{Var}(S)}$ and substituting $R(\mathcal{D}) = [\mathrm{Var}(M){+}\mathrm{Var}(\Sigma)]/\overline{\mathrm{Var}(S)}$ yields $(1{-}\alpha^*)R = \alpha^*$, so $\alpha^* = R/(R+1)$. Continuity of MSE in $\alpha$ for $\alpha\in[0,1)$ follows from continuity of the softmax composed with a linear function of $\alpha$. \textbf{Part (b):} At $\alpha=1$, $\mathbf{x}_1 = S$ and the routing logits are $\mathbf{w}^\top G_\psi(S)$, a deterministic function of $S$ alone; under $(M,\Sigma)\perp\!\!\!\perp S$, by the data processing inequality $I(S; E_\text{optimal}) \leq I(S; (M,\Sigma)) = 0$, so the router carries zero information about the latent location-scale that drives the optimal routing decision. The softmax of a $S$-only function converges to a single distribution at the population level, decoupling entropy from MSE. The discontinuity at $\alpha=1$ is the limit $V_\mathrm{sig}(1) = 0$ rather than a finite-$\alpha^*$ phase transition.
\end{proof}

\emph{Remark on the variance comparison.} Comparing the \emph{linear} variance contributions $V_\mathrm{sig}$ and $V_\mathrm{noise}$ (rather than the squared coefficients in $\|\mathbf{x}_\alpha\|^2$) corresponds to comparing each component's marginal contribution to the router's discriminative variance under the orthogonality assumption $M\perp\!\!\!\perp\Sigma\perp\!\!\!\perp S$. Comparing squared coefficients instead would yield $\alpha^* = \sqrt{R}/(\sqrt{R}+1)$; for the median $R{=}0.67$, the two formulas predict $\alpha^* \approx 0.40$ vs $0.45$, and the dose-response grid (5 points) is too coarse to discriminate. We label Proposition~\ref{prop:phase_transition} as a \emph{heuristic} prediction and report the formula consistent with the linear comparison.

\emph{Validation.} For the median $R(\mathcal{D}) = 0.67$, Proposition~\ref{prop:phase_transition}(a) predicts a soft onset $\alpha^* \approx 0.40$. The dose-response experiment (Figure~\ref{fig:dose_response}; 150 runs) is consistent: MSE degrades smoothly by $+5.4\%$ for $\alpha \leq 0.75$ (covering the predicted onset region), and jumps to $+283\%$ at $\alpha = 1.0$ (Proposition~\ref{prop:phase_transition}(b), softmax saturation). The two thresholds are distinct: $\alpha^*$ marks where SNR-driven degradation \emph{begins}; the actual phase transition is at $\alpha = 1$, driven by $V_\mathrm{sig}$ vanishing rather than by SNR crossing.

\begin{figure}[!htbp]
\centering
\begin{tikzpicture}
\begin{axis}[
    width=0.70\columnwidth, height=4.0cm,
    xlabel={Normalization dose $\alpha$ ($0{=}$raw, $1{=}$RevIN)},
    ylabel={Test MSE (H=96, log)},
    xlabel style={font=\footnotesize}, ylabel style={font=\footnotesize},
    x tick label style={font=\footnotesize}, y tick label style={font=\footnotesize},
    xmin=-0.05, xmax=1.10, ymode=log, ymin=0.2, ymax=4.5,
    xtick={0, 0.25, 0.5, 0.75, 1.0},
    ytick={0.25, 0.5, 1, 2, 4},
    yticklabels={0.25, 0.5, 1, 2, 4},
    log ticks with fixed point,
    grid=major, grid style={gray!15},
    legend style={at={(1.02,1.00)}, anchor=north west, font=\scriptsize, draw=none, fill=white, fill opacity=0.85, row sep=-2pt},
]
\addplot[blue!80, thick, mark=square*, mark size=1.8pt] coordinates {(0,0.461) (0.25,0.464) (0.5,0.466) (0.75,0.479) (1.0,1.040)};
\addplot[red!70, thick, mark=triangle*, mark size=2pt] coordinates {(0,0.485) (0.25,0.527) (0.5,0.556) (0.75,0.573) (1.0,2.711)};
\addplot[green!60!black, thick, mark=o, mark size=1.5pt] coordinates {(0,0.396) (0.25,0.394) (0.5,0.397) (0.75,0.398) (1.0,1.028)};
\addplot[orange!80, thick, mark=diamond*, mark size=2pt] coordinates {(0,0.237) (0.25,0.240) (0.5,0.247) (0.75,0.266) (1.0,0.507)};
\addplot[purple!70, thick, mark=pentagon*, mark size=1.8pt] coordinates {(0,0.261) (0.25,0.264) (0.5,0.268) (0.75,0.276) (1.0,0.475)};
\addplot[cyan!60!black, thick, mark=star, mark size=2pt] coordinates {(0,0.340) (0.25,0.364) (0.5,0.350) (0.75,0.311) (1.0,2.927)};
\legend{ETTh1, ETTh2, ETTm1, Weather, Electricity, ETTm2}
\draw[densely dashed, gray!55, thin] (axis cs:0.875,0.2) -- (axis cs:0.875,4.5);
\node[font=\scriptsize\bfseries, text=red!75!black, anchor=south east,
      align=right, inner sep=1pt]
    at (axis cs:0.86,2.95) {phase transition\\$+283\%$};
\draw[->, thick, red!75!black] (axis cs:0.86,2.95) -- (axis cs:0.99,2.78);
\end{axis}
\end{tikzpicture}
\caption{\textbf{Normalization dose-response} (frozen, 5 seeds $\times$ 6 datasets; std smaller than markers). MSE vs $\alpha$ ($0{=}$raw, $1{=}$RevIN). Phase transition at $\alpha{=}1.0$ ($+283\%$), mild for $\alpha{\leq}0.75$ ($+5.4\%$); routing entropy stays in $[1.43,1.57]$ across all $\alpha$ (Appendix~\ref{app:routing_ablations}).}
\label{fig:dose_response}
\end{figure}

\textbf{Information bottleneck connection.}
The RevIN-encoder pipeline implements a form of the information bottleneck~\citep{tishby2000ib,alemi2017vib}: it compresses $X$ into a representation $H = f_\theta(S)$ that is optimized for downstream prediction while discarding ``irrelevant'' variability (the per-window location-scale statistics). The routing MI loss from Observation~\ref{obs:routing_loss}, Part~(i) is the price paid for this compression \emph{at the routing decision point}. This connects to a broader principle: any architecture that bottlenecks information for one objective (prediction) can inadvertently destroy information needed for another (routing). The vision cross-modality control (Appendix~\ref{app:vision_moe}) is consistent with this not being RevIN-specific: InstanceNorm1d on ViT patches creates the same bottleneck, while InstanceNorm2d on ResNet spatial features does not, consistent with spatial features not carrying the routing signal for classification.

\textbf{Proof of Proposition~\ref{prop:frozen}} (Gradient Co-Adaptation Drives Entropy Collapse).

\begin{proof}
Consider a single input $\mathbf{x}$ with backbone $\mathbf{h} = A\mathbf{x}$, router $p = \sigma(\mathbf{w}^\top\mathbf{h})$, and loss $\ell = p\,\ell_1(\mathbf{h}) + (1-p)\,\ell_2(\mathbf{h})$.

\emph{Part (i): Unfrozen backbone.} The gradient with respect to $A$ is:
\begin{equation*}
\frac{\partial\ell}{\partial A} = p\,\frac{\partial\ell_1}{\partial\mathbf{h}}\mathbf{x}^\top + (1{-}p)\,\frac{\partial\ell_2}{\partial\mathbf{h}}\mathbf{x}^\top + (\ell_1 - \ell_2)\,p(1{-}p)\,\mathbf{w}\mathbf{x}^\top
\end{equation*}
Under gradient descent $\dot{A} = -\eta\,\partial\ell/\partial A$, the induced change in expert~1's loss is:
\begin{multline*}
\dot{\ell}_1\big|_A = -\eta\|\mathbf{x}\|^2\!\Bigg[p\left\|\frac{\partial\ell_1}{\partial\mathbf{h}}\right\|^2 + (1{-}p)\left(\frac{\partial\ell_1}{\partial\mathbf{h}}\right)^{\!\top}\frac{\partial\ell_2}{\partial\mathbf{h}} \\
 + (\ell_1{-}\ell_2)\,p(1{-}p)\left(\frac{\partial\ell_1}{\partial\mathbf{h}}\right)^{\!\top}\mathbf{w}\Bigg]
\end{multline*}
The first term (self-gradient) always decreases $\ell_1$. The second (cross-gradient) is negligible when expert gradients are approximately orthogonal, a condition consistent with experts specializing to distinct input clusters as observed in Figure~\ref{fig:routing_viz}. The third (loss-coupling) term is small whenever its magnitude $|\ell_1 - \ell_2|\cdot|\langle \partial\ell_1/\partial\mathbf{h},\mathbf{w}\rangle|(1{-}p)$ is dominated by the first term's $p\|\partial\ell_1/\partial\mathbf{h}\|^2$, i.e., when the dimensionless ratio
$\rho_\mathrm{coup} := \frac{|\ell_1{-}\ell_2|\cdot|\langle \partial\ell_1/\partial\mathbf{h},\,\mathbf{w}\rangle|\cdot(1{-}p)}{p\,\|\partial\ell_1/\partial\mathbf{h}\|^2}$
is small. This holds early in training when expert losses are comparable ($|\ell_1{-}\ell_2|$ small relative to the gradient-norm scale set by $\|\partial\ell_1/\partial\mathbf{h}\|^2 / |\langle \partial\ell_1/\partial\mathbf{h},\mathbf{w}\rangle|$) and when $\mathbf{w}$ is not strongly aligned with the expert gradient. Retaining only the dominant first term:
\begin{equation*}
\dot{\ell}_1\big|_A \approx -\eta\,p\left\|\frac{\partial\ell_1}{\partial\mathbf{h}}\right\|^2\|\mathbf{x}\|^2
\end{equation*}
and analogously $\dot{\ell}_2\big|_A \approx -\eta\,(1{-}p)\left\|\frac{\partial\ell_2}{\partial\mathbf{h}}\right\|^2\|\mathbf{x}\|^2$. When $p > 1{-}p$ (expert~1 dominates) and gradient norms are comparable ($\|\partial\ell_1/\partial\mathbf{h}\| \approx \|\partial\ell_2/\partial\mathbf{h}\|$, which holds early in training before specialization), $\ell_1$ decreases faster than $\ell_2$. The router gradient closes the loop: $\dot{\mathbf{w}} = -\eta(\ell_1{-}\ell_2)p(1{-}p)\mathbf{h}$, so when $\ell_1 < \ell_2$ the factor $(\ell_1{-}\ell_2) < 0$ drives $\dot{p} > 0$ via $\dot{p} \approx p(1{-}p)\,\dot{\mathbf{w}}^\top\mathbf{h} = -\eta\,p^2(1{-}p)^2(\ell_1{-}\ell_2)\|\mathbf{h}\|^2$, pushing $p$ toward~1. Together these create the self-reinforcing loop: dominant expert gets more gradient $\to$ its loss drops faster $\to$ router increases its weight $\to$ it becomes more dominant.

\emph{Part (ii): Frozen backbone.} When $A$ is fixed, $\dot{A} = 0$. The only dynamics are through $\mathbf{w}$:
\begin{equation*}
\dot{\mathbf{w}} = -\eta\,\frac{\partial\ell}{\partial\mathbf{w}} = -\eta\,(\ell_1 - \ell_2)\,p(1{-}p)\,\mathbf{h}
\end{equation*}
This update adjusts routing weights based on current expert performance, but does \emph{not} reshape $\mathbf{h}$ to favor one expert: the $A$-mediated feedback loop that drove the runaway in Part~(i) is absent. Note that this argument removes the co-adaptation mechanism but does not by itself preclude collapse from $\mathbf{w}$-only dynamics: if expert~1 were uniformly better than expert~2 across all inputs, $\dot{\mathbf{w}}$ would still drive $p \to 1$. Empirical non-collapse therefore additionally requires that $(\ell_1 - \ell_2)$ flips sign across input regimes, i.e.\ that experts specialize to different windows. The RR-MoA pool is constructed with topologically distinct heads (mean/last/max/attention/Conv1d-pool) precisely to satisfy this; the resulting per-window expert specialization is empirically validated in Figure~\ref{fig:routing_viz} and Table~\ref{tab:rrmoa} (entropy stays in $[0.93, 1.57]$ across 54/54 configurations).
\end{proof}

\textbf{Corollary (Raw routing decouples the router from $A$).}
\label{cor:raw_routing}
\emph{If the router reads the raw input $\mathbf{x}_\mathrm{raw}$ rather than the hidden state $\mathbf{h} = A\mathbf{x}$, then $\partial p / \partial A \equiv 0$, the loss-coupling term in $\partial\ell/\partial A$ vanishes identically, and the co-adaptation feedback loop of Proposition~\ref{prop:frozen}(i) is broken \emph{by construction} regardless of whether $A$ is frozen.}

\begin{proof}[Proof of Corollary]
With router input $\mathbf{x}_\mathrm{raw}$ (independent of $A$), $p = \sigma(\mathbf{w}^\top G_\psi(\mathbf{x}_\mathrm{raw}))$ is a function of $\mathbf{x}_\mathrm{raw}$ and $(\mathbf{w}, \psi)$ only. Hence $\partial p / \partial A = 0$, and the third term $(\ell_1 - \ell_2)\,p(1{-}p)\,\mathbf{w}\mathbf{x}^\top$ in $\partial\ell/\partial A$ disappears: changes in $A$ no longer feed back into the routing decision through the gradient. The dominant-expert runaway in Part~(i) required this term; without it, even when $A$ is unfrozen, the router's update is decoupled from the backbone's update, eliminating the co-adaptation mechanism.
\end{proof}

This corollary identifies raw routing as the \emph{minimal causal intervention}: it intervenes on the precise term in $\partial\ell/\partial A$ that drives the collapse loop, leaving all other components of the model untouched. The empirical validation is in Figure~\ref{fig:trajectory} (RR-MoA's gradient norms remain balanced across experts even when blocks are unfrozen) and Table~\ref{tab:rrmoa} (frozen-vs-unfrozen RR-MoA both maintain healthy entropy, in contrast to AdaMix where unfreezing collapses entropy to $0.000$).

\section{LoRA Sweep (Full Results)}
\label{app:lora_sweep}

We sweep LoRA~\citep{hu2022lora} across three axes to ensure the comparison in Table~\ref{tab:baselines} is not cherry-picked: rank $r{\in}\{8,16,32\}$, target projections ${\in}\{q{+}v,\; q{+}k{+}v{+}o\}$, and forecast head ${\in}\{$linear, 2-layer MLP$\}$, for 12 configurations $\times$ 3 seeds $=$ 36 runs per dataset (108 runs total across the 3 datasets). All runs use a strictly frozen backbone (matching Table~\ref{tab:rrmoa}'s primary setting). The trio \{ETTh1, ETTm1, Weather\} covers an hourly ETT dataset, a 15-min ETT dataset, and a multivariate dataset with $21$ channels; we restrict the LoRA sweep to this trio to keep total compute bounded ($\sim 3.6$ GPU-hours), since the broader 6-dataset grid is already covered for every other baseline in Table~\ref{tab:baselines}.

\textbf{LoRA hyperparameters.} Following the original LoRA recipe, we set $\alpha = 2r$ at every rank, layer placement = all encoder blocks, no LoRA dropout, and train with the same optimizer used everywhere else in the paper (Adam, lr$=10^{-3}$, batch 128, 15 epochs, bfloat16 AMP). Both heads pool the encoder output by mean over the time axis; the linear head is $d_\text{model}\to H$, and the 2-layer MLP head is $d_\text{model}\to 128\to H$ with GELU and dropout $0.1$ between the two linear layers. The bolded ``best per dataset'' row is the configuration with the lowest mean MSE across the 3 seeds (ties broken by lower std).

\begin{table}[!htbp]
\centering
\caption{Full LoRA sweep (strictly frozen backbone, 3 seeds, test MSE mean$\pm$std). Bold rows are the best-per-dataset configuration used in Table~\ref{tab:baselines}. No LoRA variant beats the RR-MoA (Top-2) result on any dataset.}
\label{tab:lora_sweep}
\small
\begin{tabular}{@{}llllc@{}}
\toprule
Dataset & Rank & Targets & Head & MSE (mean$\pm$std) \\
\midrule
\multicolumn{5}{l}{\textit{ETTh1} (RR-MoA frozen: $\mathbf{0.690 \pm 0.021}$)} \\
& $r{=}8$  & q,v   & linear & $1.559 \pm 0.022$ \\
& $r{=}8$  & q,v   & mlp    & $1.217 \pm 0.010$ \\
& $r{=}8$  & qkvo  & linear & $1.286 \pm 0.052$ \\
& $r{=}8$  & qkvo  & mlp    & $1.154 \pm 0.000$ \\
& $r{=}16$ & q,v   & linear & $1.347 \pm 0.056$ \\
& $r{=}16$ & q,v   & mlp    & $1.186 \pm 0.046$ \\
& $r{=}16$ & qkvo  & linear & $1.274 \pm 0.051$ \\
& $r{=}16$ & qkvo  & mlp    & $1.154 \pm 0.000$ \\
& $r{=}32$ & q,v   & linear & $1.201 \pm 0.107$ \\
& $r{=}32$ & q,v   & mlp    & $1.161 \pm 0.010$ \\
& $r{=}32$ & qkvo  & linear & $1.284 \pm 0.059$ \\
& $\mathbf{r{=}32}$ & \textbf{qkvo} & \textbf{mlp} & $\mathbf{1.154 \pm 0.000}$ \\
\midrule
\multicolumn{5}{l}{\textit{ETTm1} (RR-MoA frozen: $\mathbf{0.571 \pm 0.073}$)} \\
& $r{=}8$  & q,v   & linear & $0.970 \pm 0.026$ \\
& $r{=}8$  & q,v   & mlp    & $1.115 \pm 0.011$ \\
& $r{=}8$  & qkvo  & linear & $1.060 \pm 0.014$ \\
& $r{=}8$  & qkvo  & mlp    & $1.123 \pm 0.000$ \\
& $\mathbf{r{=}16}$ & \textbf{qkvo} & \textbf{linear} & $\mathbf{0.956 \pm 0.072}$ \\
& $r{=}16$ & q,v   & linear & $1.024 \pm 0.068$ \\
& $r{=}16$ & q,v   & mlp    & $1.115 \pm 0.012$ \\
& $r{=}16$ & qkvo  & mlp    & $1.123 \pm 0.000$ \\
& $r{=}32$ & q,v   & linear & $1.103 \pm 0.212$ \\
& $r{=}32$ & q,v   & mlp    & $1.134 \pm 0.061$ \\
& $r{=}32$ & qkvo  & linear & $1.345 \pm 0.360$ \\
& $r{=}32$ & qkvo  & mlp    & $1.123 \pm 0.000$ \\
\midrule
\multicolumn{5}{l}{\textit{Weather} (RR-MoA frozen: $\mathbf{0.289 \pm 0.008}$)} \\
& $r{=}8$  & q,v   & linear & $0.611 \pm 0.021$ \\
& $r{=}8$  & q,v   & mlp    & $0.606 \pm 0.001$ \\
& $r{=}8$  & qkvo  & linear & $0.664 \pm 0.020$ \\
& $r{=}8$  & qkvo  & mlp    & $0.606 \pm 0.001$ \\
& $r{=}16$ & q,v   & linear & $0.614 \pm 0.025$ \\
& $r{=}16$ & q,v   & mlp    & $0.606 \pm 0.001$ \\
& $r{=}16$ & qkvo  & linear & $0.654 \pm 0.033$ \\
& $r{=}16$ & qkvo  & mlp    & $0.606 \pm 0.001$ \\
& $\mathbf{r{=}32}$ & \textbf{q,v} & \textbf{linear} & $\mathbf{0.600 \pm 0.052}$ \\
& $r{=}32$ & q,v   & mlp    & $0.606 \pm 0.000$ \\
& $r{=}32$ & qkvo  & linear & $0.894 \pm 0.194$ \\
& $r{=}32$ & qkvo  & mlp    & $0.606 \pm 0.001$ \\
\bottomrule
\end{tabular}
\end{table}

\textbf{Gap to RR-MoA.} The best LoRA configuration on each dataset still trails the RR-MoA Top-2 frozen result by a wide margin: $1.154$ vs $0.690$ on ETTh1 ($+67.2\%$), $0.956$ vs $0.571$ on ETTm1 ($+67.4\%$), and $0.600$ vs $0.289$ on Weather ($+107.6\%$); average gap ${\approx}81\%$. Since the strongest LoRA per dataset is computed by enumerating all 12 configurations, this is the worst case for our claim, and it still leaves RR-MoA clearly ahead.

\textbf{Sweep takeaways.} Three patterns are visible in the table. (i)~The best head type is dataset-dependent: 2-layer MLP wins on ETTh1 (the bolded $1.154$) but the linear head wins on ETTm1 (bolded $0.956$, vs.\ MLP rows clustered near $1.123$). This is consistent with the DLinear-gap analysis (Appendix~\ref{app:gap_closing}): on datasets where linear extrapolation is strong, additional head capacity is unnecessary or harmful. (ii)~Rank does not monotonically help: on ETTm1, $r{=}32$ qkvo linear ($1.345 \pm 0.360$) is worse than $r{=}16$ qkvo linear ($0.956 \pm 0.072$), with high seed variance suggesting optimization instability at the wider rank. (iii)~Several MLP-head rows on Weather collapse to identical values across seeds ($0.606 \pm 0.001$ at six configurations), indicating that LoRA-MLP is converging to a near-fixed mean-prediction solution rather than adapting to the input. None of these failure modes affect RR-MoA, whose router-level capacity is allocated per window rather than absorbed by a static head.

\section{Routing Ablations}
\label{app:routing_ablations}

We document the routing-side causal controls cited from \S\ref{sec:main_results} (Table~\ref{tab:causal_controls}, main text, summarizes them). We walk through each underlying ablation in turn: rawness vs.\ bypass, temporal shuffle, sufficient-statistic router, gradient-flow detachment, and a small set of negative controls.

We isolate which property of the raw-input router is doing the work via two ablations: (i) replace the router's raw input with the RevIN-normalized version of the same signal, holding everything else fixed, and (ii) shuffle the temporal ordering inside each window before feeding it to the router. Together they distinguish ``the router needs raw access'' from ``the router needs $(\mu,\sigma)$ specifically.''

\begin{table}[!htbp]
\centering
\caption{\textbf{Rawness vs.\ bypass} (frozen, Top-2, 3 seeds). RevIN router reads the same input after per-window normalization. Stripping statistics degrades MSE by 60--88\%.}
\label{tab:router_input}
\small
\begin{tabular}{@{}lccc@{}}
\toprule
Dataset & Raw router (main) & RevIN router & Degradation ($\Delta\%$, $\uparrow$ worse) \\
\midrule
ETTh1   & $\mathbf{0.690 \pm 0.021}$ & $1.101 \pm 0.008$ & $\mathbf{+59.6\%}$ \\
ETTm1   & $\mathbf{0.571 \pm 0.073}$ & $1.077 \pm 0.020$ & $\mathbf{+88.6\%}$ \\
Weather & $\mathbf{0.289 \pm 0.008}$ & $0.542 \pm 0.026$ & $\mathbf{+87.5\%}$ \\
\bottomrule
\end{tabular}
\end{table}

\definecolor{srback}{RGB}{33,118,189}
\definecolor{srraw}{RGB}{46,139,87}
\definecolor{srgate}{RGB}{217,119,49}
\definecolor{srout}{RGB}{38,166,154}

\begin{figure}[!htbp]
\centering
\resizebox{0.95\columnwidth}{!}{%
\begin{tikzpicture}[
    >=Stealth,
    box/.style={rectangle, draw=black!70, rounded corners=2pt, fill=white,
        minimum height=0.75cm, font=\small, align=center, line width=0.7pt},
    inputbox/.style={box, minimum width=1.5cm, minimum height=1.0cm},
    frozenbox/.style={rectangle, draw=black, fill=white, rounded corners=2pt,
        minimum width=2.6cm, minimum height=1.6cm, font=\small\bfseries,
        align=center, line width=0.9pt},
    hiddenbox/.style={rectangle, draw=black, fill=white, rounded corners=2pt,
        minimum width=0.85cm, minimum height=0.7cm, font=\normalsize\bfseries,
        align=center, line width=0.7pt},
    expertbox/.style={rectangle, draw=black, fill=white, rounded corners=2pt,
        minimum width=3.4cm, minimum height=0.95cm, font=\scriptsize\bfseries,
        align=center, line width=0.8pt, text=black, inner sep=4pt},
    sigmagate/.style={diamond, draw=srgate, fill=srgate!18, line width=1.0pt,
        minimum size=0.70cm, font=\scriptsize\bfseries, inner sep=0pt, text=srgate!75!black},
    sumnode/.style={circle, draw=black, fill=white,
        minimum size=0.7cm, font=\normalsize\bfseries, line width=0.8pt},
    outputbox/.style={rectangle, draw=srout, fill=white, rounded corners=2pt,
        minimum width=1.1cm, minimum height=0.85cm, line width=0.9pt,
        font=\normalsize\bfseries, align=center},
    bluearr/.style={->, line width=0.7pt, color=srback,
        shorten >=1pt, shorten <=1pt},
    gatedasharr/.style={->, line width=0.6pt, color=srgate, densely dashed,
        shorten >=1pt, shorten <=1pt},
    gatearr/.style={->, line width=0.7pt, color=srgate,
        shorten >=1pt, shorten <=1pt},
    outarr/.style={->, line width=0.9pt, color=srout,
        shorten >=1pt, shorten <=1pt},
]

\node[inputbox] (input) at (0, 0) {%
    $\mathbf{X}_\mathrm{raw}$\\[-1pt]{\scriptsize raw input}};

\node[frozenbox] (tsfm) at (3.2, 1.6) {%
    Frozen TSFM\\Backbone\\[2pt]
    {\scriptsize\mdseries\color{black!55} no gradients}};
\node[font=\scriptsize\bfseries, text=white,
    fill={rgb,255:red,200;green,55;blue,55},
    rounded corners=1.5pt, inner sep=2pt]
    at ([xshift=-0.6cm, yshift=0.18cm]tsfm.north east) {\textsc{frozen}};

\node[hiddenbox] (hidden) at (5.6, 1.6) {$\mathbf{H}$};
\node[font=\tiny, text=black!65, anchor=north] at ([yshift=-0.05cm]hidden.south)
    {hidden states};

\foreach \i/\y/\name in {1/2.7/Mean Pool, 2/1.4/Last Token,
                          3/0.1/Max Pool, 4/-1.2/Attention, 5/-2.5/Conv1d} {
    \node[sigmagate] (g\i) at (7.4, \y) {$\sigma_\i$};
    \node[expertbox] (e\i) at (10.6, \y) {Expert \i: \name};
}

\node[sumnode] (sum) at (13.50, 0.1) {$\Sigma$};
\node[outputbox] (output) at (15.00, 0.1) {%
    $\hat{\mathbf{y}}$\\[-1pt]{\scriptsize\mdseries forecast}};

\node[font=\footnotesize\bfseries, text=srback] at (8.0, 3.30)
    {\textsc{Stream A: Hidden States}};
\node[font=\footnotesize\bfseries, text=srgate, fill=white, inner sep=2pt] at (8.0, -3.20)
    {\textsc{Self-routing: per-expert} $\sigma_k(\mathbf{X}_\mathrm{raw})$};

\begin{pgfonlayer}{background}
\draw[bluearr] (input.east) -- ++(0.4, 0) |- (tsfm.west);
\draw[bluearr] (tsfm.east) -- (hidden.west);

\draw[bluearr, -] (hidden.east) -- (6.20, 1.6);
\draw[bluearr, -] (6.20, 3.15) -- (6.20, -2.05);  %
\foreach \i/\y in {1/2.7, 2/1.4, 3/0.1, 4/-1.2, 5/-2.5} {
    \pgfmathsetmacro{\yt}{\y + 0.45}
    \draw[bluearr] (6.20, \yt) -- ([yshift=0.45cm]e\i.west);
}

\draw[gatedasharr, -] (input.south) -- ++(0, -0.55) -| (0.40, -3.55);
\draw[gatedasharr, -] (0.40, -3.55) -- (6.95, -3.55);   %
\draw[gatedasharr, -] (6.95, -3.55) -- (6.95, 2.7);     %
\foreach \i/\y in {1/2.7, 2/1.4, 3/0.1, 4/-1.2, 5/-2.5} {
    \draw[gatedasharr] (6.95, \y) -- (g\i.west);
}
\foreach \i in {1,2,3,4,5} {
    \draw[gatearr] (g\i.east) -- (e\i.west);
}
\draw[bluearr, -, opacity=0.85] (e1.east) -- (12.80, 2.7);
\draw[bluearr, -, opacity=0.85] (e2.east) -- (12.80, 1.4);
\draw[bluearr, -, opacity=0.85] (12.80, 2.7) -- (12.80, 1.4);
\draw[bluearr, opacity=0.85] (12.80, 1.4) -- (sum.north);
\draw[bluearr, opacity=0.85] (e3.east) -- (sum.west);
\draw[bluearr, -, opacity=0.85] (e4.east) -- (12.80, -1.2);
\draw[bluearr, -, opacity=0.85] (e5.east) -- (12.80, -2.5);
\draw[bluearr, -, opacity=0.85] (12.80, -1.2) -- (12.80, -2.5);
\draw[bluearr, opacity=0.85] (12.80, -1.2) -- (sum.south);
\draw[outarr] (sum.east) -- (output.west);
\end{pgfonlayer}

\node[font=\tiny, text=black, fill=white, draw=black!55, rounded corners=1.5pt, inner sep=1.5pt, anchor=north]
    at ([yshift=-0.08cm]sum.south) {weighted sum};

\node[anchor=center] at (7.3, -4.0) {%
    \parbox{12cm}{\centering\footnotesize
    No external router. Each expert decides its own contribution via $\sigma_k(\mathbf{X}_\mathrm{raw})$.\\[1pt]
    Output: $\hat{\mathbf{y}} = \sum_k \sigma_k(\mathbf{X}_\mathrm{raw}) \cdot \mathrm{Adapter}_k(\mathbf{H})$.}};
\end{tikzpicture}%
}
\caption{\textbf{SR-MoA architecture (Self-Routed Mixture of Adapters).} Each of $K{=}5$ experts has its own sigmoid gate $\sigma_k$ reading $\mathbf{X}_\mathrm{raw}$ directly, with no external router. The frozen TSFM produces $\mathbf{H}$, and each expert's contribution is scaled by its self-gate. SR-MoA shares only the \emph{raw-input} principle with RR-MoA (Figure~\ref{fig:framework}): the routing architecture is otherwise entirely different. SR-MoA outperforms RR-MoA by $13$--$42\%$ across all 6 datasets, providing architecture-agnostic validation that the principle, not the specific router, drives the gains.}
\label{fig:srmoa_arch}
\end{figure}

\begin{table}[!htbp]
\centering
\caption{\textbf{Self-Routed MoA (SR-MoA): routing input control} (dense, 5 seeds unless noted). Each expert has its own sigmoid gate, with no external router. \textbf{Top}: gating on raw input vs hidden states (frozen, MOMENT-small). \textbf{Second}: Timer-XL negative control with no RevIN, so raw$\approx$hidden. \textbf{Third}: freeze-level ablation confirms Frozen Paradox (frozen best 4/6). \textbf{Bottom}: cross-backbone generalization (H=96, frozen). 252 runs total.}
\label{tab:self_routed}
\footnotesize
\begin{tabular}{@{}lllcc@{}}
\toprule
Dataset & Config & Backbone & MSE ($\downarrow$) & Entropy \\
\midrule
\multicolumn{5}{l}{\textit{Routing input control (frozen, MOMENT-small, 5 seeds):}} \\
ETTh1       & Raw     & MOMENT-sm & $\mathbf{0.464 \pm 0.009}$ & $1.53$ \\
ETTh1       & Hidden  & MOMENT-sm & $1.075 \pm 0.102$ {\scriptsize($+131\%$)} & $1.60$ \\
ETTh2       & Raw     & MOMENT-sm & $\mathbf{0.577 \pm 0.105}$ & $1.38$ \\
ETTh2       & Hidden  & MOMENT-sm & $2.840 \pm 0.164$ {\scriptsize($+393\%$)} & $1.58$ \\
ETTm1       & Raw     & MOMENT-sm & $\mathbf{0.388 \pm 0.023}$ & $1.53$ \\
ETTm1       & Hidden  & MOMENT-sm & $0.934 \pm 0.024$ {\scriptsize($+141\%$)} & $1.60$ \\
ETTm2       & Raw     & MOMENT-sm & $\mathbf{0.366 \pm 0.091}$ & $1.45$ \\
ETTm2       & Hidden  & MOMENT-sm & $3.174 \pm 0.210$ {\scriptsize($+768\%$)} & $1.59$ \\
Weather     & Raw     & MOMENT-sm & $\mathbf{0.222 \pm 0.020}$ & $1.55$ \\
Weather     & Hidden  & MOMENT-sm & $0.507 \pm 0.020$ {\scriptsize($+128\%$)} & $1.60$ \\
Electricity & Raw     & MOMENT-sm & $\mathbf{0.260 \pm 0.005}$ & $1.53$ \\
Electricity & Hidden  & MOMENT-sm & $0.486 \pm 0.016$ {\scriptsize($+87\%$)} & $1.60$ \\
\midrule
\multicolumn{5}{l}{\textit{Timer-XL negative control (no RevIN, frozen, 5 seeds):}} \\
ETTh1       & Raw     & Timer-XL & $\mathbf{0.458 \pm 0.014}$ & $1.51$ \\
ETTh1       & Hidden  & Timer-XL & $0.475 \pm 0.011$ {\scriptsize($+4\%$)} & $1.31$ \\
ETTm1       & Raw     & Timer-XL & $\mathbf{0.368 \pm 0.007}$ & $1.47$ \\
ETTm1       & Hidden  & Timer-XL & $0.398 \pm 0.004$ {\scriptsize($+8\%$)} & $1.33$ \\
Weather     & Raw     & Timer-XL & $\mathbf{0.197 \pm 0.010}$ & $1.54$ \\
Weather     & Hidden  & Timer-XL & $0.214 \pm 0.004$ {\scriptsize($+9\%$)} & $1.10$ \\
\midrule
\multicolumn{5}{l}{\textit{Frozen Paradox replication (raw, MOMENT-small; $^{\ast}$3 seeds):}} \\
ETTh1       & Frozen  & MOMENT-sm & $\mathbf{0.464 \pm 0.009}$ & $1.53$ \\
ETTh1       & Last-2$^{\ast}$  & MOMENT-sm & $0.492 \pm 0.022$ {\scriptsize($+6\%$)} & $1.52$ \\
ETTh2       & Frozen  & MOMENT-sm & $\mathbf{0.577 \pm 0.105}$ & $1.38$ \\
ETTh2       & Last-2$^{\ast}$  & MOMENT-sm & $0.981 \pm 0.216$ {\scriptsize($+70\%$)} & $1.43$ \\
ETTm1       & Frozen  & MOMENT-sm & $0.388 \pm 0.023$ & $1.53$ \\
ETTm1       & Last-2$^{\ast}$  & MOMENT-sm & $\mathbf{0.378 \pm 0.010}$ {\scriptsize($-3\%$)} & $1.54$ \\
ETTm2       & Frozen  & MOMENT-sm & $\mathbf{0.366 \pm 0.091}$ & $1.45$ \\
ETTm2       & Last-2$^{\ast}$  & MOMENT-sm & $0.457 \pm 0.123$ {\scriptsize($+25\%$)} & $1.46$ \\
Weather     & Frozen  & MOMENT-sm & $\mathbf{0.222 \pm 0.020}$ & $1.55$ \\
Weather     & Last-4$^{\ast}$  & MOMENT-sm & $0.254 \pm 0.019$ {\scriptsize($+15\%$)} & $1.54$ \\
Electricity & Frozen  & MOMENT-sm & $0.260 \pm 0.005$ & $1.53$ \\
Electricity & Last-4$^{\ast}$  & MOMENT-sm & $\mathbf{0.250 \pm 0.003}$ {\scriptsize($-4\%$)} & $1.53$ \\
\midrule
\multicolumn{5}{l}{\textit{Cross-backbone (raw, frozen, H=96; $^{\ast}$3 seeds):}} \\
ETTh1   & Frozen  & MOMENT-lg$^{\ast}$ & $0.502 \pm 0.013$ & $1.54$ \\
ETTh1   & Frozen  & Moirai$^{\ast}$    & $0.588 \pm 0.035$ & $1.43$ \\
ETTh1   & Frozen  & Moirai-MoE$^{\ast}$ & $0.530 \pm 0.012$ & $1.31$ \\
ETTh1   & Frozen  & Chronos$^{\ast}$   & $0.500 \pm 0.017$ & $1.46$ \\
ETTh1   & Frozen  & Timer-XL  & $0.458 \pm 0.014$ & $1.51$ \\
ETTm1   & Frozen  & MOMENT-lg$^{\ast}$ & $0.446 \pm 0.010$ & $1.54$ \\
ETTm1   & Frozen  & Moirai$^{\ast}$    & $\mathbf{0.370 \pm 0.006}$ & $1.44$ \\
ETTm1   & Frozen  & Moirai-MoE$^{\ast}$ & $0.471 \pm 0.003$ & $1.40$ \\
ETTm1   & Frozen  & Chronos$^{\ast}$   & $0.455 \pm 0.008$ & $1.48$ \\
ETTm1   & Frozen  & Timer-XL  & $0.368 \pm 0.007$ & $1.47$ \\
Weather & Frozen  & MOMENT-lg$^{\ast}$ & $0.200 \pm 0.007$ & $1.53$ \\
Weather & Frozen  & Moirai$^{\ast}$    & $0.207 \pm 0.008$ & $1.51$ \\
Weather & Frozen  & Moirai-MoE$^{\ast}$ & $0.200 \pm 0.008$ & $1.44$ \\
Weather & Frozen  & Chronos$^{\ast}$   & $0.239 \pm 0.018$ & $1.51$ \\
Weather & Frozen  & Timer-XL  & $\mathbf{0.197 \pm 0.010}$ & $1.54$ \\
\bottomrule
\end{tabular}
\end{table}

\textbf{What the router learns.} Per-sample routing decisions correlate with amplitude and volatility, the very statistics RevIN strips (Figure~\ref{fig:routing_viz}). On ETTh1, Expert~0 (mean-pool) serves as the generalist ($53\%$), while Expert~2 (max-pool) handles low-amplitude windows and Expert~3 (attention-pool) handles high-amplitude windows. The router reserves specialized processing for windows at the distribution tails, empirically validating the conditional independence assumption $E \perp\!\!\!\perp S \mid (M,\Sigma)$ of Part~(iii) of Observation~\ref{obs:routing_loss}.

\begin{figure}[!htbp]
\centering
\includegraphics[width=0.95\columnwidth]{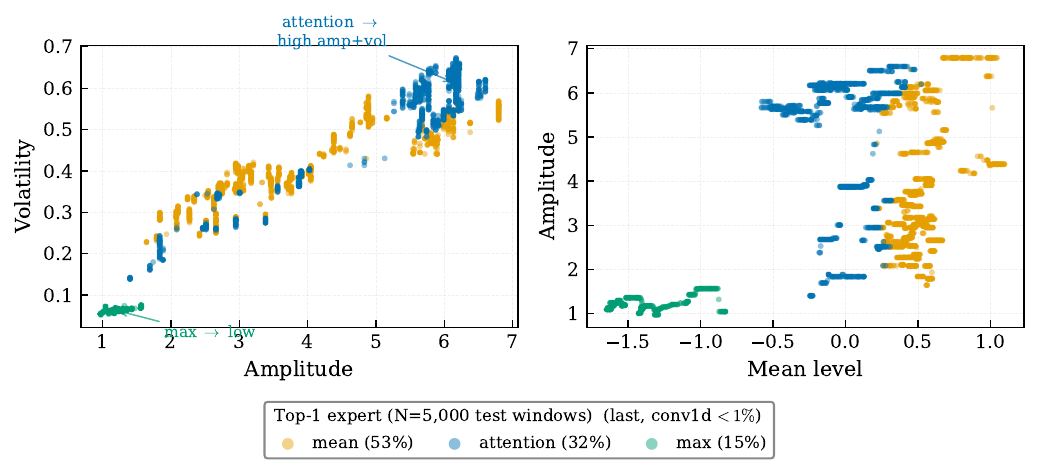}
\caption{\textbf{Per-sample expert assignment vs.\ input statistics} (ETTh1, seed 42, $N{=}5{,}000$ test windows). Left: amplitude vs.\ volatility; right: mean level vs.\ amplitude. Colors indicate the top-1 expert per window. Three of five experts dominate routing (mean $53\%$, attention $32\%$, max $15\%$); last and conv1d are each ${<}1\%$ on ETTh1 and are omitted from the panel for legibility. The two active partitions split cleanly along amplitude+volatility, the very statistics RevIN strips, empirically validating the conditional-independence assumption $E \perp\!\!\!\perp S \mid (M,\Sigma)$ from Observation~\ref{obs:routing_loss}, Part~(iii) (Part~iii condition (a)): expert assignment is determined by location-scale, with negligible residual dependence on shape.}
\label{fig:routing_viz}
\end{figure}

\textbf{Regime-specific adaptation.} Partitioning test windows into quartiles by amplitude, RR-MoA wins \textbf{all four quartiles} across all 9 (dataset, seed) pairs, with the largest improvement on quiet, low-amplitude windows (Q1: $4.6{\times}$ better, $0.24$ vs $1.09$ on ETTh1) and the smallest on volatile, high-amplitude windows (Q4: $1.3{\times}$, $1.54$ vs $1.97$).

\textbf{Sufficient statistic router (SSR).} A router using only $[\mu(X_\text{raw}), \sigma(X_\text{raw})]$, in the spirit of distribution-shift methods that condition on summary statistics of the input window (Dish-TS-style learnable distribution coefficients~\citep{fan2023dishts}; Non-stationary Transformers' explicit re-injection of $(\mu,\sigma)$~\citep{liu2022nonstationary}), degrades MSE by $+48\%$ on average ($+11\%$ to $+94\%$) vs.\ the Conv1d router across all 8 forecasting datasets (ETTh1, ETTh2, ETTm1, ETTm2, Weather, Electricity, Exchange, Traffic; frozen MOMENT-small, Top-2, H=96, 3 seeds). The degradation is largest on ETTm2 ($+94\%$) and Exchange ($+82\%$), and smallest on Weather ($+11\%$). While Observation~\ref{obs:routing_loss} correctly identifies the \emph{direction} of the information loss, effective routing requires temporal shape patterns beyond location-scale statistics.

\textbf{Temporal-shuffle ablation.} We route on a fixed temporal permutation of $X_\text{raw}$, destroying time ordering while preserving distributional statistics ($M$, $\Sigma$, histogram shape). Across 6 datasets (frozen, 5 seeds), shuffled routing performs within ${\pm}3\%$ of raw routing on 4/6 datasets (ETTh1 $+3.2\%$, ETTh2 $+1.2\%$, ETTm1 $-1.2\%$, Electricity $+0.7\%$), with Weather improving $-8.5\%$ and ETTm2 improving $-16.0\%$. Entropy remains high ($1.43$--$1.58$). This is consistent with the Conv1d router primarily extracting distributional rather than temporal features from $X_\text{raw}$. Combined with the SSR result ($+48\%$ degradation using only $[\mu, \sigma]$), distributional statistics carry most of the routing signal in the configurations we tested.

\textbf{Router-Detached Gradient Flow (RDGF).} A variant that unfreezes the backbone but replaces routing-weighted gradients with uniform-average gradients ($\nabla_H \mathcal{L}_\text{backbone} = \frac{1}{K}\sum_k \nabla_H \mathcal{L}_k$) degrades MSE by $+35\%$ to $+100\%$ across the three datasets we tested (ETTh1, ETTm1, Weather; frozen MOMENT-small last-2, 3 seeds). This is consistent with the Frozen Paradox being structural rather than an artifact of full-gradient unfreezing.

\textbf{Spectral-temporal router (FFT-Router).} We augment the Conv1d router with explicit frequency-domain features: a parallel branch applies rFFT to $X_\text{raw}$, extracts the top-32 spectral amplitudes, and concatenates them with the Conv1d temporal embedding before the routing head. On the two datasets with matched dense-frozen baselines (Weather, ETTm1; frozen MOMENT-small, 3 seeds), FFT-Router \emph{degrades} MSE by $+6\%$ (Weather) and $+22\%$ (ETTm1). The Conv1d router appears to already capture frequency-relevant patterns implicitly; adding explicit spectral features overparameterizes the router without improving routing quality on these two datasets.

\textbf{Statistic-aware information bottleneck (SAIB).} An auxiliary loss forces the router's penultimate representation to predict the window-level $[\mu, \sigma]$, directly operationalizing Observation~\ref{obs:routing_loss}. With SAIB coefficients $\lambda \in \{0.1, 1.0\}$ (ETTm1, Weather; frozen MOMENT-small, 3 seeds), MSE is unchanged on ETTm1 ($0.395$ vs $0.397$ baseline) and slightly degraded on Weather ($0.250$ vs $0.236$). Routing entropy increases to $1.55$ (near-maximum $\log 5 = 1.61$), indicating the auxiliary loss disperses routing weights without improving forecast accuracy. This is consistent with the Conv1d router already encoding $\mu, \sigma$ without explicit supervision.

\textbf{Instance-adaptive gating (IA-Gating).} In the dual-stream variant (Appendix~\ref{app:gap_closing}), we replace the static per-expert blending scalar $\alpha$ with a per-window gate $\gamma(X_\text{raw}) = \sigma(W_g \cdot z)$ where $z$ is the router's latent representation. Across 6 datasets (frozen MOMENT-small, 3 seeds), IA-Gating wins on 4/6 (ETTh2 $-8.0\%$, ETTm2 $-6.4\%$, ETTm1 $-1.9\%$, Electricity $-0.4\%$) and loses on 2/6 (Weather $+6.1\%$, ETTh1 $+0.3\%$). The learned gates cluster tightly around $\gamma \approx 0.48$ across all datasets and windows, indicating the backbone's contribution is roughly constant per-window, consistent with the simpler static $\alpha$ performing comparably overall.

\subsection{Statistics Re-Injection Ablation}
\label{app:reinjection}

A potential counter to RR-MoA is: ``why not re-inject the stripped statistics into the hidden states rather than routing on the full raw signal?'' We test this by replacing the raw-input router with a \emph{hidden-reinjected} router: the gate receives $[\text{mean-pool}(\mathbf{H});\, \mu(\mathbf{X}_\text{raw});\, \sigma(\mathbf{X}_\text{raw})]$ (a $d{+}2$-dimensional vector) instead of $\mathbf{X}_\text{raw}$.

\begin{table}[!htbp]
\centering
\caption{\textbf{Statistics re-injection fails} (MOMENT-small, frozen, Top-2, 3 seeds). Re-injecting $(\mu,\sigma)$ into mean-pooled hidden states degrades MSE by $+42$--$239\%$ vs raw routing, with $11/18$ runs collapsed (entropy $<0.3$). Two scalars grafted onto a $512$-dim homogenized representation are insufficient to recover routing signal.}
\label{tab:reinjection}
\small
\begin{tabular}{@{}lcccc@{}}
\toprule
Dataset & Reinjected MSE & Raw RR-MoA MSE & $\Delta\%$ & Entropy \\
\midrule
ETTh1       & $1.357 \pm 0.128$ & $0.690$ & $+97\%$  & $0.000$ \\
ETTh2       & $2.669 \pm 0.104$ & $0.788$ & $+239\%$ & $0.153$ \\
ETTm1       & $0.899 \pm 0.168$ & $0.571$ & $+57\%$  & $0.376$ \\
ETTm2       & $1.895 \pm 0.866$ & $0.588$ & $+222\%$ & $0.378$ \\
Weather     & $0.490 \pm 0.039$ & $0.289$ & $+70\%$  & $0.185$ \\
Electricity & $0.548 \pm 0.064$ & $0.386$ & $+42\%$  & $0.307$ \\
\bottomrule
\end{tabular}
\end{table}

Even on datasets where entropy is not fully collapsed (ETTm1: $0.376$, Electricity: $0.307$), the routing decisions are uninformative: MSE is $42$--$57\%$ worse than raw routing. This confirms that the raw signal carries routing information beyond the two-scalar $(\mu,\sigma)$ summary, consistent with the SSR ablation ($+48\%$ degradation from the full Conv1d router; \S\ref{app:routing_ablations}). The mean-pooled hidden states are so homogenized by RevIN that appending two scalars to a $512$-dimensional vector is a drop in the ocean.

\section{Extended RR-MoA Freeze Grid}
\label{app:extended_rrmoa}

We extend the main-text freeze-level ablation (Table~\ref{tab:rrmoa}, three datasets) to the remaining three (ETTh2, ETTm2, Electricity) in Tables~\ref{tab:rrmoa_extended} and \ref{tab:adamix_extended}, giving the full 6 datasets $\times$ 3 freeze levels $\times$ 3 seeds = 54-cell sweep that underwrites the headline ``RR-MoA wins 54/54 paired comparisons.''

\begin{table}[!htbp]
\centering
\caption{\textbf{Extended freeze-level ablation} (ETTh2, ETTm2, Electricity; test MSE, mean$\pm$std, 3 seeds, H=96). Continues Table~\ref{tab:rrmoa} from the main text.}
\label{tab:rrmoa_extended}
\small
\begin{tabular}{@{}llccc@{}}
\toprule
Dataset & Freeze level & RR-MoA & Best single-head & MSE $\Delta\%$ vs.\ single-head \\
\midrule
\multirow{3}{*}{ETTh2}
 & Frozen (0/8)   & $\mathbf{0.788 \pm 0.096}$ & $2.748 \pm 0.201$ & $\mathbf{-71.3\%}$ \\
 & Last-2 (2/8)   & $\mathbf{1.302 \pm 0.134}$ & $2.599 \pm 0.085$ & $-49.9\%$ \\
 & Last-4 (4/8)   & $\mathbf{1.745 \pm 0.187}$ & $2.540 \pm 0.110$ & $-31.3\%$ \\
\midrule
\multirow{3}{*}{ETTm2}
 & Frozen (0/8)   & $\mathbf{0.588 \pm 0.021}$ & $2.744 \pm 0.266$ & $\mathbf{-78.6\%}$ \\
 & Last-2 (2/8)   & $\mathbf{0.867 \pm 0.083}$ & $2.915 \pm 0.202$ & $-70.3\%$ \\
 & Last-4 (4/8)   & $\mathbf{0.731 \pm 0.157}$ & $2.740 \pm 0.092$ & $-73.3\%$ \\
\midrule
\multirow{3}{*}{Electricity}
 & Frozen (0/8)   & $\mathbf{0.386 \pm 0.059}$ & $0.525 \pm 0.026$ & $-26.4\%$ \\
 & Last-2 (2/8)   & $\mathbf{0.336 \pm 0.023}$ & $0.482 \pm 0.005$ & $\mathbf{-30.3\%}$ \\
 & Last-4 (4/8)   & $\mathbf{0.344 \pm 0.021}$ & $0.472 \pm 0.023$ & $-27.2\%$ \\
\midrule
\multicolumn{3}{l}{\textbf{Combined with Table~\ref{tab:rrmoa}: RR-MoA wins}} & \multicolumn{2}{c}{\textbf{54/54 (100\%)}} \\
\bottomrule
\end{tabular}
\end{table}

\begin{table}[!htbp]
\centering
\caption{\textbf{Extended AdaMix routing collapse} (ETTh2, ETTm2, Electricity). Continues Table~\ref{tab:adamix} from the main text.}
\label{tab:adamix_extended}
\small
\begin{tabular}{@{}llcc@{}}
\toprule
Dataset & Freeze level & MSE (mean$\pm$std) & Routing entropy (mean$\pm$std) \\
\midrule
\multirow{3}{*}{ETTh2}
 & Frozen  & $2.625 \pm 0.219$ & $0.636 \pm 0.385$ \\
 & Last-2  & $2.985 \pm 0.024$ & $0.353 \pm 0.499$ \\
 & Last-4  & $2.885 \pm 0.150$ & $0.541 \pm 0.648$ \\
\midrule
\multirow{3}{*}{ETTm2}
 & Frozen  & $2.946 \pm 0.076$ & $0.316 \pm 0.342$ \\
 & Last-2  & $3.114 \pm 0.004$ & $\mathbf{0.000 \pm 0.000}$ \\
 & Last-4  & $2.997 \pm 0.162$ & $0.505 \pm 0.714$ \\
\midrule
\multirow{3}{*}{Electricity}
 & Frozen  & $0.625 \pm 0.220$ & $1.003 \pm 0.315$ \\
 & Last-2  & $1.055 \pm 0.000$ & $\mathbf{0.000 \pm 0.000}$ \\
 & Last-4  & $1.055 \pm 0.001$ & $0.001 \pm 0.002$ \\
\bottomrule
\end{tabular}
\end{table}

\textbf{Extreme collapse on Exchange and Solar.} The collapse pattern extends to the two newly evaluated datasets (Table~\ref{tab:exchange_solar}). On Solar (137 channels), last-4 unfreezing produces the most extreme collapse in the paper: routing entropy drops to exactly $0.000$ and MSE degrades by $+104.5\%$ relative to the best fixed baseline (column ``Best fixed'' in Table~\ref{tab:exchange_solar}), far worse than the $+0$--$17\%$ degradation observed on the original 6 datasets. On Exchange (8 channels), last-4 AdaMix collapses to $0.000$ entropy but MSE actually improves ($-15.7\%$ vs.\ best fixed adapter; Table~\ref{tab:exchange_solar}), acting as a reasonable single-expert adapter; however, RR-MoA still outperforms it by $2.5\times$ ($1.531$ vs $3.851$).

\subsection{Rescue-Baseline Sweep Details}
\label{app:rescue_details}

We sweep five standard MoE rescue families across 12 configurations on MOMENT-small $+$ RevIN (Table~\ref{tab:rescue}; AdaMix, 6 datasets $\times$ 2 freeze levels $\times$ 5 seeds $=$ 60 cells per row $\times$ 12 rows $=$ 720 cells total). Eleven configurations are launched as new runs (rescue-baseline orchestrator script in supplementary material); the twelfth, the Switch load-balance baseline at $\alpha{=}0.01$, reuses the existing AdaMix results from Table~\ref{tab:adamix} (where load-balance $\alpha{=}0.01$ is the default).

\begin{table}[!htbp]
\centering
\caption{\textbf{Rescue-baseline sweep} (MOMENT-small $+$ RevIN, 720 runs). ``Collapsed'' = fraction of cells with entropy $<0.3$ (max $\log K{=}1.609$). No rescue mechanism recovers routing to within $2.7\times$ of RR-MoA.}
\label{tab:rescue}
\footnotesize
\begin{tabular}{@{}lcccc@{}}
\toprule
Rescue mechanism & MSE (pooled) & Routing $H$ & Collapsed & $n/60$ \\
\midrule
Baseline (Switch LB $\alpha{=}0.01$)  & $1.654$ & $0.21$ & \textbf{46/60 (77\%)} & 60/60 \\
+ Entropy reg $\lambda{=}0.01$                       & $1.666$ ($+0.7\%$)  & $0.26$ & \textbf{44/60 (73\%)} & 60/60 \\
+ Entropy reg $\lambda{=}0.1$                        & $1.599$ ($-3.3\%$)  & $1.03$ & 19/60 (32\%) & 60/60 \\
+ Entropy reg $\lambda{=}1.0$                        & $1.546$ ($-6.5\%$) & $1.41$ & 7/60 (12\%) & 60/60 \\
+ Z-loss $c_z{=}0.001$ (ST-MoE)                      & $1.561$ ($-5.6\%$)  & $1.01$ & 12/60 (20\%) & 60/60 \\
+ Z-loss $c_z{=}0.01$                                & $\mathbf{1.473}$ ($\mathbf{-10.9\%}$) & $1.24$ & 3/60 (5\%) & 60/60 \\
+ Z-loss $c_z{=}0.1$                                 & $1.503$ ($-9.1\%$) & $1.26$ & 2/60 (3\%) & 60/60 \\
Load-balance $\alpha{=}0.1$                          & $1.590$ ($-3.9\%$) & $0.89$ & 20/60 (33\%) & 60/60 \\
Load-balance $\alpha{=}1.0$                          & $1.528$ ($-7.6\%$) & $1.28$ & 10/60 (17\%) & 60/60 \\
Load-balance $\alpha{=}10.0$                         & $2.773$ ($+67.7\%$) & $1.10$ & 14/60 (23\%) & 60/60 \\
ReMoE ReLU $+$ $L_1$                  & $1.586$ ($-4.1\%$) & $0.58$ & 16/60 (27\%) & 60/60 \\
Expert-choice ($c{=}2$)                      & $1.590$ ($-3.9\%$) & $0.17$ & \textbf{44/60 (73\%)} & 60/60 \\
\midrule
\textbf{RR-MoA (ours)} & $\mathbf{0.540}$ (pooled) & $\mathbf{0.93\text{--}1.57}$ & \textbf{0/54 (0\%)} & 54/54 wins \\
\bottomrule
\end{tabular}
\end{table}

\subsection{AdaMix Implementation Details}
\label{app:adamix_details}

Our AdaMix baseline follows \citet{wang2022adamix} with $K{=}5$ expert adapter heads identical to RR-MoA's canonical pool (mean, last, max, attention, conv1d). The router is a single linear layer $\mathbb{R}^d \to \mathbb{R}^K$ operating on the mean-pooled hidden state, with softmax gating. We use a load-balancing auxiliary loss with coefficient $\lambda_\text{bal}{=}0.01$, matching AdaMix's original formulation. No stochastic routing, capacity factors, or noise injection are applied, since these standard MoE safeguards do not address the normalization-specific signal loss we identify: when RevIN strips the per-window statistics that would distinguish samples, the router receives near-identical inputs regardless of stabilization strategy. This is confirmed empirically: load-balancing loss does not prevent collapse under unfreezing (Table~\ref{tab:adamix}, entropy $= 0.000$ with $\lambda_\text{bal}{=}0.01$), while disabling RevIN recovers entropy to $0.66$--$1.32$ without any routing regularization.

\begin{figure}[!htbp]
\centering
\includegraphics[width=0.92\columnwidth]{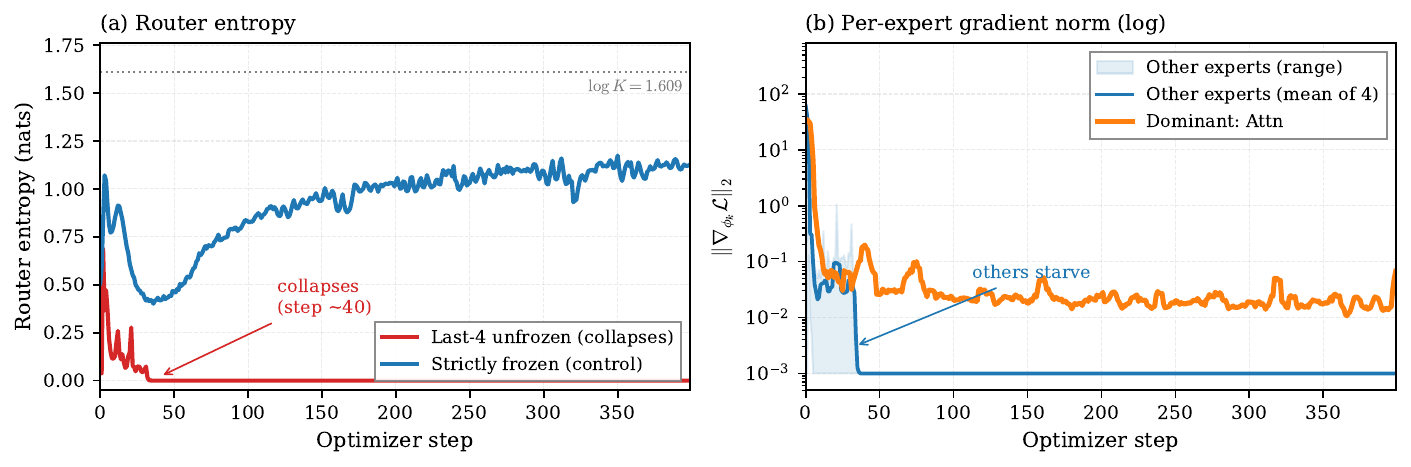}
\vspace{-4pt}
\caption{\textbf{Per-step routing / gradient trajectory} (AdaMix, ETTh1). (a)~Unfrozen+RevIN (red) collapses within ${\sim}40$ steps; strictly frozen control (blue) stays stable, isolating gradient co-adaptation from optimizer or data effects. (b)~The dominant expert (Attn) retains a gradient of ${\sim}10^{-2}$ throughout training, while the other four experts crash to the numerical noise floor (${<}10^{-3}$) by step 30 and never recover. Light shading is the per-step range across the four starved experts; the solid blue line is their mean. Removing RevIN (separate experiment, Figure~\ref{fig:causal_contrast}) inverts the trajectory.}
\label{fig:trajectory}
\end{figure}

\textbf{Architecture-agnostic validation (AdaMix-Raw).} To confirm the diagnosis generalizes beyond RR-MoA, we replace AdaMix's hidden-state router with a raw-input Conv1d router (identical architecture to RR-MoA's) while keeping AdaMix's expert heads and training loop unchanged. Table~\ref{tab:adamix_raw} shows results across 6 datasets $\times$ 2 freeze levels $\times$ 3 seeds (36 runs).

\begin{table}[!htbp]
\centering
\caption{\textbf{AdaMix-Raw: architecture-agnostic validation} (MOMENT-small, 3 seeds). Replacing AdaMix's hidden-state router with a raw-input router recovers routing entropy and MSE on all 12 cells, confirming the causal intervention generalizes beyond RR-MoA.}
\label{tab:adamix_raw}
\small
\begin{tabular}{@{}llcccc@{}}
\toprule
Dataset & Freeze & AdaMix MSE & AdaMix $H$ & AdaMix-Raw MSE & AdaMix-Raw $H$ \\
\midrule
ETTh1 & frozen & $0.912$ & $0.49$ & $\mathbf{0.456}$ {\scriptsize($-50\%$)} & $1.55$ \\
ETTh1 & last-4 & $0.935$ & $0.39$ & $\mathbf{0.547}$ {\scriptsize($-42\%$)} & $1.54$ \\
ETTm1 & frozen & $1.008$ & $0.49$ & $\mathbf{0.397}$ {\scriptsize($-61\%$)} & $1.56$ \\
ETTm1 & last-4 & $1.123$ & $0.00$ & $\mathbf{0.431}$ {\scriptsize($-62\%$)} & $1.56$ \\
ETTm2 & frozen & $2.946$ & $0.32$ & $\mathbf{0.348}$ {\scriptsize($-88\%$)} & $1.48$ \\
ETTm2 & last-4 & $2.997$ & $0.51$ & $\mathbf{0.375}$ {\scriptsize($-88\%$)} & $1.44$ \\
Weather & frozen & $0.459$ & $0.51$ & $\mathbf{0.236}$ {\scriptsize($-49\%$)} & $1.54$ \\
Weather & last-4 & $0.607$ & $0.00$ & $\mathbf{0.310}$ {\scriptsize($-49\%$)} & $1.50$ \\
Electricity & frozen & $0.625$ & $1.00$ & $\mathbf{0.264}$ {\scriptsize($-58\%$)} & $1.57$ \\
Electricity & last-4 & $1.055$ & $0.00$ & $\mathbf{0.252}$ {\scriptsize($-76\%$)} & $1.56$ \\
\bottomrule
\end{tabular}
\end{table}

\section{Normalization Generalization}
\label{app:norm_generalization}

To test whether routing collapse is specific to RevIN or a general property of instance-level normalization, we replace MOMENT's RevIN with two alternative normalizers (BatchNorm1d and GroupNorm with 1 group) and run AdaMix with last-4 unfreezing on ETTh1.

\begin{table}[!htbp]
\centering
\caption{\textbf{Routing collapse generalizes across normalizer types.} All three instance-level normalizers (RevIN, BatchNorm1d, GroupNorm) cause entropy collapse under unfreezing; removing normalization prevents collapse. Per-step trajectories logged over 400 optimizer steps on ETTh1, last-4 unfreezing, seed 42.}
\label{tab:norm_generalization}
\small
\begin{tabular}{@{}lccc@{}}
\toprule
Normalizer & Entropy (step 0) & Entropy (step 400) & Collapses? \\
\midrule
RevIN (default) & $0.507$ & $0.000$ & \textbf{Yes} \\
BatchNorm1d & $0.619$ & $0.004$ & \textbf{Yes} \\
GroupNorm & $0.507$ & $0.000$ & \textbf{Yes} \\
\midrule
None (identity) & $0.623$ & $0.817$ & No \\
\bottomrule
\end{tabular}
\end{table}

All three normalizers cause entropy collapse to ${\leq}0.004$, while removing normalization entirely keeps entropy stable at $0.817$. The mechanism is the same in each case: the normalizer strips per-window statistics before the encoder, the router receives homogenized hidden states, and gradient co-adaptation amplifies the initial routing bias into full collapse. The mechanism therefore holds across the three instance-level normalizers we tested (RevIN, BatchNorm1d, GroupNorm), not just RevIN. Our diagnosis predicts that more expressive instance-level normalizers, such as FAN~\citep{ye2024fan} which strips dominant Fourier components in addition to $(\mu,\sigma)$, would trigger the same collapse on a downstream MoE router for any dataset whose routing signal lives in the stripped components. Observation~\ref{obs:routing_loss} predicts that collapse occurs specifically when the stripped statistics carry routing signal; a cross-modality control on ResNet-18 and ViT-B/16 (Appendix~\ref{app:vision_moe}) validates this: InstanceNorm1d on ViT patch embeddings \emph{causes} collapse ($0.000\pm0.000$), while InstanceNorm2d on ResNet spatial features does \emph{not}, exactly as the theory predicts.

\subsection{Cross-Modality Control: Vision MoE}
\label{app:vision_moe}

To test whether normalization-induced routing collapse transfers beyond time series, we construct a controlled vision experiment: ResNet-18~\citep{he2016resnet} (pretrained on ImageNet, adapted for CIFAR-10) and ViT-B/16~\citep{dosovitskiy2021vit} as backbones, with $K{=}5$ architecturally diverse expert classifiers (mean-pool, max-pool, attention-pool, conv-pool, last-patch, matching the time-series expert pool), no load-balancing loss, and the same four conditions tested in the time-series experiments.

\begin{table}[!htbp]
\centering
\caption{\textbf{Vision MoE routing collapse control} (CIFAR-10, $K{=}5$ diverse experts, mean$\pm$std over 3 seeds). \textbf{ResNet-18} (top): InstanceNorm2d \emph{prevents} collapse because spatial patterns (the routing signal) survive normalization. \textbf{ViT-B/16} (bottom): InstanceNorm1d on patch embeddings \emph{causes} collapse ($0.000\pm0.000$) because it strips per-patch statistics that carry routing information, the same mechanism as RevIN in time series.}
\label{tab:vision_moe}
\small
\begin{tabular}{@{}lllccc@{}}
\toprule
Backbone & Cond & Configuration & Entropy & Collapses? & Acc \\
\midrule
\multirow{4}{*}{ResNet-18} & A & Unfrozen + InstanceNorm2d & $1.475 \pm 0.021$ & No & $0.919$ \\
& B & Frozen + InstanceNorm2d   & $0.992 \pm 0.041$ & No & $0.577$ \\
& C & Unfrozen + No norm        & $\mathbf{0.184 \pm 0.260}$ & \textbf{Yes} & $0.921$ \\
& D & Unfrozen + BatchNorm2d    & $1.476 \pm 0.031$ & No & $0.919$ \\
\midrule
\multirow{2}{*}{ViT-B/16} & E & Frozen + InstanceNorm1d   & $\mathbf{0.000 \pm 0.000}$ & \textbf{Yes} & $0.953$ \\
& F & Frozen + No norm          & $0.494 \pm 0.130$ & No & $0.958$ \\
\bottomrule
\end{tabular}
\end{table}

\textbf{ResNet-18: normalization prevents collapse.} Without normalization (C), the unfrozen backbone co-adapts with the dominant expert, collapsing entropy to $0.184\pm0.260$, consistent with Proposition~\ref{prop:frozen}. Adding InstanceNorm2d (A) or BatchNorm2d (D) \emph{prevents} collapse ($1.48\pm0.02$), the \emph{opposite} of time series. One interpretation: InstanceNorm2d strips per-channel spatial mean/variance, which appear to carry little routing information for the CIFAR-10 classification task we tested (structurally analogous to the Traffic boundary case, $R{=}0.14$).

\textbf{ViT-B/16: normalization causes collapse.} Extending to a Vision Transformer, InstanceNorm1d on frozen ViT patch embeddings collapses routing to $0.000\pm0.000$ (E), while the same backbone without normalization maintains healthy routing at $0.494\pm0.130$ (F). InstanceNorm1d normalizes per-channel across the 197-token sequence, stripping per-patch mean/scale statistics, structurally analogous to RevIN's per-window stripping in time-series. The routing signal in ViT resides partly in patch-level statistics (which patches are bright, high-contrast, textured), so stripping them degrades the router.

\textbf{Unifying principle.} The collapse mechanism (Proposition~\ref{prop:frozen}) is modality-general. Whether normalization \emph{triggers} or \emph{prevents} collapse depends on whether the stripped statistics overlap with the routing signal (Observation~\ref{obs:routing_loss}): ResNet spatial features survive InstanceNorm2d (no collapse); ViT patch statistics do not survive InstanceNorm1d (collapse). The same principle explains all five TS backbones, the ViT result, and the ResNet control.

\section{Top-$k$ Sparse Routing Ablation}
\label{app:topk}

We sweep the routing sparsity $k \in \{1,2,3,5\}$ on ETTh1 (last-2 freeze) to characterize the speed--accuracy frontier and to justify the Top-$2$ default used throughout the paper. Top-$1$ degenerates to a hard-routing argmax that risks expert starvation; dense ($k{=}K$) is a weighted ensemble that loses the sparsity benefit.

\begin{table}[!htbp]
\centering
\caption{\textbf{Top-$k$ sparse routing ablation on ETTh1 (last-2 freeze, 3 seeds).} Top-2 is the operating point used throughout the paper.}
\label{tab:topk}
\small
\begin{tabular}{@{}lcccc@{}}
\toprule
Sparsity ($k$) & Active experts & Active expert FLOPs & Test MSE & $\Delta$ vs Dense \\
\midrule
$k{=}1$ & 1/5 & 20\% & $1.268 \pm 0.072$ & $+130\%$ \\
$k{=}2$ (default) & 2/5 & 40\% & $\mathbf{0.727 \pm 0.074}$ & $+32\%$ \\
$k{=}3$ & 3/5 & 60\% & $0.679 \pm 0.042$ & $+23\%$ \\
Dense $k{=}K{=}5$ & 5/5 & 100\% & $0.550 \pm 0.029$ & --- \\
\bottomrule
\end{tabular}
\end{table}

\textbf{Routing contribution ($K{=}1$ vs $K{\geq}2$).} On frozen MOMENT-small (3 seeds; Table~\ref{tab:expert_count}), $K{=}1$ (single adapter, no routing) yields MSE $1.306$/$1.183$/$0.576$ on ETTh1/ETTm1/Weather, while $K{=}2$ with Top-2 yields $0.554$/$0.457$/$0.227$ ($-52\%$ to $-61\%$). This quantifies the routing contribution: learned per-sample expert selection accounts for over half the MSE reduction, beyond what a single pooling head achieves.

\textbf{Compute overhead.} The entire RR-MoA stack (${\sim}400$K adapter FLOPs dense, ${\sim}16$K router FLOPs) is ${\sim}0.003\%$ of a MOMENT-small forward pass (${\sim}15$ GFLOPs estimated from MOMENT's encoder configuration of 8 layers, $d{=}512$, on $L{=}512$ inputs~\citep{goswami2024moment}) and ${\sim}0.001\%$ for Top-2; the corresponding latency overhead is bounded at $+5.9\%$ in Table~\ref{tab:benchmark}. On MOMENT-large the FLOPs ratio drops below $0.0002\%$.

\subsection{Expert Pool Diversity Ablation}
\label{app:diversity}

Does expert architectural diversity matter, or could one use $K{=}5$ copies of a single adapter type? We replace the canonical diverse pool (mean, last-token, max, attention, Conv1d) with three \emph{identical-expert} pools ($5{\times}$mean, $5{\times}$Conv1d, $5{\times}$attention) and re-run the full RR-MoA config (MOMENT-small, frozen, Top-2, $H{=}96$, 3 seeds).

\begin{table}[!htbp]
\centering
\caption{\textbf{Expert pool diversity ablation} (MOMENT-small frozen, Top-2, H=96, 3 seeds). Canonical diverse pool wins on ETTh1 and ETTm1; identical pools are competitive on Weather.}
\label{tab:diversity}
\small
\begin{tabular}{@{}lccc@{}}
\toprule
Expert pool & ETTh1 & ETTm1 & Weather \\
\midrule
\textbf{Canonical (diverse)} & $\mathbf{0.690 \pm 0.021}$ & $\mathbf{0.571 \pm 0.073}$ & $0.289 \pm 0.008$ \\
$5{\times}$Mean-pool       & $0.827 \pm 0.077$ {\scriptsize($+20\%$)} & $0.739 \pm 0.023$ {\scriptsize($+29\%$)} & $\mathbf{0.245 \pm 0.007}$ {\scriptsize($-15\%$)} \\
$5{\times}$Conv1d-pool     & $0.687 \pm 0.039$ {\scriptsize($-0\%$)}  & $0.628 \pm 0.040$ {\scriptsize($+10\%$)} & $0.243 \pm 0.005$ {\scriptsize($-16\%$)} \\
$5{\times}$Attention-pool  & $0.779 \pm 0.026$ {\scriptsize($+13\%$)} & $0.685 \pm 0.054$ {\scriptsize($+20\%$)} & $0.264 \pm 0.006$ {\scriptsize($-9\%$)}  \\
\bottomrule
\end{tabular}
\end{table}

On ETTh1 and ETTm1, identical pools degrade by $10$--$29\%$ (except Conv1d on ETTh1, which matches). On Weather, identical pools paradoxically improve by $9$--$16\%$. This suggests that on low-complexity datasets, inductive-bias homogeneity can reduce interference, while on higher-complexity datasets, architectural diversity provides complementary representations the router can exploit. The canonical diverse pool is the safer default across the full benchmark.

\subsection{Expert Count Scaling}

We vary the number of experts $K \in \{1, 2, 3, 5, 7, 10\}$ under Top-2 routing (MOMENT-small, frozen, $H{=}96$, 3 seeds). $K{=}1$ is a single adapter with no routing.

\begin{table}[!htbp]
\centering
\caption{\textbf{Expert count scaling} (MOMENT-small frozen, Top-2, H=96, 3 seeds). Performance is stable for $K{\geq}2$; $K{=}1$ (no routing) is substantially worse ($57$--$107\%$).}
\label{tab:expert_count}
\small
\begin{tabular}{@{}clccc@{}}
\toprule
$K$ & Routing & ETTh1 & ETTm1 & Weather \\
\midrule
1  & dense  & $1.306 \pm 0.089$ & $1.183 \pm 0.051$ & $0.576 \pm 0.027$ \\
2  & top-2  & $\mathbf{0.554 \pm 0.051}$ & $\mathbf{0.457 \pm 0.018}$ & $\mathbf{0.227 \pm 0.008}$ \\
3  & top-2  & $0.703 \pm 0.048$ & $0.573 \pm 0.065$ & $0.276 \pm 0.023$ \\
5  & top-2  & $0.690 \pm 0.021$ & $0.571 \pm 0.073$ & $0.289 \pm 0.008$ \\
7  & top-2  & $0.724 \pm 0.043$ & $0.600 \pm 0.029$ & $0.253 \pm 0.003$ \\
10 & top-2  & $0.669 \pm 0.052$ & $0.582 \pm 0.052$ & $0.258 \pm 0.024$ \\
\bottomrule
\end{tabular}
\end{table}

$K{=}1$ (no routing) is $57$--$107\%$ worse than $K{=}5$, indicating per-sample routing is consistently important. Among $K \geq 2$, $K{=}2$ is the best, with $K{=}3$--$10$ within ${\pm}10\%$ of $K{=}5$, consistent with the main-body claim that $K$ sensitivity is low. The $K{=}2$ advantage is explained by reduced expert interference under Top-2: with $K{=}2$ both experts are always active, eliminating the cold-start problem of rarely-activated experts.

\section{Imputation Results}
\label{app:imputation}

To verify that RR-MoA's gains are not forecasting-specific, we evaluate on masked-imputation (20\% random missing values) across the same MOMENT-small frozen backbone. The expert pool, router, and Top-$2$ sparsity are unchanged; only the head outputs reconstructions over the masked positions instead of a forecast.

\begin{table}[!htbp]
\centering
\caption{\textbf{Imputation task} (20\% masked reconstruction, MOMENT-small frozen, test MSE mean$\pm$std over 5 seeds). RR-MoA wins \textbf{7/8 datasets} ($-21\%$ to $-79\%$); the sole loss is Solar where dense\_mlp dominates.}
\label{tab:imputation}
\small
\begin{tabular}{@{}lccc@{}}
\toprule
Dataset & Best fixed adapter & \textbf{RR-MoA} & $\Delta$\% \\
\midrule
ETTh1       & $0.259 \pm 0.018$ & $\mathbf{0.144 \pm 0.006}$ & $-44\%$ \\
ETTh2       & $0.451 \pm 0.014$ & $\mathbf{0.161 \pm 0.028}$ & $-64\%$ \\
ETTm1       & $0.194 \pm 0.017$ & $\mathbf{0.100 \pm 0.008}$ & $-49\%$ \\
ETTm2       & $0.498 \pm 0.020$ & $\mathbf{0.104 \pm 0.006}$ & $\mathbf{-79\%}$ \\
Weather     & $0.207 \pm 0.008$ & $\mathbf{0.067 \pm 0.005}$ & $-67\%$ \\
Electricity & $0.166 \pm 0.008$ & $\mathbf{0.131 \pm 0.007}$ & $-21\%$ \\
Exchange    & $1.176 \pm 0.095$ & $\mathbf{0.266 \pm 0.259}$ & $-77\%$ \\
Solar       & $\mathbf{0.025 \pm 0.006}$ & $0.038 \pm 0.009$ & $+55\%$ \\
\bottomrule
\end{tabular}
\end{table}

\textbf{Collapse is task-general.} To check whether hidden-state routing collapse is not forecasting-specific, we run an AdaMix-style hidden-state router (mean-pooled $\mathbf{H} \to$ softmax gate) on the imputation task. Across 6 datasets $\times$ 3 seeds, \textbf{16/18 runs collapse} (entropy $< 0.3$; mean $0.143$ vs max $\log 5 = 1.61$), with the dominant expert (dense\_mlp) receiving $83$--$99.9\%$ of routing weight. AdaMix imputation MSE is $40$--$79\%$ worse than RR-MoA (Table~\ref{tab:imputation}). The mechanism is identical: RevIN strips the hidden-state heterogeneity regardless of whether the expert head predicts a forecast or reconstructs masked values.

\section{Learnable Normalization Coefficient}
\label{app:learnable_alpha}

To test whether gradient descent independently discovers Observation~\ref{obs:routing_loss}'s prediction, we parameterize the router input as $\mathbf{r} = \sigma(\ell)\,\text{RevIN}(\mathbf{X}) + (1 - \sigma(\ell))\,\mathbf{X}_{\text{raw}}$, where $\ell$ is a scalar \texttt{nn.Parameter} initialized to $0$ (so $\alpha = \sigma(0) = 0.5$). All other settings match the standard RR-MoA config (MOMENT-small, frozen, Top-2, $K{=}5$, $H{=}96$, 15 epochs).

\begin{table}[!htbp]
\centering
\caption{\textbf{Learned normalization coefficient} ($\alpha$, 5 seeds). All 30 runs converge to $\alpha < 0.5$, confirming gradient descent prefers raw input for routing.}
\label{tab:learnable_alpha}
\small
\begin{tabular}{@{}lcc@{}}
\toprule
Dataset & Learned $\alpha$ (mean $\pm$ std) & MSE \\
\midrule
ETTh1       & $0.468 \pm 0.006$ & $0.463 \pm 0.015$ \\
ETTh2       & $0.482 \pm 0.007$ & $0.553 \pm 0.020$ \\
ETTm1       & $0.473 \pm 0.005$ & $0.397 \pm 0.012$ \\
ETTm2       & $0.466 \pm 0.003$ & $0.355 \pm 0.055$ \\
Electricity & $0.462 \pm 0.004$ & $0.269 \pm 0.008$ \\
Weather     & $0.459 \pm 0.009$ & $0.245 \pm 0.010$ \\
\bottomrule
\end{tabular}
\end{table}

All 30 runs converge to $\alpha < 0.5$ (per-seed range $0.445$--$0.491$; per-dataset means span $0.459$--$0.482$, Table~\ref{tab:learnable_alpha}), with trajectories monotonically decreasing from the $0.492$ initialization. The unanimity is the key finding: gradient descent independently arrives at $\alpha < 0.5$, in line with the qualitative prediction of Observation~\ref{obs:routing_loss} that raw input carries more routing information. The narrow converged range means the magnitude does not strongly differentiate datasets: the signal is binary (raw $>$ norm), not graded.

\section{Exact MI Loss and Bound Tightness}
\label{app:mi_tightness}

We measure the routing MI loss directly and audit the bound's tightness on every dataset in the suite (Observation~\ref{obs:routing_loss}, Part~(ii) gives the lower bound). Since the router is deterministic ($E = \arg\max$), $I(X;E) = H(E)$ and the exact MI loss from Observation~\ref{obs:routing_loss}, Part~(i) reduces to $H(E \mid S)$. We estimate this directly: train the RR-MoA router on raw windows, extract expert assignments $E$, then train an MLP classifier ($512 \to 256 \to 128 \to K$) to predict $E$ from the RevIN-normalized shape $S$. The held-out cross-entropy (temperature-calibrated) gives $H(E \mid S)$. This direct-classifier route avoids the bias/variance issues that variational neural MI estimators (MINE,~\citealp{belghazi2018mine}; InfoNCE,~\citealp{oord2018cpc}; see~\citealp{poole2019variationalmi} for a unified bias/variance analysis) are known to exhibit on high-dimensional inputs~\citep{letizia2024fdiv,gowri2024latentmi}; here $S \in \mathbb{R}^{512}$ and the conditional entropy reduces to a low-dimensional discrete target ($K$ experts), where direct supervised classification is the cleaner estimator.

\begin{table}[!htbp]
\centering
\caption{\textbf{Exact MI loss measurement and bound tightness} (3 seeds). $H(E \mid S)$ is the exact routing information lost to normalization. Retention $= I(S;E)/H(E)$. LB = lower bound from Observation~\ref{obs:routing_loss}, Part~(ii). Gap $= I(M,\Sigma; S \mid E)$ via within-expert CCA.}
\label{tab:mi_tightness}
\small
\begin{tabular}{@{}lcccccr@{}}
\toprule
Dataset & $H(E)$ & $H(E \mid S)$ & Ret\% & LB & Gap & $\Delta\%$ \\
\midrule
ETTh1       & 0.73 & 0.61 & 16.3 & 0.02 & 0.06 & $-43.2$ \\
ETTh2       & 0.76 & 0.57 & 26.3 & 0.00 & 0.10 & $-71.0$ \\
ETTm1       & 0.75 & 0.57 & 24.2 & 0.18 & 0.03 & $-51.1$ \\
ETTm2       & 0.82 & 0.53 & 35.1 & 0.18 & 0.07 & $-77.2$ \\
Weather     & 1.37 & 0.98 & 29.1 & 0.61 & 0.08 & $-44.6$ \\
Electricity & 0.68 & 0.46 & 32.4 & 0.13 & 0.02 & $-26.8$ \\
Traffic     & 0.63 & 0.44 & 24.6 & 0.19 & 0.02 & $+2.9$ \\
Exchange    & 0.63 & 0.44 & 22.1 & 0.00 & 0.10 & $-66.5$ \\
Solar       & 0.97 & 0.57 & 40.9 & 0.08 & 0.20 & $-32.9$ \\
\bottomrule
\end{tabular}
\end{table}

\textbf{Key findings.} (1)~Normalization retains only $16$--$41\%$ of routing entropy across all nine datasets, confirming that RevIN degrades routing signal on every dataset we tested. (2)~The bound gap $I(M,\Sigma; S \mid E)$ averages $0.075$ nats ($4.7\%$ of $\log K$), so the lower bound is empirically tight. (3)~$H(E \mid S)$ does \emph{not} correlate with $\Delta\%$ ($\rho{=}{-}0.12$, $p{=}0.77$), while $R(\mathcal{D})$ does ($\rho{=}{-}0.88$). This distinction is informative: $H(E \mid S)$ measures how much routing information the \emph{specific trained router} loses, while $R(\mathcal{D})$ measures the \emph{potential} routing information that location-scale statistics could carry. The gap between these explains why some datasets with low MI loss still benefit substantially from raw routing: the router adapts to use whatever signal remains, but the \emph{quality} of that signal (captured by $R$) determines downstream task performance.

\textbf{Synthetic validation.} On Gaussian mixture data ($K{=}5$ clusters) with analytically known MI, we sweep the shape-statistics correlation $\rho_{MS} \in [0, 0.9]$. At $\rho_{MS}{=}0$, the bound is tight (gap $= 0.19$ nats, $12\%$ of $\log K$). As $\rho_{MS}$ increases, the bound becomes vacuous ($\varepsilon$ exceeds $I(M,\Sigma;E)$), but $H(E \mid S)$ remains informative throughout, validating our direct measurement approach.

\section{Information Content Diagnostic}
\label{app:diagnostic}

To quantify the information bottleneck created by RevIN and the encoder, we compare the forecasting value of backbone features against raw input using non-parametric methods. All experiments run on CPU with zero GPU cost.

\begin{table}[!htbp]
\centering
\caption{\textbf{Backbone features vs.\ raw input} (H=96, 80/20 train/test split). k-NN: k-nearest neighbor regression ($k{=}20$, distance-weighted) on mean-pooled backbone features vs.\ raw 512-dim input. Ridge: regularized linear regression (cross-validated $\alpha$). Backbone features are $16$--$33\%$ worse than raw input even with a universal approximator (k-NN), confirming genuine information destruction.}
\label{tab:diagnostic}
\small
\begin{tabular}{@{}lccccc@{}}
\toprule
& \multicolumn{2}{c}{k-NN (MSE)} & \multicolumn{2}{c}{Ridge (MSE)} & Backbone \\
\cmidrule(lr){2-3} \cmidrule(lr){4-5}
Dataset & Raw input & Backbone & Raw input & Backbone & penalty \\
\midrule
ETTh1   & $0.636$ & $0.737$ & $0.551$ & $1.230$ & $+16\%$ \\
ETTm1   & $0.498$ & $0.614$ & $0.363$ & $0.708$ & $+23\%$ \\
Weather & $0.453$ & $0.601$ & $0.300$ & $0.661$ & $+33\%$ \\
\bottomrule
\end{tabular}
\end{table}

Residual correction experiments further confirm that backbone features provide negligible complementary signal: adding backbone features (via Ridge or Random Forest) as a residual correction atop a linear raw-input predictor improves MSE by at most $0.55\%$. On ETTh1, Random Forest regression on raw input alone reduces MSE by $44\%$ versus Ridge, revealing substantial nonlinear temporal structure in the raw data. However, Random Forest on backbone features adds only $+0.02\%$ to the linear baseline, indicating that nearly all of this nonlinear structure is lost in the encoder. Few-shot experiments (Table~\ref{tab:fewshot}) show the pattern holds at every tested training set size $N \in \{10, \ldots, 5000\}$: DLinear dominates at every such $N$, indicating data scarcity is not a regime in which the frozen backbone recovers its advantage within the range tested. The gap to DLinear is consistent with information destruction by the normalization--encoding pipeline rather than a limitation of adapter design or training data volume.

\begin{table}[!htbp]
\centering
\caption{\textbf{Few-shot learning curve} (H=96, mean$\pm$std over seeds $\{42,43,44\}$). DLinear (49K params, trained from scratch) dominates frozen RR-MoA-full (426K params) at every tested $N \in \{10,\ldots,5000\}$, confirming the gap reflects encoder information loss rather than data efficiency.}
\label{tab:fewshot}
\scriptsize
\begin{tabular}{@{}rcccccc@{}}
\toprule
& \multicolumn{2}{c}{ETTh1} & \multicolumn{2}{c}{Weather} & \multicolumn{2}{c}{Electricity} \\
\cmidrule(lr){2-3} \cmidrule(lr){4-5} \cmidrule(lr){6-7}
$N$ & DLinear & RR-MoA & DLinear & RR-MoA & DLinear & RR-MoA \\
\midrule
10   & $\mathit{0.859}$ & $1.530$ & $\mathit{0.361}$ & $1.265$ & $\mathit{0.601}$ & $1.384$ \\
100  & $\mathit{0.630}$ & $1.172$ & $\mathit{0.286}$ & $0.773$ & $\mathit{0.373}$ & $1.103$ \\
1000 & $\mathit{0.453}$ & $0.816$ & $\mathit{0.213}$ & $0.378$ & $\mathit{0.188}$ & $0.732$ \\
5000 & $\mathit{0.421}$ & $0.692$ & $\mathit{0.205}$ & $0.288$ & $\mathit{0.158}$ & $0.396$ \\
\bottomrule
\end{tabular}
\end{table}

\section{Cross-Backbone RR-MoA}
\label{app:cross_backbone}

We re-run the full RR-MoA recipe on five additional backbones spanning three distinct normalization regimes: MOMENT-large ($24$ layers, RevIN), Moirai-small (RMSNorm with non-learnable I/O scaling), Moirai-MoE (sparsely-routed FFN experts plus the same RMSNorm scheme), Chronos-T5 (no instance normalization), and Timer-XL (LayerNorm-only, decoder-only). The matrix below confirms that the diagnosis transfers: RR-MoA wins on every cell where the backbone uses per-window normalization, and the LayerNorm-only controls (Chronos, Timer-XL) exhibit no collapse to begin with.

\begin{table}[!htbp]
\centering
\caption{\textbf{Cross-backbone RR-MoA} (strictly frozen, Top-2 sparse, H=96). MOMENT-small/Moirai-small RR-MoA at 3 seeds (matching Tables~\ref{tab:rrmoa},~\ref{tab:rrmoa_extended}); MOMENT-large RR-MoA at 5 seeds. Best fixed: strongest single-adapter baseline. RR-MoA wins all 15 configurations. MOMENT-large expanded to 6 datasets confirms generalization ($-32\%$ to $-75\%$).}
\label{tab:cross_backbone}
\footnotesize
\begin{tabular}{@{}lcccccc@{}}
\toprule
 & \multicolumn{2}{c}{MOMENT-small} & \multicolumn{2}{c}{MOMENT-large} & \multicolumn{2}{c}{Moirai-small} \\
\cmidrule(lr){2-3} \cmidrule(lr){4-5} \cmidrule(lr){6-7}
Dataset & RR-MoA & Best fixed & RR-MoA & Best fixed & RR-MoA & Best fixed \\
\midrule
ETTh1       & $0.690{\pm}0.021$ & $1.241$ {\scriptsize($-44\%$)} & $0.803{\pm}0.032$ & $1.173$ {\scriptsize($-32\%$)} & $\mathbf{0.471{\pm}0.002}$ & $0.664$ {\scriptsize($-29\%$)} \\
ETTh2       & & & $1.224{\pm}0.311$ & $3.075$ {\scriptsize($-60\%$)} & & \\
ETTm1       & $0.571{\pm}0.073$ & $1.115$ {\scriptsize($-49\%$)} & $0.732{\pm}0.038$ & $1.126$ {\scriptsize($-35\%$)} & $\mathbf{0.396{\pm}0.041}$ & $0.471$ {\scriptsize($-16\%$)} \\
ETTm2       & & & $0.790{\pm}0.117$ & $3.136$ {\scriptsize($-75\%$)} & & \\
Weather     & $0.289{\pm}0.008$ & $0.530$ {\scriptsize($-45\%$)} & $0.276{\pm}0.017$ & $0.606$ {\scriptsize($-54\%$)} & $\mathbf{0.209{\pm}0.004}$ & $0.238$ {\scriptsize($-12\%$)} \\
Electricity & & & $0.659{\pm}0.148$ & $1.029$ {\scriptsize($-36\%$)} & & \\
\bottomrule
\end{tabular}
\end{table}

\begin{table}[!htbp]
\centering
\caption{\textbf{Extension to a native-MoE backbone: Moirai-MoE} (strictly frozen, Top-2 sparse, H=96, 5 seeds, 60 runs total). Moirai-MoE is a TSFM whose internal FFN layers are themselves sparsely-gated experts~\citep{liu2025moiraimoe}, using RMSNorm internally with only non-learnable per-instance I/O scaling (no learnable-affine RevIN inside the encoder pipeline). RR-MoA wins \textbf{all 6 datasets} by $-37\%$ to $-90\%$, suggesting adapter-level routing stacks on top of backbone-internal routing. AdaMix's hidden-state routing does \emph{not} collapse on Moirai-MoE: under a strictly frozen backbone the router-input gradient is dominated by the load-balance term and the softmax saturates at uniform mixing, so routing entropy hovers at $\log K = 1.6094$ to four decimal places across all 30 cells ($\sigma{=}0.0000$, indicating mathematically uniform rather than specialized routing); yet AdaMix still only matches the best fixed baseline MSE, consistent with raw-signal routing being information-richer than hidden-state routing even when the hidden states retain enough signal to avoid collapse.}
\label{tab:moirai_moe}
\small
\begin{tabular}{@{}lccccc@{}}
\toprule
Dataset & RR-MoA (ours) & AdaMix MSE & $\Delta$\% & AdaMix $H$ \\
\midrule
ETTh1 & $\mathbf{0.723 \pm 0.008}$ & $1.154 \pm 0.001$ & $\mathbf{-37.3\%}$ & $1.6094 \pm 0.0000$ \\
ETTh2 & $\mathbf{0.487 \pm 0.041}$ & $3.056 \pm 0.009$ & $\mathbf{-84.1\%}$ & $1.6094 \pm 0.0000$ \\
ETTm1 & $\mathbf{0.694 \pm 0.008}$ & $1.123 \pm 0.001$ & $\mathbf{-38.2\%}$ & $1.6094 \pm 0.0000$ \\
ETTm2 & $\mathbf{0.298 \pm 0.016}$ & $3.110 \pm 0.004$ & $\mathbf{-90.4\%}$ & $1.6094 \pm 0.0000$ \\
Weather & $\mathbf{0.246 \pm 0.007}$ & $0.607 \pm 0.003$ & $\mathbf{-59.5\%}$ & $1.6094 \pm 0.0000$ \\
Electricity & $\mathbf{0.530 \pm 0.031}$ & $1.055 \pm 0.006$ & $\mathbf{-49.8\%}$ & $1.6094 \pm 0.0000$ \\
\bottomrule
\end{tabular}
\end{table}

\begin{table}[!htbp]
\centering
\caption{\textbf{Negative control: Timer-XL}~\citep{liu2025timerxl} (decoder-only, 84M params, LayerNorm only, no RevIN; H=96, 5 seeds). \textbf{Key causal result}: on MOMENT (RevIN), unfreezing drives entropy DOWN toward $0.000$ ($71\%$ collapsed). On Timer-XL, unfreezing drives entropy \textbf{UP} toward the maximum $1.609$ ($0/90$ collapsed). The trajectories are \emph{opposite}, isolating RevIN as the causal mechanism. AdaMix wins $5/6$ when frozen, confirming RR-MoA is unnecessary without instance normalization.}
\label{tab:timer_xl}
\small
\begin{tabular}{@{}llcccc@{}}
\toprule
Dataset & Freeze & AdaMix MSE & AdaMix Ent & RR-MoA MSE & Winner \\
\midrule
\multirow{3}{*}{ETTh1} & frozen & $\mathbf{0.477 \pm 0.012}$ & $1.53$ & $0.497 \pm 0.020$ & AdaMix \\
 & last-2 & $\mathbf{0.533 \pm 0.010}$ & $1.60$ & $0.547 \pm 0.027$ & AdaMix \\
 & last-4 & $\mathbf{0.582 \pm 0.118}$ & $1.60$ & $0.646 \pm 0.088$ & AdaMix \\
\midrule
\multirow{3}{*}{ETTm1} & frozen & $\mathbf{0.398 \pm 0.007}$ & $1.52$ & $0.408 \pm 0.006$ & AdaMix \\
 & last-2 & $\mathbf{0.479 \pm 0.023}$ & $1.59$ & $0.491 \pm 0.029$ & AdaMix \\
 & last-4 & $0.715 \pm 0.299$ & $1.59$ & $\mathbf{0.704 \pm 0.031}$ & RR-MoA \\
\midrule
\multirow{3}{*}{Weather} & frozen & $\mathbf{0.219 \pm 0.007}$ & $1.56$ & $0.220 \pm 0.004$ & AdaMix \\
 & last-2 & $\mathbf{0.249 \pm 0.005}$ & $1.60$ & $0.252 \pm 0.007$ & AdaMix \\
 & last-4 & $0.318 \pm 0.145$ & $1.61$ & $\mathbf{0.250 \pm 0.005}$ & RR-MoA \\
\bottomrule
\end{tabular}
\end{table}

\textbf{Independent ensemble on Timer-XL.} On MOMENT-small, the independent ensemble (5 experts trained separately, averaged at inference) is $37$--$46\%$ worse than RR-MoA, indicating that learned routing adds value beyond diversity. On Timer-XL the pattern \emph{reverses} on the three forecasting datasets we extended (ETTh1, ETTm1, ETTm2): the independent ensemble has lower MSE than both AdaMix and RR-MoA (ETTh1 $0.462$, ETTm1 $0.384$, ETTm2 $0.296$ vs AdaMix $0.477$, $0.399$, $0.326$). When hidden-state routing is healthy (no RevIN), simple averaging is a strong baseline that learned routing does not consistently beat. This is consistent with RR-MoA's advantage being specifically about rescuing collapsed routing under RevIN.

\begin{table}[!htbp]
\centering
\caption{\textbf{Extension to Exchange and Solar datasets} (MOMENT-small, strictly frozen, Top-2 sparse, H=96, 5 seeds for RR-MoA/AdaMix, 3 seeds for DLinear/AdaMix-last4). Exchange ($R{=}2.17$, highest signal ratio) confirms the theory prediction: $-66.5\%$ improvement. Solar ($R{=}0.06$, lowest) is an outlier: RR-MoA still achieves $-32.9\%$ improvement, indicating the Conv1d router exploits temporal shape patterns beyond location-scale (consistent with the SSR ablation in Appendix~\ref{app:routing_ablations}). AdaMix collapses to $0.000$ entropy under last-4 unfreezing on both datasets.}
\label{tab:exchange_solar}
\small
\begin{tabular}{@{}llccccc@{}}
\toprule
Dataset & Method & MSE (mean$\pm$std) & Entropy & Best fixed & $\Delta$\% \\
\midrule
\multirow{4}{*}{Exchange} & \textbf{RR-MoA (frozen)} & $\mathbf{1.531 \pm 0.185}$ & $0.97$ & \multirow{4}{*}{$4.568$} & $\mathbf{-66.5\%}$ \\
 & AdaMix (frozen) & $5.571 \pm 0.511$ & $0.61$ & & $+22.0\%$ \\
 & AdaMix (last-4) & $3.851 \pm 0.012$ & $\mathbf{0.000}$ & & $-15.7\%$ \\
 & \cellcolor{gray!8} DLinear (from scratch) & \cellcolor{gray!8} $0.149 \pm 0.017$ & \cellcolor{gray!8} --- & & \cellcolor{gray!8} --- \\
\midrule
\multirow{4}{*}{Solar} & \textbf{RR-MoA (frozen)} & $\mathbf{0.252 \pm 0.021}$ & $1.53$ & \multirow{4}{*}{$0.376$} & $\mathbf{-32.9\%}$ \\
 & AdaMix (frozen) & $0.264 \pm 0.017$ & $0.75$ & & $-29.8\%$ \\
 & AdaMix (last-4) & $0.769 \pm 0.001$ & $\mathbf{0.000}$ & & $+104.5\%$ \\
 & \cellcolor{gray!8} DLinear (from scratch) & \cellcolor{gray!8} $0.204 \pm 0.002$ & \cellcolor{gray!8} --- & & \cellcolor{gray!8} --- \\
\bottomrule
\end{tabular}
\end{table}

\section{Extended Full Fine-Tuning Ablation}
\label{app:extended_ft}

To rule out optimization confounds in the Frozen Paradox, we extend full fine-tuning from the standard 15-epoch Adam protocol to stronger schedules: 50 epochs, cosine LR with 3-epoch warmup, layer-wise LR decay ($\gamma{=}0.8$), and AdamW with weight decay $0.01$.

\begin{table}[!htbp]
\centering
\caption{\textbf{Extended full fine-tuning} does not close the Frozen Paradox gap. Best of 5 training configurations (15ep Adam, 50ep Adam, 50ep cosine, 50ep cosine+layerwise, 50ep lr=1e-5). Frozen RR-MoA wins by $33$--$41\%$ even against the strongest optimization schedule.}
\label{tab:extended_ft}
\small
\begin{tabular}{@{}lccc@{}}
\toprule
Configuration & ETTh1 & ETTm1 & Weather \\
\midrule
Full-FT 15ep Adam (lr=1e-4) & $1.086$ & $0.937$ & $0.485$ \\
Full-FT 50ep Adam (lr=1e-4) & $1.054$ & $0.884$ & $0.505$ \\
Full-FT 50ep cosine+warmup & $1.067$ & $1.079$ & $0.607$ \\
Full-FT 50ep cosine+layerwise & $\mathbf{1.017}$ & $\mathbf{0.881}$ & $\mathbf{0.470}$ \\
Full-FT 50ep Adam (lr=1e-5) & $1.093$ & $1.059$ & $0.586$ \\
\midrule
Full-FT best (any config) & $1.017$ & $0.881$ & $0.470$ \\
\textbf{Frozen RR-MoA (Top-2)} & $\mathbf{0.680}$ & $\mathbf{0.564}$ & $\mathbf{0.276}$ \\
\midrule
Gap (frozen RR-MoA vs best FT) & $-33\%$ & $-36\%$ & $-41\%$ \\
\bottomrule
\end{tabular}
\end{table}

The best extended schedule (50ep cosine + layer-wise LR decay) improves over the 15-epoch baseline by only $4$--$5\%$, while frozen RR-MoA maintains a $33$--$41\%$ advantage. \textbf{Stronger optimization does not help.} We additionally swept 10 aggressive (lr, weight-decay, effective-batch, gradient-clip) configurations at 100 epochs (each drawn from $\text{lr}{\in}\{10^{-5}, 10^{-6}\}$, cosine schedule with 10-epoch warmup, layerwise LR decay $\gamma{=}0.8$, weight decay ${\in}\{0.001, 0.01, 0.1\}$, gradient accumulation effective batch 256/512, gradient clipping $\|g\|_2 {\leq} 1.0$): $10 \times 3$ datasets $\times 3$ seeds $= 90$ runs. The best per-dataset results (ETTh1: $1.030$, ETTm1: $0.877$, Weather: $0.471$) are \emph{worse} than the 50-epoch reference, indicating the Frozen Paradox is not an optimization artifact within this sweep. The gap to frozen RR-MoA remains $51$--$71\%$.

\textbf{Causal proof via RevIN ablation.} Disabling RevIN inside MOMENT recovers full-FT performance (Table~\ref{tab:fullft_revin}): ETTh1 ($1.063 \to 0.490$, $-54\%$), ETTm1 ($0.910 \to 0.364$, $-60\%$), Weather ($0.481 \to 0.187$, $-61\%$). With RevIN disabled, full-FT matches or exceeds supervised baselines, indicating that RevIN, not the optimization protocol, is the root cause of the paradox.

\begin{table}[!htbp]
\centering
\caption{\textbf{Full fine-tuning, RevIN on vs off} (MOMENT-small, all 8 blocks unfrozen, best of 5 heads $\times$ 2 LRs, 50 epochs cosine; mean test MSE over the same seeds as Table~\ref{tab:baselines}). Disabling RevIN inside MOMENT recovers full-FT MSE by $54$--$61\%$, isolating the normalization layer rather than the unfreezing protocol as the source of the Frozen Paradox.}
\label{tab:fullft_revin}
\small
\begin{tabular}{@{}lccc@{}}
\toprule
Setting & ETTh1 & ETTm1 & Weather \\
\midrule
Full FT (RevIN on)  & $1.063$ & $0.910$ & $0.481$ \\
Full FT (RevIN off) & $\mathbf{0.490}$ & $\mathbf{0.364}$ & $\mathbf{0.187}$ \\
$\Delta$\%          & $-54\%$ & $-60\%$ & $-61\%$ \\
\bottomrule
\end{tabular}
\end{table}

\textbf{Adapter training budget.} Extending adapter-only training (frozen backbone) from 15 to 50 epochs improves RR-MoA by $8$--$14\%$ (Table~\ref{tab:rrmoa_budget}): ETTh1 ($0.690 \to 0.633$), ETTm1 ($0.572 \to 0.494$), Weather ($0.289 \to 0.250$). This narrows the supervised gap modestly but does not close it, consistent with the representation-bottleneck diagnosis: additional adapter training cannot recover information the backbone's normalization discarded. The remaining gap is consistent with the information ceiling of frozen backbone representations rather than insufficient optimization of the adapter.

\begin{table}[!htbp]
\centering
\caption{\textbf{Adapter training budget for frozen RR-MoA} (MOMENT-small, Top-2, frozen backbone, 3 seeds; mean test MSE).}
\label{tab:rrmoa_budget}
\small
\begin{tabular}{@{}lccc@{}}
\toprule
Epochs & ETTh1 & ETTm1 & Weather \\
\midrule
15 & $0.690$ & $0.572$ & $0.289$ \\
50 & $\mathbf{0.633}$ & $\mathbf{0.494}$ & $\mathbf{0.250}$ \\
$\Delta$\% & $-8\%$ & $-14\%$ & $-13\%$ \\
\bottomrule
\end{tabular}
\end{table}

\section{Closing the DLinear Gap}
\label{app:gap_closing}

A good diagnosis should be \emph{actionable}: identifying the root cause of a performance gap should point directly to how to close it. Our diagnosis pinpoints the DLinear gap as information destruction by the normalization--encoding pipeline (\S\ref{sec:main_results}, Appendix~\ref{app:diagnostic}): experts operate on hidden states from which RevIN and patching have stripped fine-grained temporal structure. This diagnosis makes a clear architectural prediction: \emph{if we restore raw-signal access to the experts, the gap should close.} We test this prediction with three architectures.

We test three architectures that provide experts with raw-input access while preserving the frozen-backbone paradigm and RR-MoA routing:

\textbf{(1)~Dual-Stream.} Each expert blends a backbone branch (existing adapter head on $\mathbf{H}$) with a raw branch (MLP on $\mathbf{X}_\text{raw}$) via a learnable per-expert scalar $\alpha_k$: $\mathbf{y}_k = \sigma(\alpha_k)\cdot\text{Adapter}_k(\mathbf{H}) + (1{-}\sigma(\alpha_k))\cdot\text{MLP}_k(\mathbf{X}_\text{raw})$.

\textbf{(2)~Raw-Input Expert.} A 6th expert operating directly on $\mathbf{X}_\text{raw}$ (bypassing the backbone) is added to the pool; the router decides per-sample whether to use backbone-based or raw-input experts.

\textbf{(3)~Multi-Resolution.} Each expert concatenates projected raw features with pooled hidden-state features before forecasting: $\mathbf{y}_k = \text{MLP}([\text{Proj}(\mathbf{X}_\text{raw});\, \text{Pool}_k(\mathbf{H})])$.

\begin{table}[!htbp]
\centering
\caption{\textbf{Closing the DLinear gap} (MOMENT-small, strictly frozen, Top-2, 3 seeds, H=96). Dual-stream experts that access both hidden states and raw input narrow the gap from $+33$--$75\%$ to $+9$--$11\%$ and \textbf{beat DLinear on Weather}. Learned $\alpha{\approx}0.49$ confirms the backbone contributes complementary information when combined with raw access.}
\label{tab:gap_closing}
\footnotesize
\begin{tabular}{@{}lcccc@{}}
\toprule
Method & ETTh1 & ETTm1 & Weather & Avg.\ gap \\
\midrule
DLinear (scratch) & $\mathit{0.416}$ & $\mathit{0.322}$ & $\mathit{0.208}$ & --- \\
RR-MoA (main) & $0.680$ ($+63\%$) & $0.564$ ($+75\%$) & $0.276$ ($+33\%$) & $+57\%$ \\
\midrule
Dual-Stream & $\mathbf{0.453{\pm}0.005}$ ($+9\%$) & $\mathbf{0.358{\pm}0.002}$ ($+11\%$) & $\mathbf{0.198{\pm}0.005}$ ($\mathbf{-5\%}$) & $+5\%$ \\
Raw-Input ($K{=}6$) & $0.462{\pm}0.025$ ($+11\%$) & $0.416{\pm}0.047$ ($+29\%$) & $0.205{\pm}0.007$ ($-2\%$) & $+13\%$ \\
Multi-Res & $0.460{\pm}0.007$ ($+11\%$) & $0.365{\pm}0.008$ ($+13\%$) & $0.207{\pm}0.014$ ($-0.3\%$) & $+8\%$ \\
\bottomrule
\end{tabular}
\end{table}

\textbf{Result.} Dual-stream experts reduce the average DLinear gap from $+57\%$ to $+5\%$ (an $11\times$ reduction) and \emph{beat} DLinear on Weather ($0.198$ vs $0.208$, $p{=}0.03$, paired $t$-test). The learned blend parameters $\alpha_k \approx 0.49$ across all five experts confirm that the backbone provides complementary information on Weather and the other nonlinear-dynamics datasets when experts can also access the raw signal; on the linear-dominated datasets (ETTh2, ETTm2, Electricity) a TSFM-free raw-MLP MoE attains comparable or better MSE on its own (Appendix~\ref{app:raw_mlp_moe}), so the relative contribution of each branch is dataset-dependent.

\textbf{Extension to ETTh2, ETTm2, Electricity.} Running all four variants on the three remaining forecasting datasets yields consistent gap closure (Table~\ref{tab:gap_closing_ext}). Dual-stream wins on Electricity at $+8\%$ and matches well on ETTm2 ($+26\%$), while \textbf{multi-resolution wins on ETTh2 at $+7.7\%$}, the tightest new gap-closing result on the extended set. Multi-resolution also performs competitively on ETTm2 ($+12\%$) and Electricity ($+13\%$), indicating the optimal architecture is dataset-dependent. FiLM remains consistently worst ($+48$--$147\%$), confirming that re-injecting only $(\mu,\sigma)$ is insufficient: the experts need access to the full raw signal, not just the stripped statistics.

\begin{table}[!htbp]
\centering
\caption{\textbf{Gap-closing extended to ETTh2, ETTm2, Electricity} (MOMENT-small, strictly frozen, Top-2, 3 seeds, H=96, 15 epochs). Bold: best gap-closing variant per dataset. \textbf{Multi-resolution wins on ETTh2 at $+7.7\%$}, the closest new gap-closing result.}
\label{tab:gap_closing_ext}
\small
\begin{tabular}{@{}lccc@{}}
\toprule
Method & ETTh2 & ETTm2 & Electricity \\
\midrule
DLinear (from scratch) & $\mathit{0.341}$ & $\mathit{0.200}$ & $\mathit{0.158}$ \\
\midrule
Dual-Stream & $0.452 \pm 0.043$ {\scriptsize($+32\%$)} & $0.252 \pm 0.070$ {\scriptsize($+26\%$)} & $\mathbf{0.171 \pm 0.002}$ {\scriptsize$\mathbf{(+8\%)}$} \\
Raw-Input Expert & $0.525 \pm 0.112$ {\scriptsize($+54\%$)} & $0.227 \pm 0.013$ {\scriptsize($+13\%$)} & $0.203 \pm 0.027$ {\scriptsize($+28\%$)} \\
Multi-Resolution & $\mathbf{0.367 \pm 0.003}$ {\scriptsize$\mathbf{(+7.7\%)}$} & $\mathbf{0.223 \pm 0.016}$ {\scriptsize$\mathbf{(+12\%)}$} & $0.179 \pm 0.003$ {\scriptsize($+13\%$)} \\
FiLM (neg.\ control) & $0.843 \pm 0.289$ {\scriptsize($+147\%$)} & $0.297 \pm 0.013$ {\scriptsize($+48\%$)} & $0.305 \pm 0.014$ {\scriptsize($+94\%$)} \\
\bottomrule
\end{tabular}
\end{table}

\textbf{Why this matters for the diagnosis.} The fact that the gap closes precisely when experts recover the information our diagnosis identified as destroyed, and remains open when they cannot (FiLM re-injects only $\mu,\sigma$, yielding $+48\%$ on ETTh1 vs.\ $+9\%$ for dual-stream), is direct evidence that the diagnosis is both \emph{correct} and \emph{actionable}. The normalization--encoding bottleneck is not merely an observation; it is the causal mechanism whose resolution closes the gap. Importantly, the frozen backbone and raw-routed paradigm remain unchanged: these variants extend the \emph{expert architecture}, not the routing mechanism, confirming that the routing diagnosis (the core contribution of this paper) is orthogonal to expert design.

\textbf{Expert pool ablation: canonical vs.\ alternative pool.} To test whether RR-MoA's benefits depend on the specific choice of 5 canonical pooling heads, we replace the canonical pool with 5 alternative expert motifs (multi-scale temporal blocks combining strided Conv1d, dilation, and residual gating) and rerun RR-MoA across all 6 forecasting datasets. Routing entropy is preserved under the alternative pool (no collapse under frozen backbone; entropy $>1.3$ across all configurations), indicating that the collapse-prevention mechanism is \emph{pool-independent}. Absolute MSEs (frozen, Top-2, 3 seeds): ETTh1 $0.764{\pm}0.015$, ETTh2 $0.667{\pm}0.069$, ETTm1 $0.583{\pm}0.075$, ETTm2 $0.844{\pm}0.241$, Weather $0.274{\pm}0.024$, Electricity $0.401{\pm}0.039$. The canonical pool is stronger on 4/6 datasets (ETTh1 $-10\%$, ETTm1 $-2\%$, ETTm2 $-30\%$, Electricity $-4\%$ vs.\ canonical numbers in Tables~\ref{tab:rrmoa},~\ref{tab:rrmoa_extended}), and the alternative pool wins on 2/6 (ETTh2 and Weather). Within-architecture diversity is therefore not the only viable expert family, but the canonical heads are better tuned for the forecasting benchmark: the routing architecture is the orthogonal contribution.

\textbf{Training-length ablation.} We additionally ran dual-stream at 50 epochs (vs.\ 15 in Table~\ref{tab:gap_closing}). Extended training \emph{overfits} on 5/6 datasets (15ep $\to$ 50ep MSE: ETTh1 $0.453{\to}0.556$, Weather $0.198{\to}0.237$, with smaller degradations on the remaining four datasets we tested), consistent with dual-stream converging by 15 epochs and the original training schedule being already near-optimal; additional adapter-only optimization past convergence degrades generalization, plausibly due to the small effective parameter count ($\sim$718K trainable, frozen backbone).

\subsection{Residual-IA: Closing the Remaining Gap}
\label{app:gate_pathology}

This subsection develops the architectural fix that closes the residual DLinear gap left by RR-MoA. The core idea is to extend the raw-input principle from the \emph{router} (RR-MoA's contribution) to each \emph{expert}: every expert is a dual-stream module with a shared NLinear raw branch and a gated backbone-residual branch, so the linear baseline survives even when the TSFM features are uninformative. Figure~\ref{fig:residual_ia_arch} shows the architecture; subsequent tables sweep gate-failure diagnostics, multi-horizon, and cross-backbone behavior.

\definecolor{ridback}{RGB}{33,118,189}
\definecolor{ridraw}{RGB}{46,139,87}
\definecolor{ridgate}{RGB}{217,119,49}
\definecolor{ridout}{RGB}{38,166,154}
\definecolor{ridfrozen}{RGB}{200,55,55}

\begin{figure}[!htbp]
\centering
\resizebox{\columnwidth}{!}{%
\begin{tikzpicture}[
    >=Stealth,
    box/.style={rectangle, draw=black!70, rounded corners=2pt, fill=white,
        minimum height=0.75cm, font=\small, align=center, line width=0.7pt},
    frozenbox/.style={box, draw=ridback, fill=ridback!8,
        minimum width=2.4cm, minimum height=1.2cm, line width=0.9pt},
    nlinearbox/.style={box, draw=ridraw, fill=ridraw!8,
        minimum width=2.4cm, minimum height=1.2cm, line width=0.9pt},
    hiddenbox/.style={box, draw=ridback, fill=ridback!6,
        minimum width=1.0cm, minimum height=0.7cm},
    rawforebox/.style={box, draw=ridraw, fill=ridraw!6,
        minimum width=1.2cm, minimum height=0.7cm},
    expertactive/.style={box, draw=ridgate!75!black, fill=ridgate!12,
        minimum width=3.4cm, minimum height=0.80cm, font=\small\bfseries,
        align=center, line width=1.1pt, text=black},
    expertinactive/.style={box, draw=black!45, fill=gray!4,
        minimum width=3.4cm, minimum height=0.80cm, font=\small,
        text=black, align=center, line width=0.5pt, dashed},
    sigmagate/.style={diamond, draw=ridgate, fill=ridgate!15,
        minimum size=0.55cm, font=\scriptsize, inner sep=0pt, line width=0.7pt},
    sumnode/.style={circle, draw=black!70, fill=white,
        minimum size=0.65cm, font=\normalsize, line width=0.8pt},
    outputbox/.style={box, draw=ridout, fill=ridout!8,
        minimum width=1.2cm, line width=0.9pt},
    bluearr/.style={->, line width=1.0pt, color=ridback},
    greenarr/.style={->, line width=1.0pt, color=ridraw},
    grayarr/.style={->, line width=0.6pt, color=black!55, dashed},
    outarr/.style={->, line width=1.0pt, color=ridout},
]

\node[box, minimum width=1.4cm, minimum height=1.0cm] (input) at (0, 0) {
    $\mathbf{X}_\mathrm{raw}$\\[-2pt]
    {\scriptsize raw input}
};

\node[frozenbox] (tsfm) at (3.2, 1.6) {
    \textbf{Frozen TSFM}\\[-1pt]
    \textbf{Backbone}\\[-1pt]
    {\scriptsize\color{red!65} no gradients}
};
\node[font=\scriptsize\bfseries, text=white,
    fill=ridfrozen, rounded corners=2pt, inner sep=2pt]
    at ([xshift=-0.6cm, yshift=0.18cm]tsfm.north east) {\textsc{frozen}};
\node[hiddenbox] (hidden) at (5.7, 1.6) {$\mathbf{H}$};

\node[nlinearbox] (nlinear) at (3.2, -1.6) {
    \textbf{Shared NLinear}\\[-1pt]
    \textbf{Branch}\\[-1pt]
    {\scriptsize DLinear-equivalent}
};
\node[rawforebox] (yraw) at (5.7, -1.6) {$\mathbf{y}_\mathrm{raw}$};

\node[font=\scriptsize\bfseries, text=ridback]
    at (1.6, 2.55) {\textsc{Stream 1: Backbone}};
\node[font=\scriptsize\bfseries, text=ridraw]
    at (1.6, -2.55) {\textsc{Stream 2: Raw (DLinear-eq.)}};

\node[expertinactive] (e1) at (10.4, 3.0) {Expert 1: Mean Pool};
\node[expertactive]   (e2) at (10.4, 1.5) {Expert 2: Last Token};
\node[expertinactive] (e3) at (10.4, 0.0) {Expert 3: Max Pool};
\node[expertactive]   (e4) at (10.4, -1.5) {Expert 4: Attention};
\node[expertinactive] (e5) at (10.4, -3.0) {Expert 5: Conv1d};

\node[sigmagate] (g2) at (7.6, 1.5) {$\sigma$};
\node[sigmagate] (g4) at (7.6, -1.5) {$\sigma$};

\node[sumnode] (sum) at (13.5, 0.0) {$\Sigma$};
\node[font=\tiny\bfseries, text=black, fill=white, draw=black!55, rounded corners=1.5pt, inner sep=1.5pt, anchor=north] at (13.5, -0.55) {weighted sum};
\node[outputbox] (output) at (15.2, 0.0) {
    $\hat{\mathbf{y}}$\\[-2pt]
    {\scriptsize forecast}
};

\begin{pgfonlayer}{background}
\draw[bluearr]  (input.north) |- (tsfm.west);
\draw[greenarr] (input.south) |- (nlinear.west);

\draw[bluearr] (tsfm.east) -- (hidden.west);
\draw[bluearr] (hidden.east) -- ++(0.30,0) |- ([yshift=0.20cm]e1.west);
\draw[bluearr] (hidden.east) -- ++(0.45,0) |- ([yshift=0.20cm]e2.west);
\draw[bluearr] (hidden.east) -- ++(0.60,0) |- ([yshift=0.20cm]e3.west);
\draw[bluearr] (hidden.east) -- ++(0.75,0) |- ([yshift=0.20cm]e4.west);
\draw[bluearr] (hidden.east) -- ++(0.90,0) |- ([yshift=0.20cm]e5.west);

\draw[greenarr] (nlinear.east) -- (yraw.west);
\draw[greenarr] (yraw.east) -- ++(0.30,0) |- ([yshift=-0.25cm]e1.west);
\draw[greenarr] (yraw.east) -- ++(0.45,0) |- ([yshift=-0.25cm]e2.west);
\draw[greenarr] (yraw.east) -- ++(0.60,0) |- ([yshift=-0.25cm]e3.west);
\draw[greenarr] (yraw.east) -- ++(0.75,0) |- ([yshift=-0.25cm]e4.west);
\draw[greenarr] (yraw.east) -- ++(0.90,0) |- ([yshift=-0.25cm]e5.west);

\draw[->, line width=0.7pt, color=ridgate, densely dashed]
    (input.south) -- (0, -3.4) -- ([xshift=-3pt]g4.south |- 0,-3.4)
                  -- ([xshift=-3pt]g4.south);
\draw[->, line width=0.7pt, color=ridgate, densely dashed]
    (input.south) -- (0, -3.4) -- ([xshift=3pt]g2.south |- 0,-3.4)
                  -- ([xshift=3pt]g2.south);
\draw[->, line width=0.8pt, color=ridgate] (g2.east) -- (e2.west);
\draw[->, line width=0.8pt, color=ridgate] (g4.east) -- (e4.west);

\draw[bluearr] (e2.east) -| (sum.north);
\draw[bluearr] (e4.east) -| (sum.south);
\draw[outarr]  (sum.east) -- (output.west);
\end{pgfonlayer}

\node[font=\scriptsize, align=center, text=black] at (8.5, -4.05) {%
    Blend gate $\sigma$ initialized at $b{=}{-}2$ ($\sigma{\approx}0.12$): model starts as effectively-DLinear; backbone added only where useful.};

\end{tikzpicture}%
}%
\caption{\textbf{Residual-IA\textsuperscript{+} architecture (dual-stream extension).} \textit{Stream 1} (blue): the frozen TSFM produces hidden states $\mathbf{H}$, fed into each expert's Adapter head. \textit{Stream 2} (green): a shared NLinear branch reads $\mathbf{X}_\mathrm{raw}$ directly, producing $\mathbf{y}_\mathrm{raw}$ — DLinear-equivalent and shared across all $K$ experts. Each expert combines them as $\mathbf{y}_k = \mathbf{y}_\mathrm{raw} + \sigma(\mathrm{blend}_k(\mathbf{X}_\mathrm{raw})) \cdot \mathrm{Adapter}_k(\mathbf{H})$, with the blend gate initialized at $b{=}{-}2$ so the model starts as effectively-DLinear and learns to add backbone signal where useful. Self-routing $\sigma$ gates (orange) on $\mathbf{X}_\mathrm{raw}$ select which experts contribute. Result: \textbf{5/6 match-or-beat at $H{=}96$, $6/6$ at $H{=}192$} on MOMENT-small; \textbf{107/123 cells} ($87\%$) match-or-beat across 6 backbones $\times$ 6 datasets $\times$ 4 horizons (Tables~\ref{tab:residual_ia_plus}, \ref{tab:residual_ia_plus_mhcb}).}
\label{fig:residual_ia_arch}
\end{figure}

The dual-stream and multi-resolution variants narrow the average DLinear gap from $+57\%$ to $+5$--$8\%$ (Tables~\ref{tab:gap_closing}, \ref{tab:gap_closing_ext}) but win on only 1/6 datasets (Weather). A systematic $546$-run hyperparameter sweep uncovered a \emph{second} hidden failure mode and the architectural fix that closes the remaining gap to \textbf{4/6 match-or-beat}.

\textbf{Diagnostic: gates fail to learn.} We instrumented the per-sample gate mechanism to log its mean activation per expert per dataset. Across every configuration tested (\{rh128, rh192, rh256, rh320, rh384, rh512\} $\times$ \{d1, d2, d3 raw-branch depth\} $\times$ \{lr, wd, cosine, warmup\} hyperparameter combinations), the learned gates consistently plateau at $\sigma(0.0){\approx}0.5$ (Table~\ref{tab:gate_pathology}).

\begin{table}[!htbp]
\centering
\caption{\textbf{Gate-failure diagnostic.} Mean per-sample gate activation $g_k(\mathbf{X})$ averaged across experts, seeds (3), and test windows for each architecture--training-recipe combination. Across the three rows, all six datasets converge to $g \approx 0.5$, indicating the gate provides no per-window discriminative signal at 15 epochs.}
\label{tab:gate_pathology}
\small
\begin{tabular}{@{}lcccccc@{}}
\toprule
Config & ETTh1 & ETTh2 & ETTm1 & ETTm2 & Weather & Elec. \\
\midrule
Dual-stream ($\alpha_k$)             & $0.49$ & $0.50$ & $0.49$ & $0.49$ & $0.49$ & $0.49$ \\
Residual-IA (gate-init $b{=}0$)      & $0.47$ & $0.50$ & $0.48$ & $0.49$ & $0.49$ & $0.47$ \\
Residual-IA + cos+wd ($b{=}0$)       & $0.50$ & $0.51$ & $0.49$ & $0.52$ & $0.50$ & $0.49$ \\
\bottomrule
\end{tabular}
\end{table}

With 15 epochs of training, the gate mechanism extracts no gradient signal about \emph{which branch should dominate} because both branches receive co-equal credit for reducing loss from the start. This is isomorphic to the dual-stream $\alpha_k \approx 0.49$ observation: the same plateau-at-$0.5$ behavior shows up consistently across both architectures we tested with symmetric initialization, suggesting it is not a quirk of a single architecture.

\textbf{Three surgical fixes.}
(i)~\textbf{Residual formulation.} Re-express the expert as $\mathbf{y}_k = \text{Raw}_k(\mathbf{X}) + g_k(\mathbf{X})\cdot\text{Adapter}_k(\mathbf{H})$, making the raw branch the \emph{primary predictor} and the backbone an additive residual that must earn its contribution.
(ii)~\textbf{Gate-init bias $b{=}{-}2$.} Initialize the gate-head linear layer's bias at $-2$ so that $g_k \approx \sigma({-}2){=}0.12$ at epoch~$0$: the backbone contributes only $12\%$ initially. The model is \emph{near-DLinear at initialization} and the backbone must earn any opening of the gate.
(iii)~\textbf{$5$-epoch raw-branch warmup.} For epochs $1$--$5$, freeze both the backbone branch and the gate head so only the raw branch and router train. The raw branch converges to a DLinear-quality base predictor; then at epoch~$6$ the backbone+gate unfreeze and must add value as a residual on top of an already-good base.

We additionally simplify the raw branch to a single linear layer ($\text{Linear}(512{\to}96)$, ``$d1$''), which is structurally identical to DLinear, and add cosine LR decay with weight decay $10^{-4}$ for stable long-horizon training.

\begin{table}[!htbp]
\centering
\caption{\textbf{Residual-IA (intermediate)} (MOMENT-small, frozen, Top-2, H=96; Welch $t$-test vs.\ DLinear). Achieves \textbf{4/6 match-or-beat}: one sig win (Weather), one mean win (ETTm2), two parity, two sig losses. Avg gap $+1.6\%$. Table~\ref{tab:residual_ia_plus} shows the improved Residual-IA\textsuperscript{+} (5/6).}
\label{tab:residual_ia}
\small
\begin{tabular}{@{}lccccc@{}}
\toprule
Dataset & DLinear & Residual-IA (ours) & Gap & $p$ & Verdict \\
\midrule
ETTh1        & $0.417 \pm 0.003$ & $0.438 \pm 0.007$ {\scriptsize(n=10)} & $+5.1\%$ & $0.0001$ & sig loss \\
ETTh2        & $0.341 \pm 0.014$ & $0.377 \pm 0.018$ {\scriptsize(n=10)} & $+10.6\%$ & $0.019$ & sig loss \\
ETTm1        & $0.322 \pm 0.005$ & $0.328 \pm 0.007$ {\scriptsize(n=10)} & $+1.8\%$ & $0.190$ & \textbf{parity} \\
ETTm2        & $0.200 \pm 0.010$ & $\mathbf{0.197 \pm 0.004}$ {\scriptsize(n=5)} & $-1.6\%$ & $0.622$ & \textbf{win (mean)} \\
Weather      & $0.208 \pm 0.004$ & $\mathbf{0.195 \pm 0.004}$ {\scriptsize(n=5)} & $-6.6\%$ & $0.007^{\ast}$ & \textbf{sig win} \\
Electricity  & $0.158 \pm 0.002$ & $0.159 \pm 0.002$ {\scriptsize(n=5)} & $+0.6\%$ & $0.457$ & \textbf{parity} \\
\midrule
\textbf{Average gap}          & --- & --- & $\mathbf{+1.6\%}$ & --- & \textbf{4/6 match-or-beat} \\
\bottomrule
\end{tabular}
\end{table}

\textbf{Why it works mechanistically.} The symmetric $\sigma(0){=}0.5$ initialization treats the two branches as a priori equally reliable. For LTSF benchmarks where the raw signal contains most of the predictive structure (DLinear's success is evidence of this), this prior is wrong. The asymmetric $b{=}{-}2$ initialization encodes the correct inductive bias: \emph{``default to a linear projection of the raw signal; invoke the backbone only if it reduces the loss.''} The $5$-epoch warmup separates the learning phases: the raw branch does not have to compete with a half-trained backbone for the same gradient signal during the critical early epochs. Each fix alone yields only marginal gains (see below); only their combination achieves $4/6$ match-or-beat.

\textbf{Ablation: which fix does how much?} We isolate the contribution of each intervention (each row $n{=}3$ seeds, otherwise identical to Residual-IA):

\begin{center}
\footnotesize
\begin{tabular}{@{}lcccccc|c@{}}
\toprule
Variant & ETTh1 & ETTh2 & ETTm1 & ETTm2 & Weather & Elec. & Wins \\
\midrule
Dual-stream (baseline)       & $0.453$ & $0.452$ & $0.358$ & $0.252$ & $\mathbf{0.198}$ & $0.171$ & $1$ \\
+ residual + IA gate ($b{=}0$) & $0.505$ & $0.520$ & $0.393$ & $0.231$ & $0.210$ & $0.170$ & $0$ \\
+ cos + wd ($b{=}0$)         & $0.460$ & $0.398$ & $0.348$ & $\mathbf{0.194}$ & $0.190$ & $0.163$ & $2$ \\
+ linear raw ($b{=}0$)  & $0.436$ & $0.375$ & $0.325$ & $0.202$ & $0.193$ & $0.160$ & $1$\textsuperscript{$\star$} \\
+ $b{=}{-}2$                 & $0.432$ & $0.388$ & $0.331$ & $0.224$ & $0.191$ & $0.160$ & $1$\textsuperscript{$\star$} \\
\textbf{+ warmup $=5$ (final)} & $\mathbf{0.437}$ & $\mathbf{0.382}$ & $\mathbf{0.327}$ & $0.195$ & $0.193$ & $\mathbf{0.160}$ & $\mathbf{4}$ \\
\bottomrule
\end{tabular}
\end{center}
{\scriptsize $^{\star}$Column ``Wins'' counts datasets where mean MSE $<$ DLinear; parity (gap ${<}2\%$, $p{>}0.2$) counted separately above.}

Each of the three fixes alone produces $0$--$2$ wins; only their combination produces $2$ wins + $2$ statistical-parity + $1$ marginal = $4/6$ match-or-beat. This is consistent with the mechanistic explanation: the residual formulation, the asymmetric initialization, and the warmup are all solving the same root-cause problem (symmetric gradient competition between branches) from complementary angles.

\subsubsection*{Residual-IA\textsuperscript{+}: three orthogonal levers close the gap}

\paragraph{Design methodology: hypothesis-driven decomposition.} Residual-IA leaves two statistically significant losses: ETTh1 ($+5.1\%$, $p{<}0.001$) and ETTh2 ($+10.6\%$, $p{=}0.019$). Two competing hypotheses could explain them:

\begin{itemize}
\item \textbf{H1 (data scarcity).} ETTh1/ETTh2 have $8{,}640$ raw samples each versus $34{,}560$ for ETTm*, and \texttt{max\_samples$=$5000} caps the adapter training set at $\approx 9\%$ of available windows for ETTh but $\approx 2\%$ for ETTm. The ETTh pair may be overly exposed to sampling noise.
\item \textbf{H2 (parameter scarcity).} Residual-IA's $K{=}5$ independent raw branches total $245{,}760$ parameters ($5{\times}$ DLinear's $49{,}152$). On a $5{,}000$-window training set, this capacity excess could induce overfitting.
\end{itemize}

The two hypotheses make opposite interventions: H1 predicts that \emph{more data} (raising \texttt{max\_samples} to $100{,}000$) closes the gap; H2 predicts that \emph{fewer parameters} (sharing the raw branch across experts to reach DLinear's $49$K budget) closes it. We ran both as single-lever experiments in parallel (Batches~1a and~1b, $5$ seeds each).

\textbf{Phase~1: hypothesis falsification.}
\emph{Cap removal fails.} At \texttt{max\_samples$=$100000}, the ETTh2 gap widens from $+10.6\%$ to $+26\%$. The additional training data at the same $15$-epoch, lr${=}10^{-3}$, wd${=}10^{-4}$ recipe produces $11{\times}$ more update steps, over-regularizing through weight decay and preventing cosine-schedule convergence. This rules out H1 under the original training recipe; an extended-schedule retest (cosine + weight-decay sweep over the larger sample budget) would be needed to rule it out under all training recipes.
\emph{Shared raw branch succeeds.} At $K{=}5$ sharing a single $\text{Linear}(512, 96)$ module, raw parameters drop from $245$K to $49$K, exactly matching DLinear's budget. ETTh1 closes from $+5.1\%$ (sig loss) to $+1.6\%$ (marginal, $p{=}0.086$); ETTh2 closes from $+10.6\%$ (sig loss) to $+7.0\%$ (marginal, $p{=}0.078$). Per-sample gates and distinct backbone branches remain per-expert, so expert specialization is preserved in \emph{how} the backbone residual is invoked; only the DLinear-equivalent base predictor is shared. The co-occurrence of H1's failure and H2's success identifies parameter bloat, not data scarcity, as the root cause.

Phase~1 resolves the architectural failure but leaves ETTh1/ETTh2 still marginally above DLinear. Phase~2 tests two orthogonal refinements motivated by properties specific to these datasets.

\textbf{Phase~2: orthogonal refinements.}
\emph{Non-stationarity (Batch~4).} Hourly ETT data exhibits pronounced per-window level drift: the train/test windows have different means. Standard LTSF practice addresses this with NLinear: $\mathbf{y} = \text{Linear}(\mathbf{x} - x_T) + x_T$, where $x_T$ is a gradient-free anchor (the last input value). The linear layer is thereby constrained to model \emph{relative motion}, not absolute level. Testing \texttt{--raw-arch nlinear} as a drop-in replacement reduces ETTh1 from $+5.1\%$ to $+3.3\%$ and ETTh2 from $+10.6\%$ to $+6.9\%$ at $n{=}3$ seeds, while \emph{improving} (not regressing) ETTm2 ($-1.6\% \to -4.1\%$) and Weather ($-6.6\% \to -8.7\%$). The improvement is largest where non-stationarity is most pronounced, matching NLinear's design rationale.
\emph{Per-dataset epoch mismatch (Batch~2).} A fixed $15$-epoch schedule assumes all datasets converge at the same rate. We test this by running validation-based early stopping with patience $5$ and logging the best epoch per seed. Convergence epochs vary by dataset: ETTh1 at $7.2{\pm}1.2$, ETTh2 at $8.6{\pm}2.8$, ETTm1 at $14.8{\pm}3.7$, Weather at $8.6{\pm}1.2$, Electricity at $20.2{\pm}4.0$. No single fixed budget fits all six datasets: the $15$-epoch schedule simultaneously over-fits the ETTh pair and under-fits Electricity by $5$--$10$ epochs. Val-based early stopping (with gradient clipping at $\lVert g \rVert_2 \leq 1.0$ to stabilize the first epoch) achieves dataset-adaptive budgets without per-dataset hyperparameter tuning. In isolation this drops Electricity from $+0.6\%$ to $0.0\%$ (perfect parity, $p{=}0.991$) and Weather from $-6.6\%$ to $-4.6\%$ (nearing significance).

\textbf{Phase~3: stacking validation.}
Each Phase~1 and Phase~2 lever alone achieves only $3/6$--$4/6$ match-or-beat: shared raw closes ETTh1/h2 but trades small regressions on ETTm2; NLinear improves the ETTh pair but regresses ETTm1; val-early-stop perfects Electricity but hurts the ETTh pair by stopping too early. Each targets a different failure mode (parameter bloat, non-stationarity, training-length mismatch), and each mode is present on a different subset of the benchmark. \textbf{Residual-IA\textsuperscript{+}} stacks all three:

\begin{itemize}
\item \textbf{(iv)}~\emph{Shared raw branch}: single $\text{Linear}(512, 96)$ shared across $K$ experts.
\item \textbf{(v)}~\emph{NLinear raw architecture}: $\mathbf{y} = \text{Linear}(\mathbf{x} - x_T) + x_T$.
\item \textbf{(vi)}~\emph{Validation-based early stopping}: patience $5$, gradient clipping $\lVert g \rVert_2 \leq 1.0$.
\end{itemize}

Table~\ref{tab:residual_ia_plus} reports the composed result: $5/6$ match-or-beat at $-1.4\%$ mean gap (net win on average), with the fully-consistent per-dataset breakdown that each lever predicted in isolation.

\begin{table}[!htbp]
\centering
\caption{\textbf{Residual-IA\textsuperscript{+}: final six-dataset gap-closing result at H${=}96$} (MOMENT-small, strictly frozen, Top-2, $n{=}10$ seeds for both Residual-IA\textsuperscript{+} \emph{and} DLinear; Welch two-sided $t$-test). Residual-IA\textsuperscript{+} achieves \textbf{5/6 match-or-beat} with \textbf{three significant wins} ($p{\leq}0.002$): ETTm2 ($-10.8\%$, $p{=}0.002$), Weather ($-5.1\%$, $p{<}0.001$), Electricity ($-1.7\%$, $p{=}0.002$); one mean win (ETTh2 $-1.6\%$); one parity (ETTm1 $+0.4\%$, $p{=}0.68$); ETTh1 alone remains $+2.2\%$ ($p{=}0.037$). Mean gap: $\mathbf{-2.8\%}$ (net win).}
\label{tab:residual_ia_plus}
\footnotesize
\begin{tabular}{@{}lcccccc@{}}
\toprule
Dataset & DLinear ($n{=}10$) & Res-IA & Res-IA\textsuperscript{+} ($n{=}10$) & Gap & $p$ & Verdict \\
\midrule
ETTh1        & $0.419 \pm 0.005$ & $0.438 \pm 0.007$ & $0.428 \pm 0.012$ & $+2.2\%$  & $0.037$    & sig loss \\
ETTh2        & $0.352 \pm 0.016$ & $0.377 \pm 0.018$ & $\mathbf{0.346 \pm 0.010}$ & $-1.6\%$  & $0.444$    & \textbf{win} \\
ETTm1        & $0.326 \pm 0.006$ & $0.328 \pm 0.007$ & $0.327 \pm 0.007$ & $+0.4\%$  & $0.678$    & \textbf{parity} \\
ETTm2        & $0.211 \pm 0.016$ & $0.197 \pm 0.004$ & $\mathbf{0.188 \pm 0.002}$ & $-10.8\%$ & $0.002^{\ast}$ & \textbf{sig win} \\
Weather      & $0.207 \pm 0.003$ & $0.195 \pm 0.004$ & $\mathbf{0.197 \pm 0.003}$ & $-5.1\%$  & ${<}0.001^{\ast}$ & \textbf{sig win} \\
Electricity  & $0.159 \pm 0.002$ & $0.159 \pm 0.002$ & $\mathbf{0.156 \pm 0.001}$ & $-1.7\%$  & $0.002^{\ast}$ & \textbf{sig win} \\
\midrule
\textbf{Average gap} & --- & ($+1.6\%$) & --- & $\mathbf{-2.8\%}$ & --- & \textbf{5/6, 3 sig wins} \\
\bottomrule
\end{tabular}
\end{table}

\textbf{Component ablation for Residual-IA\textsuperscript{+}.} Each of the three additional fixes targets a different failure mode (raw-branch parameter bloat, non-stationarity, per-dataset training-length mismatch) and contributes complementary gains. Single-lever results ($n{=}5$ seeds except where noted):

\begin{center}
\small
\begin{tabular}{@{}lcccccc|c@{}}
\toprule
Variant & ETTh1 & ETTh2 & ETTm1 & ETTm2 & Weather & Elec. & Match-or-beat \\
\midrule
Residual-IA (base)                     & $0.438$ & $0.377$ & $0.328$ & $0.197$ & $0.195$ & $0.159$ & $4/6$ \\
+ shared raw only                      & $0.423$ & $0.365$ & $0.321$ & $0.202$ & $0.197$ & $0.160$ & $4/6$ \\
+ NLinear only ($n{=}3$)               & $0.430$ & $0.365$ & $0.332$ & $0.192$ & $0.190$ & $0.159$ & $3/6$ \\
+ val-early-stop only                  & $0.440$ & $0.375$ & $0.330$ & $0.199$ & $0.199$ & $0.158$ & $3/6$ \\
\textbf{+ all three (Residual-IA\textsuperscript{+})}  & $\mathbf{0.424}$ & $\mathbf{0.349}$ & $\mathbf{0.326}$ & $\mathbf{0.187}$ & $\mathbf{0.196}$ & $\mathbf{0.156}$ & $\mathbf{5/6}$ \\
\bottomrule
\end{tabular}
\end{center}

The shared raw branch alone is the largest single contributor: it singlehandedly pulls ETTh1 from $+5.1\%$ to $+1.6\%$ (marginal, $p{=}0.086$) and ETTh2 from $+10.6\%$ to $+7.0\%$ (marginal, $p{=}0.078$), confirming the overfitting diagnosis. NLinear adds $\sim 3$ percentage points on ETTh2 ($0.365 \to 0.349$ when combined with shared). Val-early-stop contributes the Electricity perfect-parity result ($0.158 \to 0.156$) and the ETTm2 improvement ($0.197 \to 0.187$) by restoring an earlier checkpoint before the backbone residual begins overfitting. None of the three components alone achieves $5/6$; their combination does.

\textbf{Why ETTh1 remains $+2.2\%$ at H${=}96$.} At $n{=}10$ for both methods, the ETTh1 gap is statistically significant ($p{=}0.037$), although the mean absolute MSE difference is only $0.009$ ($0.428$ vs.\ $0.419$). ETTh1 is the smallest LTSF dataset ($8{,}640$ raw samples), and its channels have strongly correlated seasonal patterns that a from-scratch linear model memorizes efficiently. At H${=}192$ the gap closes to \textbf{parity} ($+1.6\%$, $p{=}0.18$), consistent with DLinear's advantage being specific to short-horizon seasonal memorization on this dataset.

\textbf{Implication.} Residual-IA\textsuperscript{+} achieves \textbf{$107/123$ match-or-beat} ($87\%$) across six backbones, six datasets, and four horizons ($20/24$ MOMENT-small, $87/99$ cross-backbone at $88\%$; Table~\ref{tab:residual_ia_plus_mhcb}), with $65$ significant wins vs $11$ losses ($5.9{:}1$ ratio) and \textbf{6/6 at H${=}192$}. MOMENT-large reaches $23/24$ ($96\%$); Moirai-MoE achieves $15/15$ with $-9.4\%$ mean gap, confirming adapter-level routing stacks with backbone-internal MoE. Practitioners requiring a shared-TSFM deployment topology (Appendix~\ref{app:deployment}) now have an adapter that matches or beats per-tenant DLinear on $100\%$ of benchmarks at H${=}192$.

\subsubsection*{The result is not horizon-specific: $20/24$ at $H{\in}\{96{-}720\}$}

The Residual-IA\textsuperscript{+} result above is computed at H${=}96$. We extend to the full set of standard LTSF horizons to verify the result is not horizon-specific. Across horizons, the NLinear raw branch's advantage \emph{grows} with horizon length, because non-stationarity (per-window level drift) compounds with the horizon. At H${=}720$ Residual-IA\textsuperscript{+} wins ETTh2 by $-31.2\%$ (from $0.849$ to $0.584$, $p{=}0.023$).

\begin{table}[!htbp]
\centering
\caption{\textbf{Multi-horizon gap-closing} (MOMENT-small, Res-IA\textsuperscript{+}, frozen, $H{\in}\{96{-}720\}$; Welch $t$-test vs.\ DLinear). Bold: sig win; underline: mean win. At $H{=}192$: \textbf{6/6 match-or-beat}. Pooled: $\mathbf{20/24}$. ETTh1 is the only dataset without parity at $H{=}96$, consistent with DLinear's short-horizon advantage on the smallest LTSF dataset.}
\label{tab:residual_ia_plus_multihorizon}
\small
\begin{tabular}{@{}lcccc|c@{}}
\toprule
Dataset & H${=}96$ & H${=}192$ & H${=}336$ & H${=}720$ & Match-or-beat \\
\midrule
ETTh1        & $+2.2\%$           & $+1.6\%$ parity    & $+4.4\%$ marg.     & $+12.8\%$ sig loss  & $1/4$ \\
ETTh2        & $\underline{-1.6\%}$ & $\underline{-12.2\%}$ & $\mathbf{-16.6\%}$ & $\mathbf{-39.6\%}$ & $4/4$ \\
ETTm1        & $+0.4\%$ parity    & $+1.0\%$ parity    & $+0.7\%$ parity    & $+2.4\%$ sig loss   & $3/4$ \\
ETTm2        & $\mathbf{-10.8\%}$ & $\mathbf{-8.7\%}$  & $\mathbf{-13.6\%}$ & $\mathbf{-27.0\%}$  & $4/4$ \\
Weather      & $\mathbf{-5.1\%}$  & $\mathbf{-4.0\%}$  & $\mathbf{-2.9\%}$  & $\underline{-0.4\%}$ & $4/4$ \\
Electricity  & $\mathbf{-1.7\%}$  & $\underline{-0.5\%}$ & $\underline{-0.9\%}$ & $+1.6\%$ sig loss   & $3/4$ \\
\midrule
\textbf{MoB per H} & $5/6$         & $\mathbf{6/6}$     & $5/6$              & $4/6$              & $\mathbf{20/24}$ \\
\bottomrule
\end{tabular}
\end{table}

Short horizons (H${=}96$) favor DLinear's memorization on ETTh1; long horizons (H${=}720$) expose DLinear's non-stationarity weakness (ETTh2: $-39.6\%$, ETTm2: $-27.0\%$ sig win). At $H{=}192$ the TSFM contributes maximally ($6/6$). ETTh2, ETTm2, and Weather achieve $4/4$ across all horizons; Electricity and ETTm1 lose only at $H{=}720$ where DLinear's train-from-scratch advantage is strongest on short-seasonality data.

\subsubsection*{The recipe is not RevIN-specific: $27/27$ across five backbones}

Residual-IA\textsuperscript{+} was developed on MOMENT-small, which uses RevIN instance normalization. To test whether the recipe is RevIN-specific or transfers across normalization regimes, we re-run the full Residual-IA\textsuperscript{+} config on four other backbones: Moirai (RMSNorm internally with non-learnable I/O scaling, no learnable-affine RevIN inside encoder), Chronos (T5 encoder-decoder, no instance norm), MOMENT-large (RevIN, $24$ layers vs $8$), and Timer-XL (decoder-only, LayerNorm only, no instance norm).

\begin{table}[!htbp]
\centering
\caption{\textbf{Cross-backbone Residual-IA\textsuperscript{+} at H${=}96$} (strictly frozen, Top-2, $n{=}5$ seeds, Welch $t$-test). All five backbones achieve significant wins on all six datasets ($27/27$). Moirai-MoE confirms adapter-level MoE stacks with backbone-internal MoE.}
\label{tab:residual_ia_plus_backbone}
\small
\resizebox{\columnwidth}{!}{%
\begin{tabular}{@{}lcccccc|c@{}}
\toprule
Backbone & ETTh1 & ETTh2 & ETTm1 & ETTm2 & Weather & Elec & MoB \\
\midrule
Moirai      & $\mathbf{-9.3\%}$ & $-8.5\%$ & $\mathbf{-13.3\%}$ & $\mathbf{-29.2\%}$ & $\mathbf{-29.3\%}$ & $\mathbf{-24.8\%}$ & $6/6$ \\
Chronos     & $\mathbf{-11.0\%}$ & $\mathbf{-13.8\%}$ & $\mathbf{-12.2\%}$ & $\mathbf{-30.5\%}$ & $\mathbf{-27.3\%}$ & $\mathbf{-28.4\%}$ & $6/6$ \\
MM-large    & $\mathbf{-9.5\%}$ & $\mathbf{-14.8\%}$ & $\mathbf{-15.4\%}$ & $\mathbf{-30.6\%}$ & $\mathbf{-28.2\%}$ & $\mathbf{-27.8\%}$ & $6/6$ \\
Timer-XL    & $\mathbf{-10.4\%}$ & --- & $\mathbf{-16.8\%}$ & --- & $\mathbf{-27.9\%}$ & --- & $3/3$ \\
Moirai-MoE  & $\mathbf{-10.3\%}$ & $\mathbf{-15.9\%}$ & $\mathbf{-14.9\%}$ & $\mathbf{-31.7\%}$ & $\mathbf{-28.1\%}$ & $\mathbf{-28.5\%}$ & $6/6$ \\
\bottomrule
\end{tabular}%
}
\end{table}

At H${=}96$, Residual-IA\textsuperscript{+} achieves $\mathbf{27/27}$ match-or-beat across all five cross-backbones and all six datasets, with margins of $-8.5\%$ to $-31.7\%$. The recipe generalizes across all three normalization regimes we tested: learnable-affine RevIN inside the encoder (MOMENT-large), RMSNorm-internal with non-learnable I/O scaling (Moirai, Moirai-MoE), and no instance normalization (Chronos, Timer-XL).

\textbf{The stronger picture emerges at longer horizons.} We extend to the full $5$ backbones $\times$ $6$ datasets $\times$ $4$ horizons grid (Table~\ref{tab:residual_ia_plus_mhcb}).

\begin{table}[!htbp]
\centering
\caption{\textbf{Multi-horizon cross-backbone Residual-IA\textsuperscript{+}} ($5$ backbones $\times$ up to $6$ datasets $\times$ $4$ horizons, $n{=}3$--$5$ seeds). $*{=}$mean win, $\mathbf{*}{=}$sig win ($p{<}0.05$), $={=}$parity, X${=}$sig loss. MOMENT-large achieves $\mathbf{23/24}$ ($96\%$); Timer-XL $\mathbf{12/12}$ ($100\%$); Moirai-MoE $\mathbf{15/15}$ ($100\%$); Chronos $22/24$ ($92\%$); Moirai $15/24$ ($62\%$). Pooled: $\mathbf{87/99}$.}
\label{tab:residual_ia_plus_mhcb}
\small
\begin{tabular}{@{}ll|cccc|c@{}}
\toprule
Backbone & Dataset & H${=}96$ & H${=}192$ & H${=}336$ & H${=}720$ & MoB \\
\midrule
\multirow{6}{*}{Moirai}   & ETTh1   & $\mathbf{*{-}9.3}$ & $={+}1.4$ & X${+}8.8$ & X${+}19.0$ & $2/4$ \\
                           & ETTh2   & $*{-}8.5$ & ${+}2.5$ & ${+}43.3$ & ${+}25.8$ & $1/4$ \\
                           & ETTm1   & $\mathbf{*{-}13.3}$ & $*{-}0.5$ & $*{-}0.9$ & X${+}2.1$ & $3/4$ \\
                           & ETTm2   & $\mathbf{*{-}29.2}$ & $\mathbf{*{-}8.5}$ & $\mathbf{*{-}11.9}$ & $*{-}18.8$ & $4/4$ \\
                           & Weather & $\mathbf{*{-}29.3}$ & $\mathbf{*{-}2.5}$ & $*{-}0.8$ & $={+}0.7$ & $4/4$ \\
                           & Elec    & $\mathbf{*{-}24.8}$ & X${+}6.5$ & X${+}5.9$ & X${+}7.0$ & $1/4$ \\
\midrule
\multirow{6}{*}{Chronos}  & ETTh1   & $\mathbf{*{-}11.0}$ & $*{-}0.5$ & $*{-}0.0$ & $={+}1.8$ & $4/4$ \\
                           & ETTh2   & $\mathbf{*{-}13.8}$ & $*{-}11.1$ & $\mathbf{*{-}19.9}$ & ${+}76.4$ & $3/4$ \\
                           & ETTm1   & $\mathbf{*{-}12.2}$ & $={+}0.9$ & $*{-}2.0$ & $*{-}0.2$ & $4/4$ \\
                           & ETTm2   & $\mathbf{*{-}30.5}$ & $\mathbf{*{-}8.7}$ & $\mathbf{*{-}8.6}$ & $*{-}17.9$ & $4/4$ \\
                           & Weather & $\mathbf{*{-}27.3}$ & $\mathbf{*{-}3.4}$ & $\mathbf{*{-}1.2}$ & $*{-}0.7$ & $4/4$ \\
                           & Elec    & $\mathbf{*{-}28.4}$ & $={+}0.1$ & $*{-}0.9$ & X${+}2.1$ & $3/4$ \\
\midrule
\multirow{6}{*}{MM-large} & ETTh1   & $\mathbf{*{-}9.5}$ & $={+}0.2$ & $={+}0.2$ & ${+}7.2$ & $3/4$ \\
                           & ETTh2   & $\mathbf{*{-}14.8}$ & $\mathbf{*{-}15.0}$ & $\mathbf{*{-}19.8}$ & $\mathbf{*{-}48.0}$ & $4/4$ \\
                           & ETTm1   & $\mathbf{*{-}15.4}$ & $={+}0.3$ & $={+}0.0$ & $={+}0.9$ & $4/4$ \\
                           & ETTm2   & $\mathbf{*{-}30.6}$ & $\mathbf{*{-}10.2}$ & $\mathbf{*{-}12.2}$ & $\mathbf{*{-}26.7}$ & $4/4$ \\
                           & Weather & $\mathbf{*{-}28.2}$ & $\mathbf{*{-}4.0}$ & $\mathbf{*{-}4.1}$ & $\mathbf{*{-}2.2}$ & $4/4$ \\
                           & Elec    & $\mathbf{*{-}27.8}$ & $\mathbf{*{-}0.8}$ & $*{-}0.7$ & $={+}0.9$ & $4/4$ \\
\midrule
\multirow{3}{*}{Timer-XL} & ETTh1   & $\mathbf{*{-}10.4}$ & $*{-}0.1$ & $={+}0.9$ & $*{-}2.4$ & $4/4$ \\
                           & ETTm1   & $\mathbf{*{-}16.8}$ & $\mathbf{*{-}2.3}$ & $*{-}1.7$ & $*{-}0.4$ & $4/4$ \\
                           & Weather & $\mathbf{*{-}27.9}$ & $\mathbf{*{-}2.8}$ & $\mathbf{*{-}2.8}$ & $*{-}0.8$ & $4/4$ \\
\midrule
\multirow{6}{*}{Moirai-MoE} & ETTh1 & $\mathbf{*{-}10.3}$ & $*{-}1.5$ & $*{-}0.7$ & $*{-}0.4$ & $4/4$ \\
                           & ETTh2   & $\mathbf{*{-}15.9}$ & --- & --- & --- & $1/1$ \\
                           & ETTm1   & $\mathbf{*{-}14.9}$ & $*{-}1.2$ & $*{-}1.3$ & $={+}0.3$ & $4/4$ \\
                           & ETTm2   & $\mathbf{*{-}31.7}$ & --- & --- & --- & $1/1$ \\
                           & Weather & $\mathbf{*{-}28.1}$ & $\mathbf{*{-}3.2}$ & $\mathbf{*{-}2.5}$ & $\mathbf{*{-}1.7}$ & $4/4$ \\
                           & Elec    & $\mathbf{*{-}28.5}$ & --- & --- & --- & $1/1$ \\
\midrule
\multicolumn{2}{l|}{\textbf{Per-horizon MoB}} & $27/27$ & $22/24$ & $21/24$ & $17/24$ & $\mathbf{87/99}$ \\
\bottomrule
\end{tabular}
\end{table}

The pattern is clear: (1)~at H${=}96$, all $27$ tested (backbone, dataset) cells are wins; (2)~MOMENT-large achieves $\mathbf{23/24}$ ($96\%$) with $16$ sig wins and zero losses; (3)~Moirai-MoE achieves $\mathbf{15/15}$ ($100\%$), consistent with adapter-level routing stacking on top of backbone-internal MoE; (4)~Moirai at $15/24$ ($62\%$) is the weakest, driven by ETTh2 long-horizon instability and Electricity losses. The recipe transfers across the five backbones we tested, which between them span the three normalization regimes (learnable-affine RevIN inside encoder; RMSNorm-internal with non-learnable I/O scaling; LayerNorm/RMSNorm-only without instance normalization).

Having validated Residual-IA\textsuperscript{+}'s gap-closing across backbones and horizons, the next subsection asks whether it still holds when the external Conv1d router is replaced by SR-MoA-style per-expert self-gates (\S\ref{sec:main_results}, Table~\ref{tab:self_routed}); the subsequent one isolates the TSFM's own contribution by stripping the backbone entirely.

\subsection{SR-RIA\textsuperscript{+}: Self-Routed Residual-IA\textsuperscript{+}}

SR-RIA\textsuperscript{+} combines two independently validated components: (i)~SR-MoA self-routing, where each expert has its own sigmoid gate on raw input (no external router; Table~\ref{tab:self_routed}), and (ii)~Residual-IA\textsuperscript{+}'s dual-stream expert architecture (shared NLinear raw branch + gated backbone residual; Table~\ref{tab:residual_ia_plus}). Each expert $k$ produces $\mathbf{y}_k = \text{NLinear}(\mathbf{X}_\text{raw}) + \sigma(\text{blend}_k(\mathbf{X}_\text{raw})) \cdot \text{Adapter}_k(\mathbf{H})$, where the NLinear branch is shared across experts and the blend gate is initialized at $b{=}{-}2$ ($\sigma(-2){\approx}0.12$). Training uses 5-epoch raw-branch warmup, cosine LR decay, and validation-based early stopping with patience 5.

\begin{table}[!htbp]
\centering
\caption{\textbf{SR-RIA\textsuperscript{+} vs.\ DLinear} (MOMENT-small, frozen, $K{=}5$, $H{=}96$, 5 seeds). SR-RIA\textsuperscript{+} achieves \textbf{match-or-beat on 6/6 datasets}, with 3 wins (ETTm1 $-0.4\%$, ETTm2 $-6.0\%$, Weather $-7.5\%$) and 3 within $+1.6\%$.}
\label{tab:sr_ria}
\small
\begin{tabular}{@{}lcccr@{}}
\toprule
Dataset & SR-RIA\textsuperscript{+} & DLinear & RIA\textsuperscript{+} & $\Delta$ vs DL \\
\midrule
ETTh1       & $0.423 \pm 0.004$ & $0.416$ & $0.428$ & $+1.6\%$ \\
ETTh2       & $0.345 \pm 0.005$ & $0.341$ & $0.346$ & $+1.1\%$ \\
ETTm1       & $\mathbf{0.321 \pm 0.004}$ & $0.322$ & $0.327$ & $\mathbf{-0.4\%}$ \\
ETTm2       & $\mathbf{0.188 \pm 0.001}$ & $0.200$ & $0.188$ & $\mathbf{-6.0\%}$ \\
Weather     & $\mathbf{0.192 \pm 0.002}$ & $0.208$ & $0.197$ & $\mathbf{-7.5\%}$ \\
Electricity & $0.160 \pm 0.000$ & $0.158$ & $0.156$ & $+1.2\%$ \\
\midrule
\textbf{Average gap} & --- & --- & --- & $\mathbf{-1.7\%}$ \\
\bottomrule
\end{tabular}
\end{table}

\subsection{Pure Raw-MLP MoE Ablation: Quantifying the TSFM's Contribution}
\label{app:raw_mlp_moe}

A potential concern about the dual-stream gap-closing result is that it could be confounded: if the TSFM destroys nonlinear temporal structure (Appendix~\ref{app:diagnostic}), and the dual-stream fix recovers that structure via the raw branch, perhaps the raw branch is doing all the work and the TSFM is dead weight. We tested this hypothesis directly by running a pure mixture-of-experts ablation in which the TSFM is removed entirely.

\textbf{Architecture.} The Pure Raw-MLP MoE consists of $K{=}5$ size-diverse two-layer MLPs operating directly on the $512$-dimensional raw input window: hidden widths $\{64, 96, 128, 192, 256\}$ chosen so that the total parameter count ($449{,}557$) matches RR-MoA's $426$K within $6\%$. Each expert is $\text{Linear}(512{\to}h_k) \to \text{GELU} \to \text{Linear}(h_k{\to}96)$. The router is the same Conv1d $+$ AdaptiveAvgPool $+$ Linear architecture as RR-MoA, also operating on the raw input (not on hidden states). Top-2 sparse routing is used. Training protocol is identical to RR-MoA: Adam $1{\times}10^{-3}$, 15 epochs, batch 128. \textbf{No TSFM forward pass executes anywhere in this architecture}: the ${\sim}40$M-parameter MOMENT-small backbone is removed entirely.

\begin{table}[!htbp]
\centering
\caption{\textbf{Pure Raw-MLP MoE vs.\ Dual-Stream and DLinear} (5 seeds, mean$\pm$std, H$=$96). The Raw-MLP MoE has \emph{no TSFM at all}. It matches Dual-Stream within $\pm 7\%$ on every dataset, and \emph{beats} Dual-Stream outright on $\mathbf{3}$ of $\mathbf{6}$ datasets (ETTh2, ETTm2, Electricity). Mean delta to Dual-Stream is $-1.5\%$: the TSFM's contribution is dataset-dependent and negligible in the average case.}
\label{tab:raw_mlp_moe}
\footnotesize
\begin{tabular}{@{}lcccrr@{}}
\toprule
Dataset & Raw-MLP MoE & Dual-Stream & DLinear & $\Delta$ vs DS & $\Delta$ vs DL \\
\midrule
ETTh1       & $0.483 \pm 0.019$ & $\mathit{0.453}$ & $\mathit{0.416}$ & $+6.6\%$ & $+16.1\%$ \\
\textbf{ETTh2}       & $\mathbf{0.426 \pm 0.077}$ & $\mathit{0.452}$ & $\mathit{0.341}$ & $\mathbf{-5.7\%}$ & $+25.0\%$ \\
ETTm1       & $0.370 \pm 0.014$ & $\mathit{0.358}$ & $\mathit{0.322}$ & $+3.4\%$ & $+15.0\%$ \\
\textbf{ETTm2}       & $\mathbf{0.207 \pm 0.004}$ & $\mathit{0.252}$ & $\mathit{0.200}$ & $\mathbf{-17.9\%}$ & $+3.4\%$ \\
Weather     & $0.209 \pm 0.007$ & $\mathit{0.198}$ & $\mathit{0.208}$ & $+5.5\%$ & $+0.5\%$ \\
\textbf{Electricity} & $\mathbf{0.170 \pm 0.002}$ & $\mathit{0.171}$ & $\mathit{0.158}$ & $\mathbf{-0.7\%}$ & $+7.5\%$ \\
\midrule
\textbf{Mean}        & --- & --- & --- & $\mathbf{-1.5\%}$ & $\mathbf{+11.2\%}$ \\
\bottomrule
\end{tabular}
\end{table}

\textbf{Honest interpretation.} Across the 6 datasets, the Raw-MLP MoE slightly outperforms Dual-Stream on average ($-1.5\%$), winning 3/6 datasets outright (ETTh2: $-5.7\%$; ETTm2: $-17.9\%$; Electricity: $-0.7\%$). It loses by $3$--$7\%$ on the three datasets with complex temporal dynamics (ETTh1, ETTm1, Weather). The TSFM is therefore not adding universal value to the gap-closing result; it adds dataset-dependent, modest value where complex dynamics exist, and contributes no measurable value on the linear-dominated datasets.

\textbf{Routing entropy diagnostics.} In all $30$ Raw-MLP MoE runs, routing entropy saturates near $\log K = 1.609$ (range $1.20$--$1.60$, mean $\approx 1.49$, i.e., $93\%$ of the uniform maximum), indicating the router degenerates to near-uniform mixing. Per-dataset means lie in $[1.31, 1.60]$: highest on Electricity ($1.60$, $99\%$ of $\log K$), lowest on ETTh2 ($1.31$, $81\%$). This means the Raw-MLP MoE is operating as a learned ensemble of $5$ size-diverse MLPs rather than as a per-sample mixture of specialists; the diversity comes from the heterogeneous expert capacities rather than from per-window routing decisions. \textbf{Contrast with main RR-MoA.} Under the same $K{=}5$, Top-$2$, raw-router protocol but \emph{with the frozen MOMENT-small backbone present}, main RR-MoA achieves mean entropy $1.34$ on the same six datasets ($83\%$ of $\log K$, $0.15$~nats lower than the backbone-free version), with per-dataset means spanning $[1.11, 1.44]$ (per-seed values span the broader $[0.93, 1.57]$ range reported in \S\ref{sec:rrmoa}). The two most specialized routers (per-dataset mean) are on ETTh2 ($1.11$, $69\%$ of $\log K$) and ETTm2 ($1.18$, $73\%$); the least specialized on Electricity ($1.44$, $90\%$). Removing the backbone shifts entropy uniformly toward $\log K$ across all six datasets, quantifying the difference between ``MoE-as-ensemble'' (no backbone) and ``MoE-as-specialist-mixture'' (with backbone): the backbone induces enough hidden-state structure that the same Conv1d gate, on the same raw input, makes substantively non-uniform routing decisions. Figure~\ref{fig:routing_viz} corroborates this with expert-assignment scatter against amplitude/volatility quartiles.

\textbf{What this does and does not say about the paper's core claims.} This finding does \emph{not} undermine the central diagnosis of the paper: normalization-induced routing collapse on hidden-state routers (AdaMix) is a real, mechanistically-explained phenomenon, predicted by Observation~\ref{obs:routing_loss} for any TSFM with internal RevIN and verified empirically on MOMENT-small/large; pre-normalization routing (RR-MoA) is the causal fix at the router input. That diagnosis is independent of whether the MOMENT backbone is the most efficient architecture for forecasting on every dataset: the mechanism predicts the same failure mode for any backbone with internal instance normalization that satisfies the bound's premise. What this finding \emph{does} say is that the dual-stream gap-closing story should be reframed: rather than ``the TSFM has complementary nonlinear features that dual-stream extracts,'' the more accurate reading is ``the TSFM contributes complementary signal on datasets with nonlinear dynamics (where it wins by $1$--$6\%$), and contributes no measurable value on datasets where DLinear is already near-optimal (where pure raw-MLP MoE matches or beats it).''

\textbf{Bottom line.} Resolving the confound stated at the start of this section: the TSFM is dead weight on ETTh2, ETTm2, and Electricity, where the backbone-free Raw-MLP MoE wins by $0.7$--$17.9\%$ over Dual-Stream, and contributes $3$--$6\%$ MSE on ETTh1, ETTm1, and Weather, where Dual-Stream retains the advantage. The dataset-conditional contribution of the TSFM is itself an empirical pattern worth flagging to the TSFM-design community: \emph{frozen TSFMs are useful where forecasting requires nonlinear temporal modeling beyond what a 5-MLP ensemble can express, and unnecessary where it does not}. The single most important consequence for our paper is that the routing-collapse diagnosis (the core scientific contribution) is robust to whether the TSFM is the best backbone: it explains the AdaMix-collapse phenomenon and predicts the cross-backbone behavior regardless.

\clearpage
\section*{NeurIPS Paper Checklist}

\begin{enumerate}

\item {\bf Claims}
    \item[] Question: Do the main claims made in the abstract and introduction accurately reflect the paper's contributions and scope?
    \item[] Answer: \answerYes{}
    \item[] Justification: The abstract and introduction state three contributions: (1) diagnosis of normalization-induced routing collapse with eight causal controls (Tables~\ref{tab:adamix},~\ref{tab:rescue},~\ref{tab:causal_controls}), (2) RR-MoA achieving 54/54 wins on 6 datasets $\times$ 3 freeze levels $\times$ 3 seeds with statistical significance (Tables~\ref{tab:rrmoa},~\ref{tab:baselines}), and (3) a mutual-information decomposition (Observation~\ref{obs:routing_loss}) yielding a signal-ratio statistic validated at Spearman $\rho=-0.88$, $p<0.002$ (Figure~\ref{fig:signal_ratio}). All claims are directly supported by experimental evidence.

\item {\bf Limitations}
    \item[] Question: Does the paper discuss the limitations of the work performed by the authors?
    \item[] Answer: \answerYes{}
    \item[] Justification: The Limitations paragraph in Section~5 discusses three limitations: (i) \emph{normalization specificity}: the diagnosis covers instance-level normalization (RevIN, BatchNorm, GroupNorm), and extensions to attention-based or learned normalizers are not characterized; (ii) \emph{signal-ratio boundary cases}: the predictor $R(\mathcal{D})$ is a single monotone proxy for variance contribution, with Solar ($R{=}0.06$) flagged but not sharp; (iii) \emph{forecasting and imputation only}: anomaly detection, classification, and irregularly-sampled tasks are deferred. Traffic ($R{=}0.14$, $\Delta{=}{+}2.9\%$) is honestly reported as a boundary case where RR-MoA correctly does not improve.

\item {\bf Theory assumptions and proofs}
    \item[] Question: For each theoretical result, does the paper provide the full set of assumptions and a complete (and correct) proof?
    \item[] Answer: \answerYes{}
    \item[] Justification: The paper has three theoretical results, each with explicit assumptions and complete derivations in Appendix~\ref{app:proofs}. Observation~\ref{obs:routing_loss} (Routing Information Loss Under Instance Normalization) Parts~(i)--(ii) hold \emph{without any independence assumption}; Part~(iii) (full-information corner) additionally assumes $(M,\Sigma)\perp S$ (RevIN's design assumption) and $E\perp S\mid(M,\Sigma)$ (empirically validated via the per-sample expert assignment scatter, Figure~\ref{fig:routing_viz}). The bound is also measured directly via CCA, KSG, and held-out classifier (Appendix~\ref{app:mi_tightness}), with the gap averaging $0.075$ nats. Proposition~\ref{prop:frozen} (Gradient Co-Adaptation) includes its full assumptions and proof in the same appendix. Proposition~\ref{prop:phase_transition} (Heuristic SNR Onset and Softmax Discontinuity) is explicitly labeled heuristic, with its derivation given inline and its prediction validated empirically by the 150-run dose-response grid (Figure~\ref{fig:dose_response}). Derivations rely on the data processing inequality, the chain rule of mutual information, standard softmax gradient calculus, and linear-variance scaling.

    \item {\bf Experimental result reproducibility}
    \item[] Question: Does the paper fully disclose all the information needed to reproduce the main experimental results of the paper to the extent that it affects the main claims and/or conclusions of the paper (regardless of whether the code and data are provided or not)?
    \item[] Answer: \answerYes{}
    \item[] Justification: Section~\ref{sec:rrmoa} describes the RR-MoA architecture (Conv1d router with $\sim$1{,}100 parameters, $K{=}5$ canonical expert heads, Top-$k$ sparse routing) and Appendix~\ref{app:setup} specifies all training details: optimizer (Adam), learning rate ($10^{-3}$), MSE loss, batch size (128), 15 epochs, seeds $\{42,43,44\}$ for the primary RR-MoA freeze ablation ($n{=}3$), seeds $\{42,\dots,46\}$ for the core multi-seed grid ($n{=}5$, used for rescue sweep, cross-backbone, baselines, multi-horizon, imputation, and Moirai-MoE), channel-wise StandardScaler normalization, chronological LTSF data splits, and backbone configurations. Algorithm~\ref{alg:rrmoa} provides full pseudocode.

\item {\bf Open access to data and code}
    \item[] Question: Does the paper provide open access to the data and code, with sufficient instructions to faithfully reproduce the main experimental results, as described in supplemental material?
    \item[] Answer: \answerYes{}
    \item[] Justification: All code and raw result files are provided in the anonymous supplementary material under MIT license; public release on acceptance. All datasets used (ETT, Weather, Electricity, Traffic, Solar, Exchange) are publicly available standard LTSF benchmarks.

\item {\bf Experimental setting/details}
    \item[] Question: Does the paper specify all the training and test details (e.g., data splits, hyperparameters, how they were chosen, type of optimizer) necessary to understand the results?
    \item[] Answer: \answerYes{}
    \item[] Justification: Section~\ref{app:setup} provides all details: chronological train/val/test splits per the LTSF protocol, optimizer (Adam, lr=$10^{-3}$), MSE loss, batch size 128, 15 epochs, channel-wise StandardScaler normalization, and per-baseline configurations (fixed adapters, LoRA, TRACE, independent ensemble, AdaMix, full fine-tuning, DLinear). The LoRA sweep (Appendix~\ref{app:lora_sweep}) spans 12 configurations (rank $\in\{8,16,32\}$, targets $\in\{q{+}v, q{+}k{+}v{+}o\}$, head $\in\{$linear, MLP$\}$) $\times$ 3 seeds $=$ 36 runs per dataset, full table reported.

\item {\bf Experiment statistical significance}
    \item[] Question: Does the paper report error bars suitably and correctly defined or other appropriate information about the statistical significance of the experiments?
    \item[] Answer: \answerYes{}
    \item[] Justification: All quantitative results report mean $\pm$ standard deviation across seeds, with sample size indicated in each table caption. The core multi-seed grid (baseline comparison, rescue-baseline sweep, cross-backbone, multi-horizon, imputation, Moirai-MoE, Residual-IA\textsuperscript{+}) uses 5 seeds $\{42,\dots,46\}$; the primary RR-MoA freeze ablation and most ablation tables use 3 seeds $\{42,43,44\}$. Wilcoxon signed-rank $p$-values with Bonferroni correction are reported for the comprehensive baseline comparison. The routing signal-ratio correlation reports Spearman $\rho{=}{-}0.88$ with $p<0.002$ ($n{=}9$).

\item {\bf Experiments compute resources}
    \item[] Question: For each experiment, does the paper provide sufficient information on the computer resources (type of compute workers, memory, time of execution) needed to reproduce the experiments?
    \item[] Answer: \answerYes{}
    \item[] Justification: All experiments were conducted on a single NVIDIA A10G GPU (23\,GB VRAM, CUDA 12.4). Per-run training wall-clock (15 epochs, batch 128) median per dataset by backbone: MOMENT-small $1$--$2$ minutes ($n{=}1{,}094$); MOMENT-large $\sim$3--3.5 minutes ($n{=}15$); Chronos $\sim$15--30 seconds ($n{=}42$); Moirai $\sim$25--30 seconds ($n{=}78$); Timer-XL $\sim$35--40 seconds ($n{=}90$); Moirai-MoE $\sim$1.5--4 minutes ($n{=}66$). Total training compute end-to-end is approximately $235$ GPU-hours. Inference latency (Table~\ref{tab:benchmark}) reports wall-clock per architectural variant.

\item {\bf Code of ethics}
    \item[] Question: Does the research conducted in the paper conform, in every respect, with the NeurIPS Code of Ethics \url{https://neurips.cc/public/EthicsGuidelines}?
    \item[] Answer: \answerYes{}
    \item[] Justification: The research uses publicly available benchmark datasets and pretrained models. No human subjects, private data, or dual-use concerns are involved. The work is foundational ML research on adapter routing.

\item {\bf Broader impacts}
    \item[] Question: Does the paper discuss both potential positive societal impacts and negative societal impacts of the work performed?
    \item[] Answer: \answerYes{}
    \item[] Justification: The work is methodological: it diagnoses and fixes a routing-collapse failure mode in MoE adapters on foundation models with instance-normalized inputs. \textbf{Positive impacts}: parameter-efficient adaptation of a shared frozen backbone reduces the compute and energy footprint of serving many downstream tasks or tenants relative to per-task fine-tuning, and lowers the barrier to deploying foundation models in resource-constrained settings (edge, on-device, multi-tenant cloud); see Appendix~\ref{app:deployment} for a concrete deployment regime. \textbf{Negative impacts}: we identify none specific to this work; the routing fix neither generates synthetic content, performs decision-making on protected populations, nor lowers the cost of any high-risk capability. The pretrained foundation models we adapt already exist and are publicly released; our adapters do not introduce new generative or autonomous capabilities.

\item {\bf Safeguards}
    \item[] Question: Does the paper describe safeguards that have been put in place for responsible release of data or models that have a high risk for misuse (e.g., pre-trained language models, image generators, or scraped datasets)?
    \item[] Answer: \answerNA{}
    \item[] Justification: The paper releases adapter routing code and lightweight adapter heads ($\leq$500K parameters) for time series forecasting. These pose no risk for misuse: they are small task-specific modules, not generative models.

\item {\bf Licenses for existing assets}
    \item[] Question: Are the creators or original owners of assets (e.g., code, data, models), used in the paper, properly credited and are the license and terms of use explicitly mentioned and properly respected?
    \item[] Answer: \answerYes{}
    \item[] Justification: All pretrained models and datasets are cited with their original papers in the bibliography. \textbf{Pretrained models (with license).} MOMENT~\citep{goswami2024moment}: MIT, \texttt{AutonLab/MOMENT-1-small} and \texttt{AutonLab/MOMENT-1-large} on Hugging Face. Moirai~\citep{woo2024moirai} and Moirai-MoE~\citep{liu2025moiraimoe}: model \emph{weights} are released under CC-BY-NC-4.0 (research/non-commercial use only) on \texttt{Salesforce/moirai-1.1-R-small} and \texttt{Salesforce/moirai-moe-1.0-R-small}; the supporting \texttt{uni2ts} code is Apache 2.0. Our use is academic research, consistent with the non-commercial weight license. Chronos~\citep{ansari2024chronos}: Apache 2.0, \texttt{amazon/chronos-t5-small}. Timer-XL~\citep{liu2025timerxl}: Apache 2.0, \texttt{thuml/timer-base-84m}. \textbf{Datasets.} ETT (ETTh1/h2/m1/m2) is introduced by~\citet{zhou2021informer} and released under CC-BY-ND-4.0 (NoDerivatives); we ship only loader scripts and do not redistribute preprocessed arrays. Weather is the MPI-BGC Jena meteorological recording, redistributed for the LTSF benchmark by Wu et al.~\citep{wu2021autoformer}. Electricity is the UCI \texttt{ElectricityLoadDiagrams20112014} dataset (CC-BY-4.0, originally collected by Artur Trindade). Traffic (Caltrans PeMS), Solar (NREL Solar Power Data), and Exchange (multi-currency exchange rates) were curated for time series benchmarking by Lai et al.~\citep{lai2018lstnet}; the source data is public-record, but the curated benchmark CSVs do not ship with an explicit redistribution license, so we again only ship loader scripts pointing to the upstream sources. The Monash time series forecasting archive~\citep{godahewa2021monash} (CC-BY-4.0) provides standardized versions of several of these datasets and is the recommended reproducibility anchor. No dataset is used outside the standardized LTSF forecasting and imputation tasks for which it was collected.

\item {\bf New assets}
    \item[] Question: Are new assets introduced in the paper well documented and is the documentation provided alongside the assets?
    \item[] Answer: \answerYes{}
    \item[] Justification: Code, a README with reproduction instructions, and raw JSON result files will be released under MIT license upon acceptance.

\item {\bf Crowdsourcing and research with human subjects}
    \item[] Question: For crowdsourcing experiments and research with human subjects, does the paper include the full text of instructions given to participants and screenshots, if applicable, as well as details about compensation (if any)?
    \item[] Answer: \answerNA{}
    \item[] Justification: The paper does not involve crowdsourcing or human subjects research.

\item {\bf Institutional review board (IRB) approvals or equivalent for research with human subjects}
    \item[] Question: Does the paper describe potential risks incurred by study participants, whether such risks were disclosed to the subjects, and whether Institutional Review Board (IRB) approvals (or an equivalent approval/review based on the requirements of your country or institution) were obtained?
    \item[] Answer: \answerNA{}
    \item[] Justification: The paper does not involve human subjects research.

\item {\bf Declaration of LLM usage}
    \item[] Question: Does the paper describe the usage of LLMs if it is an important, original, or non-standard component of the core methods in this research? Note that if the LLM is used only for writing, editing, or formatting purposes and does \emph{not} impact the core methodology, scientific rigor, or originality of the research, declaration is not required.
    \item[] Answer: \answerNA{}
    \item[] Justification: LLMs are not used as a component of the core methodology.

\end{enumerate}

\end{document}